\newtheorem{theorem}{Theorem}
\theoremstyle{plain}
\newtheorem{corollary}{Corollary}
\newtheorem{definition}{Definition}
\newtheorem{lemma}{Lemma}
\newtheorem{notation}{Notation}
\newtheorem{proposition}{Proposition}
\newtheorem{remark}{Remark}
\numberwithin{equation}{section}
\begin{document}
\title[$p$-Adic Numbers, Critically in DNNs]{Critical Organization of Deep Neural Networks, and $p$-Adic Statistical Field Theories}
\author{W. A. Z\'{u}\~{n}iga-Galindo}
\address{University of Texas Rio Grande Valley\\
School of Mathematical \& Statistical Sciences\\
One West University Blvd\\
Brownsville, TX 78520, United States}
\email{wilson.zunigagalindo@utrgv.edu}
\thanks{The author was partially supported by the Lokenath Debnath Endowed Professorship.}
\keywords{Deep neural networks, hierarchical organization, critical organization,
attractors, thermodynamic limit, statistical field theories, p-adic numbers, ultrametricity.}

\begin{abstract}
We rigorously study the thermodynamic limit of deep neural networks (DNNS) and
recurrent neural networks (RNNs), assuming that the activation functions are
Lipschitz and vanish at the origin. A thermodynamic limit is a continuous
neural network, where the neurons form a continuous space with infinitely many
points. We show that such a network admits a unique state in a certain region
of the parameter space, which depends continuously on the parameters. This
state breaks into an infinite number of states outside the mentioned region of
parameter space. Then, the critical organization is a bifurcation in the
parameter space, where a network transitions from a unique state to infinitely
many states. We use $p$-adic integers to codify hierarchical structures.
Indeed, we present an algorithm that recasts the hierarchical topologies used
in DNNs and RNNs as $p$-adic tree-like structures. In this framework, the
hierarchical and the critical organizations are connected. We study rigorously
the critical organization of a toy model, a hierarchical edge detector for
grayscale images based on $p$-adic cellular neural networks. The critical
organization of such a network can be described as a strange attractor. In the
second part, we study random versions of DNNs and RNNs. In this case, the
network parameters are generalized Gaussian random variables in a space of
quadratic integrable functions. We compute the probability distribution of the
output given the input, in the infinite-width case. We show that it admits a
power-type expansion, where the constant term is a Gaussian distribution.

\end{abstract}
\maketitle
\tableofcontents

\section{Introduction}

We begin by discussing the motivations behind this paper. There are two
central hypotheses about the brain's functioning. The first one, the brain,
has a hierarchical organization; see, e.g., \cite{Brain-orga-1}%
-\cite{Brain-orga-2}. The second is that it operates near a phase transition,
where a delicate balance between ordered (synchronized) and disordered
(random) states occurs; this is the so-called critical brain hypothesis. Since
deep neural networks (DNNs) are designed to mimic the brain's functioning, the
aforementioned hypotheses also apply to them; see, \ e.g., \cite{Criticall-0}%
-\cite{Criticall-3}.

Another motivation is the correspondence between neural networks (NNs) and
statistical field theories (SFTs). By treating the NN's parameters as random
variables, the network becomes a stochastic system, with the network's output
itself a random variable. The probability distribution of the output given an
input has a density of the form $\exp(-S\left(  \boldsymbol{h}\right)  )$.
Then $\int dh$ $\exp(-S\left(  \boldsymbol{h}\right)  )$ is the analogue of
the partition function of an SFT. In this way, we have a correspondence
between NNs and SFTs; see, e.g., \cite{Helias et al}-\cite{Zuniga-DBNs},\ and
the references therein.

This correspondence is a promising approach for understanding the dynamics of
deep neural networks (DNNs). Here, it's relevant to note that Cowan and Buice
developed such a correspondence for cortical neural networks at the
thermodynamic limit, i.e., for\ NNS with infinitely many neurons, \cite{Buice
and Cowan 1}-\cite{Buice and Cowan 2}. The correspondence between DNNs and
lattice SFTs (with a large finite number of neurons) has been studied
intensively; see, e.g., \cite{Roberts et al}-\cite{Schoenholz2017}. However,
standard principles of statistical mechanics suggest that the dynamics of DNNs
(for instance, phase transitions) require understanding the thermodynamic
limit of DNNs. This central problem is considered here.

We now discuss our contributions. To fix ideas, here we consider a simple DNN
of the form%
\begin{equation}
\boldsymbol{h}_{i}^{\left(  l\right)  }=%
{\displaystyle\sum\limits_{k=1}^{n_{l-1}}}
W_{i,k}^{\left(  l\right)  }\phi\left(  \boldsymbol{h}_{k}^{\left(
l-1\right)  }\right)  +\boldsymbol{\xi}_{i}^{\left(  l\right)  }, \label{Eq-A}%
\end{equation}
where $l=L+1,\ldots,L+\Delta$ denotes the layer, $\boldsymbol{h}^{\left(
l\right)  }=\left[  \boldsymbol{h}_{i}^{\left(  l\right)  }\right]  _{1\leq
i\leq n_{l}}\in\mathbb{R}^{n_{l}}$ is the hidden state of the layer $l$,
$\phi\left(  \boldsymbol{h}^{\left(  l-1\right)  }\right)  =\left[
\phi\left(  \boldsymbol{h}_{i}^{\left(  l-1\right)  }\right)  \right]  _{1\leq
i\leq n_{l-1}}\in\mathbb{R}^{n_{l-1}}$, and the biases are $\boldsymbol{W}%
^{\left(  l\right)  }=\left[  W_{i,k}^{\left(  l\right)  }\right]
_{n_{l}\times n_{l-1}}$, and $\boldsymbol{\xi}^{\left(  l\right)  }=\left[
\boldsymbol{\xi}_{i}^{\left(  l\right)  }\right]  _{1\leq i\leq n_{l}}%
\in\mathbb{R}^{n_{l}}$. For the sake of simplicity, we assume that
$\boldsymbol{h}^{\left(  L\right)  }$ is the input, and the output is computed
from $\boldsymbol{h}^{\left(  L+\Delta\right)  }$. The activation function
$\phi:\mathbb{R}\rightarrow\mathbb{R}$ is Lipschitz satisfying $\phi(0)=0$.
Examples of such functions are $\tanh\left(  s\right)  $, \textrm{ReLu}$(x)$.

We propose the ansatz that at the limit when the number of neurons at all
layers tends to infinity, (\ref{Eq-A}) becomes%
\begin{equation}
\boldsymbol{h}\left(  x\right)  =%
{\displaystyle\int\limits_{\Omega}}
\boldsymbol{W}(x,y)\phi\left(  \boldsymbol{h}\left(  y\right)  \right)
d\mu\left(  y\right)  +\boldsymbol{\xi}\left(  x\right)  , \label{Eq-B}%
\end{equation}
where $\Omega$ is the space of neurons (infinite topological space with a
measure $d\mu$). We assume that $\boldsymbol{W}(x,y)\in L^{2}\left(
\Omega\times\Omega\right)  $, $\boldsymbol{\xi}\left(  x\right)  \in
L^{2}\left(  \Omega\right)  $; this last condition really happens in relevant
cases, as we discuss later. Under the condition, $L_{\phi}\left\Vert
\boldsymbol{W}\right\Vert _{2}\in\left(  0,1\right)  $, where $L_{\phi}$ is
the Lipschitz constant of $\phi$, the Banach fixed point theorem on
$L^{2}\left(  \Omega\right)  $ implies the existence of a unique state
$\boldsymbol{h}\left(  x\right)  \in L^{2}\left(  \Omega\right)  $, which is
continuous with respect to the parameters. A bifurcation occurs when $L_{\phi
}\left\Vert \boldsymbol{W}\right\Vert _{2}>1$. These results agree with the
critical organization hypothesis for DNNs. In the cases $\Omega=\mathbb{R}$,
and $\left[  a,b\right]  $, there are well-known numerical methods to compute
the unique mentioned state. This numerical scheme gives a discrete NN whose
neurons are not hierarchically organized; see Section \ref{Section_13}. This
fact shows the difficulty of constructing the thermodynamic limit for DNNs of
the form (\ref{Eq-A}).

An non-Archimedean vector space $\left(  M,\left\Vert \cdot\right\Vert
\right)  $ is a normed vector space whose norm satisfies $\left\Vert
x+y\right\Vert \leq\max\left\{  \left\Vert x\right\Vert ,\left\Vert
y\right\Vert \right\}  $, for any two vectors $x$, $y$ in $M$. This type of
space plays a central role in formulating models of complex multi-level
systems; in this type of system, the hierarchy plays a significant role; see,
e.g., \cite{Iordache}-\cite{Khrennikov}, and the references therein. These
systems are composed of several subsystems and exhibit emergent behavior
resulting from nonlinear interactions across multiple levels of organization.
The field of $p$-adic numbers $\mathbb{Q}_{p}$ is a paramount example of a
non-Archimedean vector space. Hierarchy plays a central role in the
organization and function of several types of NNs. The author and his
collaborators have established that $p$-adic analysis can be used to study
various kinds of artificial and biological hierarchical networks; see
\cite{JPhysA-2025}-\cite{Zuniga-DBNs}, and \cite{Zuniga-2}%
-\cite{Zuniga-Entropy}. The main goal of this paper is to extend this theory
to DNNs studied in \cite{Roberts et al}-\cite{Segadlo et al}, and to study the
thermodynamic limit of these models.

We fix a prime number $p$. Any non-zero $p-$adic integer $x$ has a unique
expansion of the form%
\begin{equation}
x=p^{\gamma}%
{\displaystyle\sum\limits_{k=0}^{\infty}}
x_{k}p^{k},\text{ }\gamma\text{ is a non-negative integer, }x_{k}\in\left\{
0,\ldots,p-1\right\}  \text{, and }x_{0}\neq0. \label{p-adic-integer}%
\end{equation}
The set of all possible sequences of the form (\ref{p-adic-integer}) is called
the ring of $p$-adic integers, and is denoted as $\mathbb{Z}_{p}$.
Geometrically, $\mathbb{Z}_{p}$ is a fractal tree-like structure; in a
simplified form, it is an infinite rooted tree. By truncating the series
(\ref{p-adic-integer}) the power $p^{l}$, where $l\geq0$, we obtained a finite
rooted tree $G_{l}$ with $l$ layers. Then $\mathbb{Z}_{p}$ is the limit of
$G_{l}$ as $l\rightarrow\infty$.

To construct the $p$-adic analogue of DNN (\ref{Eq-A}), we give two
real-valued functions (the biases): $\boldsymbol{W}^{\left(  L+\Delta\right)
}:G_{L+\Delta}\times G_{L+\Delta}\rightarrow\mathbb{R}$ and $\boldsymbol{\xi
}^{\left(  L+\Delta\right)  }:G_{L+\Delta}\rightarrow\mathbb{R}$. These are
functions on a tree-like structure. The $p$-adic analogue of (\ref{Eq-A}) is
\begin{align}
\boldsymbol{h}^{\left(  L+\Delta\right)  }\left(  I\right)   &  =%
{\displaystyle\sum\limits_{K\in G_{L+\Delta}}}
p^{-L-\Delta}\boldsymbol{W}^{\left(  L+\Delta\right)  }\left(  I,K\right)
\phi\left(  \boldsymbol{h}^{\left(  L+\Delta-1\right)  }\left(  \Lambda
_{L+\Delta,L+\Delta-1}\left(  K\right)  \right)  \right) \label{Eq-AA}\\
&  +\boldsymbol{\xi}^{\left(  L+\Delta\right)  }\left(  I\right)  \text{,
\ }I\in G_{L+\Delta}\text{,}\nonumber
\end{align}
where $\Lambda_{l+1,l}:G_{l+1}\rightarrow G_{l}$ is the mapping $x_{0}%
+x_{1}p+\ldots+x_{l}p^{l}\rightarrow x_{0}+x_{1}p+\ldots+x_{l-1}p^{l-1}$. The
formulation (\ref{Eq-A}) introduces a hierarchical structure and define the
biases as functions on such a structure. It requires $\Delta$ equations, one
equation for each layer of the network. This is not necessary in the $p$-adic
formulation. The neurons $K\in G_{L+\Delta}$ are naturally organized in a
tree-like structure, and the function $\boldsymbol{W}^{\left(  L+\Delta
\right)  }\left(  I,K\right)  $ describes the interaction of the neurons
between different layers.

In Section \ref{Section_6}, we give an algorithm that allows us to recast DNN
(\ref{Eq-A}) as a $p$-adic discrete DNN\ (\ref{Eq-AA}). More generally, the
DNNs considered in \cite{Roberts et al}-\cite{Segadlo et al} can be recast as
$p$-adic discrete DNNs. Then, the study of DNNs is reduced to the study of the
$p$-adic DNNs. We follow a top-down approach. In Definition
\ref{Definition_DNN}, we introduce the $p$-adic continuous DNNs, which are
generalizations of (\ref{Eq-B}). These DNNs are\ thermodynamic limits of
certain discrete DNNs. By applying the Banach fixed-point theorem in
$L^{2}(\mathbb{Z}_{p})$, we show that DNNs of type (\ref{Eq-B}) have a unique
hidden state $\boldsymbol{h}$ in a certain region of parameter space, see
Proposition \ref{Prop-1}. The discretization of DNNs of type (\ref{Eq-B}) is a
numerical scheme (the Picard method) to approximate $\boldsymbol{h}$, see
Definition \ref{Definition-3} and Lemma \ref{Lemma-6}. These facts are
rigorously established in Theorem \ref{Theorem_A}.

As a consequence of Theorem \ref{Theorem_A} and the discussion following it,
DNNs of type (\ref{Eq-B}) admit a unique state, in a certain region of the
space of parameters, depending continuously on the parameters. This state
breaks into an infinite number of states outside the mentioned region of
parameter space. Then, the critical organization is a bifurcation in the
parameter space, where a DNN transitions from a unique state to infinitely
many states. The study of the geometry of state space is an open problem; we
present an infinite set of states, mostly constant states unaffected by the
network parameters. The study of the critical organization as a
phase-transition phenomenon remains an open problem.

In Section \ref{Section_8}, we discuss the critical organization in a toy
model. In \cite{Zambrano-Zuniga-2}, Zambrano-Luna and the author introduced a
new class of edge detectors for grayscale images based on $p$-adic cellular
neural networks, \cite{Zuniga-2}-\cite{Zambrano-Zuniga-1}. The stationary
states of this network form a DNN with hidden states controlled by
$\boldsymbol{h}(z)=a\phi(\boldsymbol{h}(z))+(\boldsymbol{W}_{\text{in}}%
\ast\boldsymbol{x})(z)+\boldsymbol{\xi}(z)$, with $z\in\mathbb{Z}_{p},$ where
the parameters are $a$, $\boldsymbol{W}_{\text{in}}$, $\boldsymbol{\xi}$, here
$a$ is a positive number, $\boldsymbol{x}$ is the input; the output is
$\boldsymbol{y}(z)=\phi(\boldsymbol{h}(z))$. Using the results established in
\cite{Zambrano-Zuniga-2}, we have that if $a\in\left(  0,1\right)  $, the
network has a unique hidden state $\boldsymbol{h}$ depending continuously on
the parameters; if $a=1$ the network has a unique state; for $a>1$, there are
infinitely many states, which are organized in a hierarchical structure (a
lattice, i.e., a partially ordered set). Based on this toy model, we propose
that the critical organization comes from a strange attractor in the $p$-adic
framework. The idea that a strange attractor can describe brain organization
is well known; it also extends to artificial DNNs; see, e.g., \cite{Brain-1}%
-\cite{Villegas}. This work shows that the hierarchical and critical
structures of DNNs are deeply connected. Furthermore, this connection is
visible at the thermodynamic limit, whose description requires $p$-adic analysis.

We now address the "lack of experimental results" in this paper. Our results
strongly suggest that DNNs of type (\ref{Eq-A}) \ do not admit thermodynamic
limits (continuous versions) from which (\ref{Eq-A}) \ can be recovered by a
discretization scheme. From this perspective, conducting numerical simulations
with large DNNs of type (\ref{Eq-A}) \ to discover their critical organization
does not make sense. For this reason, here we consider, as a toy model of
critical organization, the edge detector introduced in
\cite{Zambrano-Zuniga-2}, which is a non-trivial example of a continuous DNN
with a hierarchical architecture. The author \ conjectures that the existence
of \ thermodynamic limits for DNNs of type (\ref{Eq-A}) requires an extra
condition about the scaling of the topology of the network when the number of
neurons tends to infinity.

In Sections \ref{Section_9}-\ref{Section_11}, we apply the developed framework
to the study random $p$-adic continuous DNNs. In this setting, the network
parameters are assumed to be realizations of generalized Gaussian random
variables with mean zero. For the network (\ref{Eq-B}), this hypothesis
implies that $\boldsymbol{W}$, $\boldsymbol{\xi}$ are $L^{2}$ functions. We
use the theory of Gaussian measures on Hilbert spaces; see, e.g., \cite{Da
prato}-\cite{Obata}. The central result of this theory establishes a
correspondence between the Fourier transforms of probability measures and
symmetric, positive, trace-class operators (covariance operators). We assume
that the covariances are integral operators. This hypothesis is widely used in
quantum field theory; see, e.g., \cite[Section 5.2]{Zinn-Justin}. We give
explicit formulas for the probability distributions $p(\boldsymbol{y}%
\left\vert \boldsymbol{x},\boldsymbol{\theta}\right.  )$ and $p(\boldsymbol{y}%
\left\vert \boldsymbol{x},\boldsymbol{\theta}\right.  )$, where
$\boldsymbol{y}$ is the output, $\boldsymbol{x}$ is the input, and
$\boldsymbol{\theta}$ are the network parameters. See Proposition \ref{Prop-2}
and Theorem \ref{Theorem_B}. Using these results, we obtain a power-type
expansion to approximate $p(\boldsymbol{y}\left\vert \boldsymbol{x}\right.
)$. The constant term of the expansion is a Gaussian distribution.

In Section \ref{Section_13}, we formally study a correspondence between DNNs
and SFTs. The probability $p(\boldsymbol{y}\left\vert \boldsymbol{x}%
,\boldsymbol{\theta}\right.  )$ is proportional to $\exp\left\{
-S(\boldsymbol{y},\widetilde{\boldsymbol{y}},\boldsymbol{h},\widetilde
{\boldsymbol{h}};\boldsymbol{x},\boldsymbol{\theta})\right\}  $, where
functional $S(\boldsymbol{y},\widetilde{\boldsymbol{y}},\boldsymbol{h}%
,\widetilde{\boldsymbol{h}};\boldsymbol{x},\boldsymbol{\theta})$ is the action
(or energy functional) codifying the dynamics of a DNN. This action makes
sense when the fields $\boldsymbol{y},\widetilde{\boldsymbol{y}}%
,\boldsymbol{h},\widetilde{\boldsymbol{h}}$, and $\boldsymbol{x}$, and the
parameters $\boldsymbol{\theta}$ are functions from $L^{2}\left(
\Omega\right)  $. The formal probability measure
\[
d\boldsymbol{h}d\widetilde{\boldsymbol{h}}d\widetilde{\boldsymbol{y}}%
\frac{\exp\left\{  -S(\boldsymbol{y},\widetilde{\boldsymbol{y}},\boldsymbol{h}%
,\widetilde{\boldsymbol{h}};\boldsymbol{x},\boldsymbol{\theta})\right\}
}{\mathcal{Z}\left(  \boldsymbol{y},\boldsymbol{x},\boldsymbol{\theta}\right)
}%
\]
on $\left[  L^{2}\left(  \Omega\right)  \right]  ^{3}$ is a SFT corresponding
to a DNN with action $S(\boldsymbol{y},\widetilde{\boldsymbol{y}%
},\boldsymbol{h},\widetilde{\boldsymbol{h}};\boldsymbol{x},\boldsymbol{\theta
})$, where $\mathcal{Z}\left(  \boldsymbol{y},\boldsymbol{x}%
,\boldsymbol{\theta}\right)  $ is the partition function of the network. All
our results extend (formally) to the partition function $\mathcal{Z}\left(
\boldsymbol{y},\boldsymbol{x},\boldsymbol{\theta}\right)  $. In particular,
all these (continuous) networks have a similar critical organization. It is
relevant to mention here that the partition function $\mathcal{Z}\left(
\boldsymbol{y},\boldsymbol{x},\boldsymbol{\theta}\right)  $ of an abstract DNN
can be also obtained by using the Martin--Siggia--Rose--de Dominicis--Janssen
path integral formalism \cite{Martin et al}-\cite{de Domicis et al},
\cite{Chow et al}, \cite[Chapter 10]{Helias et al}. The probability
distribution $p(\boldsymbol{y}\left\vert \boldsymbol{x},\boldsymbol{\theta
}\right.  )$ is not a partition function of type $\mathcal{Z}\left(
\boldsymbol{y},\boldsymbol{x},\boldsymbol{\theta}\right)  $. These partition
functions correspond to other possible thermodynamic limits of DNNs of type
(\ref{Eq-A}).

\section{Related work}

\subsection{$p$-adic numbers, complex systems, and NNs}

Ultrametricity in physics refers to the discovery of complex systems whose
state space has a hierarchical structure, with states naturally organized in a
tree-like manner. This fact naturally leads to the use of ultrametric spaces
in models of complex systems. Ultrametricity was discovered in the 1980s, in
the context of field theory of spin glasses, by G. Parisi, and in the context
of protein folding by H. Frauenfelder; see, e.g., \cite{Spin-Glasses},
\cite[Chapter 4]{KKZuniga}, and the references therein. Ultrametricity also
appears in protein folding, optimization problems, and artificial neural
networks. The connection between spin glasses and neural networks was also
posited by J. Hopfield in the 1980s; he proposed considering "learning" and
"memory" as physical processes within an energy landscape; see, e.g.,
\cite{Perepelkin et al}-\cite{Dotsenko}, and the references therein.

The field of $p$-adic numbers is a paramount example of an ultrametric space.
$p$-Adic numbers have a very rich mathematical structure; for this reason,
they are typically chosen to formulate a large class of hierarchical models.
There are two different types of $p$-adic NNs. In the first type, the states
are real valued functions ($\boldsymbol{h}:\mathbb{Q}_{p}\rightarrow
\mathbb{R}$), while in the second type, the states a $p$-adic valued functions
($\boldsymbol{h}:\mathbb{Q}_{p}\rightarrow\mathbb{Q}_{p}$). In both cases, the
$p$-adic numbers are used to codify hierarchical structures, but the
mathematics required to study these two NNs is radically different. For
instance, to study NNs of the first type, we can use standard probability
theory, white noise calculus, and abstract theory of PDEs, but these tools are
not available in the framework of the second type of NNs. A central result
established here is that the standard DNNs can be recast as $p$-adic NNs of
the first type.

NNs of the second type are particular cases of $p$-adic dynamical systems,
\cite{Khrennikov-Nilson}-\cite{Khrennikov-Tirozzi}. Other geometric objects,
like Lie groups, can be used as topologies for NNs, \cite{Lie-Groups}. Our
work on NNs of the first type is motivated by the idea that $p$-adic
numbers\ naturally codify hierarchical topologies appearing in NNs. In this
direction, Zambrano-Luna and the author introduced the $p$-adic cellular
neural networks (CNNs), which are mathematical generalizations of the
classical cellular neural networks introduced by Chua and Yang in the 1980s,
\cite{Zuniga-2}, \cite{Zambrano-Zuniga-1}-\cite{Zambrano-Zuniga-2},
\cite{Chua-Tamas}-\cite{Slavova}. The Wilson-Cowan model is fundamental in
theoretical neuroscience; it consists of two coupled integro-differential
equations that describe the firing rates of two populations of neurons. In
\cite{Zuniga-Entropy}, Zambrano-Luna and the author introduced a hierarchical
($p$-adic) version of the Wilson-Cowan model; this model is consistent with
the small-world property, while the classical one is not compatible with this property.

The author and his collaborators have studied the use of $p$-adic analysis to
model hierarchical neural networks, and in particular, the author argues that
$p$-adic numbers codify the deep learning architectures, and that a $p$-adic
theory for DNNs that encompasses works like \cite{Roberts et al}-\cite{Segadlo
et al} is possible. In this paper, we give the \textquotedblleft first
step\textquotedblright\ toward the construction of the mentioned theory. The
construction and study of NNs with a $p$-adic topology are natural tasks. The
difficulty is determining whether this construction includes the standard
models of hierarchical NNs. Here, under the assumption that the activation
functions are Lipschitz and vanish at the origin, we answer the question affirmatively.

\subsection{The correspondence between NNs and SFTs}

The correspondence between statistical filed theories (SFTs) (or Euclidean
quantum field theories) and Neural networks (NN) postulates that a
mathematical realization of an ensemble of neural networks (at initialization
or under certain limits) behaves identically to a statistical field theory,
\cite{Helias et al}-\cite{Schoenholz2017}, and the references therein. In
\cite{Zuniga et al}-\cite{Zuniga-DBNs}, the author and his collaborators
developed the theory of $p$-adic Boltzmann machines. This theoretical
framework connects neural network architectures with $p$-adic statistical
field theory. The final goal is to use $p$-adic analysis to create a rigorous
approach to deep learning, specifically focusing on deep Boltzmann machines
and deep belief networks. As we already mentioned, the goal of this paper is
to extend these results to the DNNs considered in \cite{Roberts et
al}-\cite{Schoenholz2017}. In these works, DNNs are modeled \ as the
composition of a finite number of affine mappings define on an Euclidean
space. Each affine mapping describes a layer of the network.

The analysis of network dynamics depends entirely on inductive computation of
correlation functions, which makes it very difficult. Formal calculations are
manageable when the number of neutrons per layer is large, but the rigorous
thermodynamic limit is beyond this formulation; see \cite[pp 93-95]{Roberts et
al}. The criticality of the networks is a consequence of the explosion of some
correlation functions. In our analysis, we do not use the method of steepest
descent; the rigorous application of this method requires showing that the
Hessian matrix has a nonvanishing determinant in the limit as the number of
neurons per layer tends to infinity. This calculation is complex, and it is
not included in the standard literature, see e.g. \cite{Helias et al}, and the
references therein. In \cite{JPhysA-2025}, we developed an alternative,
rigorous approach based on the Fourier transform of probability measure on
Hilbert spaces, or in the Bochner-Minlos theorem. These ideas are used in this
paper. In  \cite{Parhi et al}, the authors studied the connections between
random ReLU Neural Networks and non-Gaussian processes; it would be
interesting to develop the $p$-adic counterpart of this work.

\subsection{Critical organization of DNNs and strange attractors}

A fundamental problem in complex networks is to find low-dimensional spaces in
which networks can be embedded to reveal their underlying network patterns
\cite{Villegas}; see also \cite{Complex1}-\cite{Complex2} and the references
therein. According to \cite{Villegas}, the mentioned patterns are related to
non-Euclidean architectures with hidden symmetries that emerge during the
transition of NNs from chaos to order. In the paper, the author studies
embeddings constructed using the eigenfunctions of the discrete network
Laplacians. Remarkably, this picture matches the results presented here
exactly. In \cite{Laplacian1}, see also \cite{Laplacian2}, the author
established that the discrete network Laplacians are $p$-adic operators. The
construction uses\ a parametrization of the nodes of the network by $p$-adic
integers, more precisely, by using a mapping \textrm{Vertices}$\rightarrow
G_{l}$, for some positive integer $l$, and then by attaching a finite space of
functions to $G_{l}$, which serves as the space of continuous functions of the
network. This is the basis of the algorithm given in Section \ref{Section_6}.
Now, the non-Euclidean architectures mentioned in \cite{Villegas} agree with
the hierarchical organization of the state space given in the toy model
studied in Section \ref{Section_8}. On the other hand, despite the
aforementioned similarities,  \cite{Villegas} is a paper focused on
computational experiments, whereas ours focuses on developing rigorous
mathematical models.

\section{$p$-adic numbers and tree-like structures}

\subsection{The field of $p$-adic numbers}

In this section, we collect some basic results on $p$-adic analysis that we
use throughout the article. For a detailed exposition, the reader may consult
\cite{Zuniga-Textbook}-\cite{Taibleson}.

From now on, we use $p$ to denote a fixed prime number. Any non-zero $p-$adic
number $x$ has a unique expansion of the form%
\begin{equation}
x=x_{-k}p^{-k}+x_{-k+1}p^{-k+1}+\ldots+x_{0}+x_{1}p+\ldots,\text{ }
\label{p-adic-number}%
\end{equation}
with $x_{-k}\neq0$, where $k$ is an integer, and the $x_{j}$s\ are numbers
from the set $\left\{  0,1,\ldots,p-1\right\}  $. The set of all possible
sequences of the form (\ref{p-adic-number}) constitutes the field of $p$-adic
numbers $\mathbb{Q}_{p}$. There are natural field operations, sum and
multiplication, on a series of the form (\ref{p-adic-number}). There is also a
norm in $\mathbb{Q}_{p}$ defined as $\left\vert x\right\vert _{p}=p^{-ord(x)}%
$, where $ord_{p}(x)=ord(x)=k$, for a nonzero $p$-adic number $x$. By
definition $ord(0)=\infty$. The field of $p$-adic numbers with the distance
induced by $\left\vert \cdot\right\vert _{p}$ is a complete ultrametric space.
The ultrametric property refers to the fact that $\left\vert x-y\right\vert
_{p}\leq\max\left\{  \left\vert x-z\right\vert _{p},\left\vert z-y\right\vert
_{p}\right\}  $ for any $x$, $y$, $z$ in $\mathbb{Q}_{p}$. The $p$-adic
integers are sequences of the form (\ref{p-adic-number}) with $-k\geq0$. All
these sequences constitute the unit ball $\mathbb{Z}_{p}$. The unit ball is an
infinite rooted tree with fractal structure. As a topological space
$\mathbb{Q}_{p}$\ is homeomorphic to a Cantor-like subset of the real line,
see, e.g., \cite{A-K-S}-\cite{V-V-Z}.

For $r\in\mathbb{Z}$, denote by $B_{r}(a)=\{x\in\mathbb{Q}_{p};|x-a|_{p}\leq
p^{r}\}$ the ball of radius $p^{r}$ with center at $a\in\mathbb{Q}_{p}$. Two
balls in $\mathbb{Q}_{p}$ are either disjoint or one is contained in the
other. A subset of $\mathbb{Q}_{p}$ is compact if and only if it is closed and
bounded in $\mathbb{Q}_{p}$, see e.g., \cite[Section 1.8]{A-K-S} or
\cite[Section 1.3]{V-V-Z}. Since $(\mathbb{Q}_{p},+)$ is a locally compact
topological group, there exists a Haar measure $dx$, which is invariant under
translations, i.e., $d(x+a)=dx$. If we normalize this measure by the condition
$\int_{\mathbb{Z}_{p}}dx=1$, then $dx$ is unique.

We will use $\Omega\left(  p^{-r}|x-a|_{p}\right)  $ to denote the
characteristic function of the ball $B_{r}(a)$. A function $\varphi
:\mathbb{Q}_{p}\rightarrow\mathbb{R}$ is called a test function (or a
Bruhat-Schwartz) if it is a linear combination of characteristic functions of
balls. In this paper, we work exclusively with functions supported in the unit
ball, i.e., with functions of type $\varphi:\mathbb{Z}_{p}\rightarrow
\mathbb{R}$. We denote by $\mathcal{D}(\mathbb{Z}_{p})$ the real vector space
of test functions supported in $\mathbb{Z}_{p}$.

Given a ball $I+p^{l}\mathbb{Z}_{p}\subset\mathbb{Z}_{p}$, the formula%
\[%
{\displaystyle\int\limits_{\mathbb{Z}_{p}}}
\Omega\left(  p^{l}\left\vert x-I\right\vert _{p}\right)  dx=%
{\displaystyle\int\limits_{I+p^{l}\mathbb{Z}_{p}}}
dx=p^{-l}%
\]
will be used extensively throughout the paper.

The space $\mathcal{D}(\mathbb{Z}_{p})$ is dense in
\[
L^{\rho}\left(  \mathbb{Z}_{p}\right)  =\left\{  \varphi:\mathbb{Z}%
_{p}\rightarrow\mathbb{R};\left\Vert \varphi\right\Vert _{\rho}=\left\{
{\displaystyle\int\limits_{\mathbb{Z}_{p}}}
\left\vert \varphi\left(  x\right)  \right\vert ^{\rho}d^{N}x\right\}
^{\frac{1}{\rho}}<\infty\right\}  ,
\]
for $1\leq\rho<\infty$, see, e.g., \cite[Section 4.3]{A-K-S}. We need only the
spaces $L^{2}\left(  \mathbb{Z}_{p}\right)  \subset L^{1}\left(
\mathbb{Z}_{p}\right)  $.

\subsection{Tree-like structures}

For an integer $l\geq1$, we set $G_{l}:=\mathbb{Z}_{p}/p^{l}\mathbb{Z}_{p}$,
which is a finite additive group isomorphic to the ring of integers modulo
$p^{l}$ (i.e., $\mathbb{Z}/p^{l}\mathbb{Z)}$. We use the following system of
representatives for the elements of $G_{l}$:%
\[
I=I_{0}+I_{1}p+\ldots+I_{l-1}p^{l-1},\text{ }%
\]
where the $I_{j}\in\left\{  0,1,\ldots,p-1\right\}  $\ are $p$-adic digits.
The points of $G_{l}$ are naturally organized in a finite rooted tree with $l$
levels. By definition, $G_{0}=\left\{  0\right\}  $, where $0\in\mathbb{Z}$.

For a non-negative integer $m$, and a positive integer $l$, we denote by
$\Lambda_{m}$ the group homomorphism%
\[%
\begin{array}
[c]{cccc}%
\Lambda_{l,m}: & G_{l} & \rightarrow & G_{m}\\
&  &  & \\
& I & \rightarrow & I\text{ }\operatorname{mod}p^{m}\mathbb{Z}_{p},
\end{array}
\]
where%
\[
I\text{ }\operatorname{mod}p^{m}\mathbb{Z}_{p}=\left\{
\begin{array}
[c]{lll}%
I_{0}+I_{1}p+\ldots+I_{m-1}p^{m-1} & \text{if} & l>m\geq1\\
&  & \\
I & \text{if} & m\geq l\geq1\\
&  & \\
0 & \text{if} & m=0.
\end{array}
\right.
\]
On the other hand, for each $z\in\left\{  0,1,\ldots,p-1\right\}  $, the map%
\[%
\begin{array}
[c]{lll}%
G_{l} & \rightarrow & G_{l+1}\\
a_{0}+\ldots+a_{l-1}p^{l-1} & \rightarrow & a_{0}+\ldots+a_{l-1}p^{l-1}+zp^{l}%
\end{array}
\]
identifies the rooted tree $G_{l}$ with a sub-tree from $G_{l+1}$. This map is
not a group homomorphism. This identification shows that $G_{l+1}$ is a
tree-like structure, i.e., a rooted tree made of trees. These $p$-adic
tree-like structures have been used to construct hierarchical versions of deep
neural Boltzmann networks, \cite{Zuniga et al}-\cite{Zuniga-DBNs}, and
hierarchical versions of cellular neural networks, \cite{Zambrano-Zuniga-1}%
-\cite{Zambrano-Zuniga-2}.

\subsection{The space of test functions}

For $I\in G_{l}$, we denote by $\Omega\left(  p^{l}\left\vert x-I\right\vert
_{p}\right)  $ the characteristic function of the ball $I+p^{l}\mathbb{Z}_{p}%
$. A function $\alpha:\mathbb{Z}_{p}\rightarrow\mathbb{R}$ is called a test
function if it is a linear combination of characteristic functions of balls,
i.e.,%
\begin{equation}
\alpha(x)=\sum\limits_{i=1}^{M}\mathbb{\alpha}\left(  J_{i}\right)
\Omega\left(  p^{l_{i}}\left\vert x-J_{i}\right\vert _{p}\right)  ,
\label{Test-Function}%
\end{equation}
where the $l_{i}$ are non-negative integers and $J_{i}\in G_{l_{i}}$.

Any ball $I+p^{f}\mathbb{Z}_{p}$, with $f>0$, can be decomposed as a disjoint
union of balls of radii $p^{-m}<p^{-f}$ balls (notice that $m>f$), i.e.,
\begin{equation}
\Omega\left(  p^{f}\left\vert x-I\right\vert _{p}\right)  =\sum
\limits_{\substack{K\in G_{m}\\\Lambda_{m,f}\left(  K\right)  =I}%
}\Omega\left(  p^{m}\left\vert x-K\right\vert _{p}\right)  .
\label{Identity-Basic}%
\end{equation}
Using this identity, and taking $l=\max_{1\leq i\leq M}\left\{  l_{i}\right\}
$, (\ref{Test-Function}) can be rewritten as%
\[
\alpha\left(  x\right)  =\sum\limits_{I\in G_{l}}\alpha\left(  I\right)
\Omega\left(  p^{l}\left\vert x-I\right\vert _{p}\right)  ,\ \alpha\left(
I\right)  \in\mathbb{R}.
\]
The functions $\left\{  \Omega\left(  p^{l}\left\vert x-I\right\vert
_{p}\right)  \right\}  _{I\in G_{l}}$ are linearly independent because their
supports are disjoint. We define $\mathcal{D}^{l}(\mathbb{Z}_{p})$ to be the
$\mathbb{R}$-vector space spanned by the basis $\left\{  \Omega\left(
p^{l}\left\vert x-I\right\vert _{p}\right)  \right\}  _{I\in G_{l}}$.

As a consequence of the above discussion, the space of test functions
$\mathcal{D}(\mathbb{Z}_{p})$ satisfies
\[
\mathcal{D}(\mathbb{Z}_{p})=%
{\displaystyle\bigcup\limits_{l=0}^{\infty}}
\mathcal{D}^{l}(\mathbb{Z}_{p}),
\]
where $\mathcal{D}^{0}(\mathbb{Z}_{p})=\mathbb{R}\Omega\left(  \left\vert
x\right\vert _{p}\right)  $. There is a natural topology on $\mathcal{D}%
(\mathbb{Z}_{p})$ so that the mapping $\mathcal{D}^{l}(\mathbb{Z}%
_{p})\rightarrow\mathcal{D}^{l+1}(\mathbb{Z}_{p})$ is continuous.

We say that a function $\beta:\mathbb{Z}_{p}\times\mathbb{Z}_{p}%
\rightarrow\mathbb{R}$ is a test function if it is a linear combination of
functions of type $\Omega\left(  p^{l}\left\vert x-I\right\vert _{p}\right)
\Omega\left(  p^{l}\left\vert y-J\right\vert _{p}\right)  $, then by the above
considerations,%
\[
\beta\left(  x,y\right)  =\sum\limits_{I\in G_{l}}\sum\limits_{J\in G_{l}%
}\alpha\left(  I,J\right)  \Omega\left(  p^{l}\left\vert x-I\right\vert
_{p}\right)  \Omega\left(  p^{l}\left\vert y-J\right\vert _{p}\right)  \text{,
\ }\alpha\left(  I,J\right)  \in\mathbb{R}.
\]
These functions form an $\mathbb{R}$-vector space, denoted as $\mathcal{D}%
^{l}(\mathbb{Z}_{p}\times\mathbb{Z}_{p})$, with basis
\[
\left\{  \Omega\left(  p^{l}\left\vert x-I\right\vert _{p}\right)
\Omega\left(  p^{l}\left\vert y-J\right\vert _{p}\right)  \right\}  _{I,J\in
G_{l}}.
\]
The space of test functions $\mathcal{D}(\mathbb{Z}_{p}\times\mathbb{Z}_{p})$
on $\mathbb{Z}_{p}\times\mathbb{Z}_{p}$ satisfies%
\[
\mathcal{D}(\mathbb{Z}_{p}\times\mathbb{Z})=%
{\displaystyle\bigcup\limits_{l=0}^{\infty}}
\mathcal{D}^{l}(\mathbb{Z}_{p}\times\mathbb{Z}_{p}),
\]
where $\mathcal{D}^{0}(\mathbb{Z}_{p}\times\mathbb{Z})=\mathbb{R}\Omega\left(
\left\vert x\right\vert _{p}\right)  \Omega\left(  \left\vert y\right\vert
_{p}\right)  $. For an in-depth discussion, the readers may consult
\cite{Zuniga-Textbook}- \cite{V-V-Z}.

\begin{lemma}
\label{Lemma-1A}For $l\geq1$, take $\boldsymbol{W}\left(  x,y\right)
\in\mathcal{D}^{l}(\mathbb{Z}_{p}\times\mathbb{Z}_{p})$, and $\boldsymbol{h}%
\left(  y\right)  $ in $\mathcal{D}(\mathbb{Z}_{p})$ of the form%
\begin{equation}
\boldsymbol{h}(y)=\sum\limits_{i=1}^{M}\boldsymbol{h}\left(  J_{i}\right)
\Omega\left(  p^{l_{i}}\left\vert y-J_{i}\right\vert _{p}\right)  ,
\label{Pre-activation}%
\end{equation}
with $l>l_{i}$, and $J_{i}\in G_{l_{i}}$, for $i=1,\ldots,M$, and where the
balls%
\begin{equation}
B_{-l_{i}}(J_{i})=J_{i}+p^{l_{i}}\mathbb{Z}_{p}\text{, }i=1,\ldots,M,
\label{Disjoint-balls}%
\end{equation}
are pairwise disjoint. Then
\begin{equation}
\mathbf{Z}(x):=\int\limits_{\mathbb{Z}_{p}}\boldsymbol{W}\left(  x,y\right)
\phi\left(  \boldsymbol{h}(y)\right)  dy=\sum\limits_{I\in G_{l}%
}\boldsymbol{Z}\left(  I\right)  \Omega\left(  p^{l}\left\vert x-I\right\vert
_{p}\right)  \in\mathcal{D}^{l}(\mathbb{Z}_{p}), \label{Formula_AA}%
\end{equation}
where%
\begin{equation}
\mathbf{Z}(I)=\sum\limits_{i=1}^{M}\sum\limits_{\substack{K\in G_{l}%
\\\Lambda_{l,l_{i}}\left(  K\right)  =J_{i}}}p^{-l}\boldsymbol{W}\left(
I,K\right)  \phi\left(  \boldsymbol{h}\left(  \Lambda_{l,l_{i}}\left(
K\right)  \right)  \right)  ,\text{ for }I\in G_{l}\text{.} \label{Formula_B}%
\end{equation}

\end{lemma}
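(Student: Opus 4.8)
The plan is to reduce the whole computation to the basic integral $\int_{\mathbb{Z}_p}\Omega(p^{l}|x-I|_p)\,dx=p^{-l}$ together with the dichotomy that two $p$-adic balls are either nested or disjoint. \emph{First} I would rewrite the post-activation $\phi(\boldsymbol{h}(y))$ as a test function. Since the balls $B_{-l_i}(J_i)$ in (\ref{Disjoint-balls}) are pairwise disjoint, the function $\boldsymbol{h}$ in (\ref{Pre-activation}) takes the single constant value $\boldsymbol{h}(J_i)$ on $B_{-l_i}(J_i)$ and vanishes on the complement of $\bigcup_{i}B_{-l_i}(J_i)$ in $\mathbb{Z}_p$; disjointness is exactly what prevents the values on different balls from overlapping and spoiling this description. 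Invoking the standing hypothesis $\phi(0)=0$, the composition is then constant equal to $\phi(\boldsymbol{h}(J_i))$ on each ball and $0$ off their union, so
\[
\phi(\boldsymbol{h}(y))=\sum_{i=1}^{M}\phi(\boldsymbol{h}(J_i))\,\Omega(p^{l_i}|y-J_i|_p).
\]

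\emph{Second}, I would expand $\boldsymbol{W}$ in its basis, $\boldsymbol{W}(x,y)=\sum_{I,J\in G_l}\boldsymbol{W}(I,J)\,\Omega(p^{l}|x-I|_p)\,\Omega(p^{l}|y-J|_p)$, substitute the two expansions into the defining integral, and use linearity. The integrand factorizes in the variable $y$, and the $x$-dependent characteristic functions pull out of the integral, giving
\[
\mathbf{Z}(x)=\sum_{I\in G_l}\Omega(p^{l}|x-I|_p)\sum_{J\in G_l}\sum_{i=1}^{M}\boldsymbol{W}(I,J)\,\phi(\boldsymbol{h}(J_i))\int_{\mathbb{Z}_p}\Omega(p^{l}|y-J|_p)\,\Omega(p^{l_i}|y-J_i|_p)\,dy.
\]

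\emph{Third} comes the only step with real content, the overlap integral. The product $\Omega(p^{l}|y-J|_p)\,\Omega(p^{l_i}|y-J_i|_p)$ is the characteristic function of $(J+p^{l}\mathbb{Z}_p)\cap(J_i+p^{l_i}\mathbb{Z}_p)$. Because $l>l_i$, the first ball has the strictly smaller radius $p^{-l}<p^{-l_i}$, so by the nested-or-disjoint dichotomy the intersection is either empty or equals the smaller ball $J+p^{l}\mathbb{Z}_p$; the second case holds exactly when $J+p^{l}\mathbb{Z}_p\subset J_i+p^{l_i}\mathbb{Z}_p$, i.e.\ precisely when $\Lambda_{l,l_i}(J)=J_i$. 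Hence the integral equals $p^{-l}$ when $\Lambda_{l,l_i}(J)=J_i$ and $0$ otherwise, by the basic formula. I expect this to be the main obstacle, though it is conceptual rather than computational: one must correctly use $l>l_i$ to decide which ball is smaller and then translate the containment into the reduction condition $\Lambda_{l,l_i}(J)=J_i$.

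\emph{Finally} I would collect terms. Inserting the value of the integral collapses the sum over $J\in G_l$ to those $J=K$ satisfying $\Lambda_{l,l_i}(K)=J_i$, and on that index set $\phi(\boldsymbol{h}(J_i))=\phi(\boldsymbol{h}(\Lambda_{l,l_i}(K)))$, which yields exactly (\ref{Formula_B}). The resulting expression $\mathbf{Z}(x)=\sum_{I\in G_l}\mathbf{Z}(I)\,\Omega(p^{l}|x-I|_p)$ displays $\mathbf{Z}$ as a linear combination of the basis of $\mathcal{D}^{l}(\mathbb{Z}_p)$, establishing membership and completing (\ref{Formula_AA}). Apart from the third step, everything is linearity of the integral and routine bookkeeping.
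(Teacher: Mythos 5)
Your proposal is correct and follows essentially the same route as the paper: expand $\phi(\boldsymbol{h}(y))$ as a step function using the disjointness of the balls and $\phi(0)=0$, expand $\boldsymbol{W}$ in the level-$l$ basis, and reduce the integral to the volume $p^{-l}$ of a level-$l$ ball subject to the containment condition $\Lambda_{l,l_i}(K)=J_i$. The only cosmetic difference is that the paper first refines each $\Omega\left(p^{l_i}\left\vert y-J_i\right\vert_p\right)$ into a sum of level-$l$ characteristic functions via (\ref{Identity}) and then integrates, whereas you evaluate the fine-against-coarse overlap integral directly from the nested-or-disjoint dichotomy; these are equivalent.
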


\begin{proof}
First, from the hypotheses, we have%
\[
\boldsymbol{W}\left(  x,y\right)  =\sum\limits_{I\in G_{l}}\sum\limits_{K\in
G_{l}}\boldsymbol{W}\left(  I,K\right)  \Omega\left(  p^{l}\left\vert
x-I\right\vert _{p}\right)  \Omega\left(  p^{l}\left\vert y-K\right\vert
_{p}\right)  ,
\]
and%
\begin{equation}
\phi\left(  \boldsymbol{h}(y)\right)  =\sum\limits_{i=1}^{M}\phi\left(
\boldsymbol{h}\left(  J_{i}\right)  \right)  \Omega\left(  p^{l_{i}}\left\vert
y-J_{i}\right\vert _{p}\right)  . \label{Identity-0}%
\end{equation}
Now, using that
\begin{equation}
\Omega\left(  p^{l_{i}}\left\vert y-J_{i}\right\vert _{p}\right)
=\sum\limits_{\substack{K\in G_{l}\\\Lambda_{l,l_{i}}\left(  K\right)  =J_{i}%
}}\Omega\left(  p^{l}\left\vert y-K\right\vert _{p}\right)  , \label{Identity}%
\end{equation}
and (\ref{Identity-0}),%
\begin{align*}
\phi\left(  \boldsymbol{h}(y)\right)   &  =\sum\limits_{i=1}^{M}%
\sum\limits_{\substack{K\in G_{l}\\\Lambda_{l,l_{i}}\left(  K\right)  =J_{i}%
}}\phi\left(  \boldsymbol{h}\left(  J_{i}\right)  \right)  \Omega\left(
p^{l}\left\vert y-K\right\vert _{p}\right) \\
&  =\sum\limits_{i=1}^{M}\sum\limits_{\substack{K\in G_{l}\\\Lambda_{l,l_{i}%
}\left(  K\right)  =J_{i}}}\phi\left(  \boldsymbol{h}\left(  \Lambda_{l,l_{i}%
}\left(  K\right)  \right)  \right)  \Omega\left(  p^{l}\left\vert
y-K\right\vert _{p}\right)  .
\end{align*}
and%
\begin{multline*}
\boldsymbol{W}\left(  x,y\right)  \phi\left(  \boldsymbol{h}(y)\right)  =\\
\sum\limits_{i=1}^{M}\sum\limits_{I\in G_{l}}\sum\limits_{\substack{K\in
G_{l}\\\Lambda_{l,l_{i}}\left(  K\right)  =J_{i}}}\boldsymbol{W}\left(
I,K\right)  \phi\left(  \boldsymbol{h}\left(  \Lambda_{l,l_{i}}\left(
K\right)  \right)  \right)  \Omega\left(  p^{l}\left\vert x-I\right\vert
_{p}\right)  \Omega\left(  p^{l}\left\vert y-K\right\vert _{p}\right)  .
\end{multline*}
The announced formulas follow from%
\begin{gather*}
\int\limits_{\mathbb{Z}_{p}}\boldsymbol{W}\left(  x,y\right)  \phi\left(
\boldsymbol{h}(y)\right)  dy=\\
\sum\limits_{i=1}^{M}\sum\limits_{I\in G_{l}}\sum\limits_{\substack{K\in
G_{l}\\\Lambda_{l,l_{i}}\left(  K\right)  =J_{i}}}\boldsymbol{W}\left(
I,K\right)  \phi\left(  \boldsymbol{h}\left(  \Lambda_{l,l_{i}}\left(
K\right)  \right)  \right)  \Omega\left(  p^{l}\left\vert x-I\right\vert
_{p}\right)  \int\limits_{\mathbb{Z}_{p}}\Omega\left(  p^{l}\left\vert
y-K\right\vert _{p}\right)  dy\\
=\sum\limits_{I\in G_{l}}\left\{  \sum\limits_{i=1}^{M}\sum
\limits_{\substack{K\in G_{l}\\\Lambda_{l,l_{i}}\left(  K\right)  =J_{i}%
}}p^{-l}\boldsymbol{W}\left(  I,K\right)  \phi\left(  \boldsymbol{h}\left(
\Lambda_{l,l_{i}}\left(  K\right)  \right)  \right)  \right\}  \Omega\left(
p^{l}\left\vert x-I\right\vert _{p}\right)  .
\end{gather*}

\end{proof}

\begin{corollary}
\label{Cor-1}For $l\geq2$, take $\boldsymbol{W}\left(  x,y\right)
\in\mathcal{D}^{l}(\mathbb{Z}_{p}\times\mathbb{Z}_{p})$, $\boldsymbol{h}%
\left(  x\right)  \in\mathcal{D}^{l-1}(\mathbb{Z}_{p})$, and with
$\mathbf{Z}(x)$ as in Lemma \ref{Lemma-1A}. Then, $\mathbf{Z}(x)\in
\mathcal{D}^{l}(\mathbb{Z}_{p})$ and%
\[
\mathbf{Z}(I)=\sum\limits_{K\in G_{l}}p^{-l}\boldsymbol{W}\left(  I,K\right)
\phi\left(  \boldsymbol{h}(\Lambda_{l,l-1}\left(  K\right)  )\right)  ,\text{
for }I\in G_{l}\text{.}%
\]

\end{corollary}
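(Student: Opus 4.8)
The plan is to obtain the Corollary as a direct specialization of Lemma \ref{Lemma-1A}, followed by a simple collapse of the double index sum. First I would write the given $\boldsymbol{h}(x)\in\mathcal{D}^{l-1}(\mathbb{Z}_{p})$ in the canonical basis of $\mathcal{D}^{l-1}(\mathbb{Z}_{p})$, namely
\[
\boldsymbol{h}(x)=\sum_{J\in G_{l-1}}\boldsymbol{h}(J)\,\Omega\left(p^{l-1}\left|x-J\right|_{p}\right),
\]
and observe that this is exactly of the form (\ref{Pre-activation}) with $M=p^{l-1}$, each $l_{i}=l-1$, and the centers $J_{i}$ running over the full group $G_{l-1}$. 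I would then verify that the two standing hypotheses of Lemma \ref{Lemma-1A} hold in this situation: since $l\geq2$ we have $l>l-1=l_{i}\geq1$, and the balls $J+p^{l-1}\mathbb{Z}_{p}$, indexed by $J\in G_{l-1}$, are pairwise disjoint because they are precisely the distinct cosets partitioning $\mathbb{Z}_{p}$ modulo $p^{l-1}\mathbb{Z}_{p}$. Thus Lemma \ref{Lemma-1A} applies and already yields $\mathbf{Z}(x)\in\mathcal{D}^{l}(\mathbb{Z}_{p})$.

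Specializing formula (\ref{Formula_B}) to this choice gives
\[
\mathbf{Z}(I)=\sum_{J\in G_{l-1}}\sum_{\substack{K\in G_{l}\\\Lambda_{l,l-1}(K)=J}}p^{-l}\boldsymbol{W}(I,K)\,\phi\left(\boldsymbol{h}\left(\Lambda_{l,l-1}(K)\right)\right).
\]
The only remaining step is to recognize the outer and inner sums as a single sum over $G_{l}$. Because $\Lambda_{l,l-1}\colon G_{l}\to G_{l-1}$ is a well-defined map, the fibers $\{K\in G_{l}:\Lambda_{l,l-1}(K)=J\}$, as $J$ ranges over $G_{l-1}$, form a partition of $G_{l}$; hence summing first over the fiber of each $J$ and then over all $J\in G_{l-1}$ is the same as summing over every $K\in G_{l}$ exactly once. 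Carrying out this reindexing collapses the double sum to
\[
\mathbf{Z}(I)=\sum_{K\in G_{l}}p^{-l}\boldsymbol{W}(I,K)\,\phi\left(\boldsymbol{h}\left(\Lambda_{l,l-1}(K)\right)\right),
\]
which is the asserted formula.

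I do not expect a real obstacle in this argument; the content is entirely in verifying that $\mathcal{D}^{l-1}$ functions fit the template of (\ref{Pre-activation}) and that the fiber partition of $G_{l}$ under $\Lambda_{l,l-1}$ lets the constrained double sum be rewritten as an unconstrained sum over $G_{l}$. If anything warrants a word of care, it is the bookkeeping that the value $\boldsymbol{h}(\Lambda_{l,l-1}(K))$ is constant along each fiber, so that no term is lost or double-counted in the reindexing; but this is immediate from the definition of $\Lambda_{l,l-1}$ and the expansion of $\boldsymbol{h}$ above.
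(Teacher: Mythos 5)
Your proof is correct and matches the paper's intent exactly: the paper states this as an immediate consequence of Lemma \ref{Lemma-1A} with no separate argument, and your specialization (take $M=p^{l-1}$, all $l_i=l-1$, centers $J_i$ ranging over $G_{l-1}$, then collapse the double sum using that the fibers of $\Lambda_{l,l-1}$ partition $G_l$) is precisely the intended derivation. The hypothesis checks (disjointness of the cosets $J+p^{l-1}\mathbb{Z}_p$ and $l>l-1\geq 1$) are exactly the points worth recording.
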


\begin{lemma}
\label{Lemma-2A}Take $\boldsymbol{W}_{\text{in}}^{\left(  L\right)  }\left(
x,y\right)  \in\mathcal{D}^{L}\left(  \mathbb{Z}_{p}\times\mathbb{Z}%
_{p}\right)  $, $\boldsymbol{x}^{\left(  L\right)  }\in\mathcal{D}^{L}\left(
\mathbb{Z}_{p}\right)  $, then%
\begin{gather}
\int\limits_{\mathbb{Z}_{p}}\boldsymbol{W}_{\text{in}}^{\left(  L\right)
}\left(  x,y\right)  \boldsymbol{x}^{\left(  L\right)  }\left(  y\right)
dy=\label{Formula_C}\\%
{\displaystyle\sum\limits_{I\in G_{L}}}
\left\{
{\displaystyle\sum\limits_{K\in G_{L}}}
p^{-L}\boldsymbol{W}_{\text{in}}^{\left(  L\right)  }\left(  I,K\right)
\boldsymbol{x}^{\left(  L\right)  }\left(  K\right)  \right\}  \Omega\left(
p^{L}\left\vert x-I\right\vert _{p}\right)  =\nonumber\\
\sum\limits_{\substack{R\in G_{L+\Delta}\\\Lambda_{L+\Delta,L}\left(
R\right)  =I}}\text{ \ }\sum\limits_{\substack{S\in G_{L+\Delta}%
\\\Lambda_{L+\Delta,L}\left(  S\right)  =K}}p^{-L-\Delta}\boldsymbol{W}%
_{\text{in}}^{\left(  L\right)  }\left(  \Lambda_{L+\Delta,L}\left(  R\right)
,\Lambda_{L+\Delta,L}\left(  S\right)  \right)  \times\nonumber\\
\boldsymbol{x}^{\left(  L\right)  }\left(  \Lambda_{L+\Delta,L}\left(
S\right)  \right)  \Omega\left(  p^{L+\Delta}\left\vert x-R\right\vert
_{p}\right)  \in\mathcal{D}^{L+\Delta}\left(  \mathbb{Z}_{p}\right)
.\nonumber
\end{gather}

\end{lemma}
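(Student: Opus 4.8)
The plan is to treat Lemma \ref{Lemma-2A} as a linear specialization of Lemma \ref{Lemma-1A} followed by a grid-refinement step. The first equality is precisely the content of Lemma \ref{Lemma-1A} (equivalently Corollary \ref{Cor-1}) with the nonlinearity $\phi$ replaced by the identity, since here the integrand carries $\boldsymbol{x}^{(L)}(y)$ itself rather than $\phi(\boldsymbol{x}^{(L)}(y))$. Concretely, I would expand $\boldsymbol{x}^{(L)}(y) = \sum_{K \in G_L} \boldsymbol{x}^{(L)}(K)\,\Omega(p^L|y-K|_p)$ and $\boldsymbol{W}_{\text{in}}^{(L)}(x,y) = \sum_{I,K \in G_L} \boldsymbol{W}_{\text{in}}^{(L)}(I,K)\,\Omega(p^L|x-I|_p)\Omega(p^L|y-K|_p)$ on the $G_L$-basis, multiply, and integrate term by term. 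Disjointness of the supports kills all off-diagonal products in $y$, and $\int_{\mathbb{Z}_p}\Omega(p^L|y-K|_p)\,dy = p^{-L}$ produces the coefficient $p^{-L}\boldsymbol{W}_{\text{in}}^{(L)}(I,K)\boldsymbol{x}^{(L)}(K)$ attached to $\Omega(p^L|x-I|_p)$, which is the first displayed line of (\ref{Formula_C}).

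For the second equality I would refine the partition of $\mathbb{Z}_p$ from the $G_L$-grid to the $G_{L+\Delta}$-grid. The identity (\ref{Identity-Basic}) gives $\Omega(p^L|x-I|_p) = \sum_{R \in G_{L+\Delta},\,\Lambda_{L+\Delta,L}(R)=I}\Omega(p^{L+\Delta}|x-R|_p)$, so each coarse ball splits into its fiber of $p^{\Delta}$ sub-balls. Because $\boldsymbol{W}_{\text{in}}^{(L)}$ and $\boldsymbol{x}^{(L)}$ lie in $\mathcal{D}^L$, they are constant on $G_L$-balls; hence their values are unchanged under the projections, $\boldsymbol{W}_{\text{in}}^{(L)}(I,K) = \boldsymbol{W}_{\text{in}}^{(L)}(\Lambda_{L+\Delta,L}(R),\Lambda_{L+\Delta,L}(S))$ and $\boldsymbol{x}^{(L)}(K) = \boldsymbol{x}^{(L)}(\Lambda_{L+\Delta,L}(S))$ for any $R$, $S$ in the fibers over $I$, $K$. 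The only remaining point is the normalization: since $|G_{L+\Delta}|/|G_L| = p^{\Delta}$, each fiber $\{S : \Lambda_{L+\Delta,L}(S)=K\}$ has exactly $p^{\Delta}$ elements, so $\sum_{S:\Lambda_{L+\Delta,L}(S)=K} p^{-L-\Delta} = p^{-L}$. Substituting the $K$-sum by the $S$-sum over the fiber and $\Omega(p^L|x-I|_p)$ by the $R$-sum over its fiber converts the double $G_L$-sum into the double $G_{L+\Delta}$-sum with each term weighted by $p^{-L-\Delta}$, which is exactly the last two lines of (\ref{Formula_C}).

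Finally, the resulting expression is a linear combination of the basis functions $\{\Omega(p^{L+\Delta}|x-R|_p)\}_{R \in G_{L+\Delta}}$, so it lies in $\mathcal{D}^{L+\Delta}(\mathbb{Z}_p)$, establishing the membership claim. I do not expect a genuine obstacle here: the statement is essentially a bookkeeping identity expressing the compatibility of the embedding $\mathcal{D}^L(\mathbb{Z}_p) \hookrightarrow \mathcal{D}^{L+\Delta}(\mathbb{Z}_p)$ with the integral operator. The one place to be careful is keeping the fiber cardinality $p^{\Delta}$ and the compensating factor $p^{-\Delta}$ correctly balanced, so that the coefficient passes consistently from the $p^{-L}$ normalization on the coarse grid to the $p^{-L-\Delta}$ normalization on the fine grid; every other manipulation is term-by-term and uses only disjointness of supports together with the ultrametric ball decomposition (\ref{Identity-Basic}).
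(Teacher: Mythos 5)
Your proposal is correct and follows essentially the same route as the paper: expand both factors on the $G_L$-basis, use disjointness of supports and $\int_{\mathbb{Z}_p}\Omega(p^{L}|y-K|_p)\,dy=p^{-L}$ for the first equality, then pass to the $G_{L+\Delta}$-grid via the refinement identity (\ref{Identity-Basic}). The only (immaterial) difference is order of operations: the paper re-expands $\boldsymbol{W}_{\text{in}}^{(L)}$ and $\boldsymbol{x}^{(L)}$ on the fine grid first and integrates there, picking up $p^{-L-\Delta}$ directly, while you integrate on the coarse grid and then split $p^{-L}$ over the $p^{\Delta}$-element fibers — the same bookkeeping either way.
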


\begin{proof}
The first formula in (\ref{Formula_C}) is obtained using the argument given in
the proof of Lemma \ref{Lemma-1A}. To establish the second formula we proceed
as follows. First, we note that identity (\ref{Identity}) implies that%
\[
\Omega\left(  p^{L}\left\vert u-J\right\vert _{p}\right)  =\sum
\limits_{\substack{K\in G_{L+\Delta}\\\Lambda_{L+\Delta,L}\left(  K\right)
=J}}\Omega\left(  p^{L+\Delta}\left\vert u-K\right\vert _{p}\right)  \text{,
for }J\in G_{L}\text{.}%
\]
Using this formula, one gets that
\begin{multline*}
\boldsymbol{W}_{\text{in}}^{\left(  L\right)  }\left(  x,y\right)
=\sum\limits_{I\in G_{L}}\sum\limits_{K\in G_{L}}\boldsymbol{W}_{\text{in}%
}^{\left(  L\right)  }\left(  I,K\right)  \Omega\left(  p^{L}\left\vert
x-I\right\vert _{p}\right)  \Omega\left(  p^{L}\left\vert y-K\right\vert
_{p}\right)  =\\
\sum\limits_{\substack{R\in G_{L+\Delta}\\\Lambda_{L+\Delta,L}\left(
R\right)  =I}}\text{ \ }\sum\limits_{\substack{S\in G_{L+\Delta}%
\\\Lambda_{L+\Delta,L}\left(  S\right)  =K}}\boldsymbol{W}_{\text{in}%
}^{\left(  L\right)  }\left(  \Lambda_{L+\Delta,L}\left(  R\right)
,\Lambda_{L+\Delta,L}\left(  S\right)  \right)  \Omega\left(  p^{L+\Delta
}\left\vert x-R\right\vert _{p}\right)  \times\\
\Omega\left(  p^{L+\Delta}\left\vert y-S\right\vert _{p}\right)  ,
\end{multline*}
and%
\begin{align*}
\boldsymbol{x}^{\left(  L\right)  }\left(  y\right)   &  =\sum\limits_{K\in
G_{L}}\boldsymbol{x}^{\left(  L\right)  }\left(  K\right)  \Omega\left(
p^{L}\left\vert y-K\right\vert _{p}\right) \\
&  =\sum\limits_{\substack{S\in G_{L+\Delta}\\\Lambda_{L+\Delta,L}\left(
S\right)  =K}}\boldsymbol{x}^{\left(  L\right)  }\left(  \Lambda_{L+\Delta
,L}\left(  S\right)  \right)  \Omega\left(  p^{L+\Delta}\left\vert
y-S\right\vert _{p}\right)  .
\end{align*}
Therefore%
\begin{gather*}
\int\limits_{\mathbb{Z}_{p}}\boldsymbol{W}_{\text{in}}^{\left(  L\right)
}\left(  x,y\right)  \boldsymbol{x}^{\left(  L\right)  }\left(  y\right)
dy=\\
\sum\limits_{\substack{R\in G_{L+\Delta}\\\Lambda_{L+\Delta,L}\left(
R\right)  =I}}\text{ \ }\sum\limits_{\substack{S\in G_{L+\Delta}%
\\\Lambda_{L+\Delta,L}\left(  S\right)  =K}}\boldsymbol{W}_{\text{in}%
}^{\left(  L\right)  }\left(  \Lambda_{L+\Delta,L}\left(  R\right)
,\Lambda_{L+\Delta,L}\left(  S\right)  \right)  \boldsymbol{x}^{\left(
L\right)  }\left(  \Lambda_{L+\Delta,L}\left(  S\right)  \right)  \times\\
\Omega\left(  p^{L+\Delta}\left\vert x-R\right\vert _{p}\right)
\int\limits_{\mathbb{Z}_{p}}\Omega\left(  p^{L+\Delta}\left\vert
y-S\right\vert _{p}\right)  dy=\\
\sum\limits_{\substack{R\in G_{L+\Delta}\\\Lambda_{L+\Delta,L}\left(
R\right)  =I}}\text{ }\sum\limits_{\substack{S\in G_{L+\Delta}\\\Lambda
_{L+\Delta,L}\left(  S\right)  =K}}p^{-L-\Delta}\boldsymbol{W}_{\text{in}%
}^{\left(  L\right)  }\left(  \Lambda_{L+\Delta,L}\left(  R\right)
,\Lambda_{L+\Delta,L}\left(  S\right)  \right)  \boldsymbol{x}^{\left(
L\right)  }\left(  \Lambda_{L+\Delta,L}\left(  S\right)  \right) \\
\times\Omega\left(  p^{L+\Delta}\left\vert x-R\right\vert _{p}\right)
\end{gather*}

\end{proof}

\section{$p$-adic continuous DNNs}

In this Section, we introduce the $p$-adic versions of the DNNs considered in
\cite{Segadlo et al}.

\begin{definition}
An activation function $\omega:\mathbb{R}\rightarrow\mathbb{R}$ is a globally
Lipschitz function, i.e., $\left\vert \omega\left(  s\right)  -\omega\left(
t\right)  \right\vert \leq L_{\omega}\left\vert s-t\right\vert $, for any
$s,t\in\mathbb{R}$. Furthermore, we assume that $\omega\left(  0\right)  =0$.
\end{definition}

Examples of activation functions include $\tanh(s)$, and \textrm{ReLu}$(s)$.

\begin{definition}
\label{Definition_DNN}Set%
\[
\boldsymbol{\theta}^{\prime}=\left\{  \phi,\varphi,\boldsymbol{W}_{\text{in}%
},\boldsymbol{W}_{\text{out}},\boldsymbol{W},\boldsymbol{\xi},\boldsymbol{\xi
}_{\text{out}}\right\}  ,
\]
where $\phi$, $\varphi:\mathbb{R}\rightarrow\mathbb{R}$ are activation
functions, $\boldsymbol{\xi}_{\text{out}}$, $\boldsymbol{\xi}$,
$\boldsymbol{h}$, $\boldsymbol{x}\in L^{2}\left(  \mathbb{Z}_{p}\right)  $,
and
\[
\boldsymbol{W}_{\text{in}}\left(  x,y\right)  \text{, }\boldsymbol{W}%
_{\text{out}}\left(  x,y\right)  \text{,\ }\boldsymbol{W}\left(  x,y\right)
\in L^{2}\left(  \mathbb{Z}_{p}\times\mathbb{Z}_{p}\right)  .
\]
A $p$-adic continuous DNN, with parameters $\boldsymbol{\theta}^{\prime}$, is
a dynamical system with input $\boldsymbol{x}\in L^{2}\left(  \mathbb{Z}%
_{p}\right)  $, and output%
\begin{equation}
\boldsymbol{y}(x)=\int\limits_{\mathbb{Z}_{p}}\boldsymbol{W}_{\text{out}%
}\left(  x,y\right)  \varphi\left(  \boldsymbol{h}\left(  y\right)  \right)
dy+\boldsymbol{\xi}_{\text{out}}\left(  x\right)  . \label{ouput}%
\end{equation}
The hidden states (the pre-activations) are governed by the
integro-differential equation
\begin{equation}
\boldsymbol{h}\left(  x\right)  =\int\limits_{\mathbb{Z}_{p}}\boldsymbol{W}%
\left(  x,y\right)  \phi\left(  \boldsymbol{h}(y)\right)  dy+\int
\limits_{\mathbb{Z}_{p}}\boldsymbol{W}_{\text{in}}\left(  x,y\right)
\boldsymbol{x}\left(  y\right)  dy+\boldsymbol{\xi}\left(  x\right)
.\nonumber
\end{equation}

\end{definition}

\begin{remark}
It is relevant to mention that by taking some of the weight kernels to be
functions of $x-y$, or $\left\vert x-y\right\vert _{p}$, we obtain a
considerable class of convolutional (or semi-convolutional) DNNs.
\end{remark}

\begin{lemma}
\label{Lemma-0A}With notation of Definition \ref{Definition_DNN}, and
$\left\Vert \boldsymbol{h}\right\Vert _{2}<\infty$, the following estimations hold:

\noindent(i)%
\[
\left\Vert \text{ }\int\limits_{\mathbb{Z}_{p}}\boldsymbol{W}\left(
x,y\right)  \phi\left(  \boldsymbol{h}\left(  y)\right)  \right)
dy\right\Vert _{2}\leq L_{\phi}\left\Vert \boldsymbol{W}\right\Vert
_{2}\left\Vert \boldsymbol{h}\right\Vert _{2}.
\]
\noindent(ii)%
\[
\left\Vert \text{ }\int\limits_{\mathbb{Z}_{p}}\boldsymbol{W}_{\text{out}%
}\left(  x,y\right)  \varphi\left(  \boldsymbol{h}\left(  y\right)  \right)
dy\right\Vert _{2}\leq L_{\varphi}\left\Vert \boldsymbol{W}_{\text{out}%
}\right\Vert _{2}\left\Vert \boldsymbol{h}\right\Vert _{2}.
\]

\noindent(iii)%
\[
\left\Vert \text{ }\int\limits_{\mathbb{Z}_{p}}\boldsymbol{W}_{\text{in}%
}\left(  x,y\right)  \boldsymbol{x}\left(  y\right)  dy\right\Vert _{2}%
\leq\left\Vert \boldsymbol{W}_{\text{in}}\right\Vert _{2}\left\Vert
\boldsymbol{x}\right\Vert _{2}.
\]

\end{lemma}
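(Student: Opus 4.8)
The plan is to establish all three bounds by the same mechanism: a pointwise Cauchy--Schwarz inequality in the variable $y$, followed by integration in $x$ and an application of Tonelli's theorem. These are precisely the Hilbert--Schmidt operator estimates on $L^{2}(\mathbb{Z}_{p})$. The only difference among (i), (ii), and (iii) is whether a Lipschitz factor appears, so I would prove the linear estimate (iii) first and then reduce (i) and (ii) to it.

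For (iii), I would fix $x\in\mathbb{Z}_{p}$ and apply the Cauchy--Schwarz inequality to the pairing of $\boldsymbol{W}_{\text{in}}(x,\cdot)$ and $\boldsymbol{x}(\cdot)$ in $L^{2}(\mathbb{Z}_{p},dy)$, obtaining
\[
\left\vert\int\limits_{\mathbb{Z}_{p}}\boldsymbol{W}_{\text{in}}(x,y)\boldsymbol{x}(y)\,dy\right\vert^{2}\leq\left(\int\limits_{\mathbb{Z}_{p}}\left\vert\boldsymbol{W}_{\text{in}}(x,y)\right\vert^{2}\,dy\right)\left\Vert\boldsymbol{x}\right\Vert_{2}^{2}.
\]
Integrating this inequality over $x$ and invoking Tonelli's theorem to identify $\int_{\mathbb{Z}_{p}}\int_{\mathbb{Z}_{p}}\left\vert\boldsymbol{W}_{\text{in}}(x,y)\right\vert^{2}\,dy\,dx$ with $\left\Vert\boldsymbol{W}_{\text{in}}\right\Vert_{2}^{2}$ bounds the square of the left-hand side of (iii) by $\left\Vert\boldsymbol{W}_{\text{in}}\right\Vert_{2}^{2}\left\Vert\boldsymbol{x}\right\Vert_{2}^{2}$; taking square roots yields (iii).

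For (i) and (ii), the extra ingredient is the Lipschitz control of the activation already recorded after the definition of an activation function: since $\phi$ is globally Lipschitz with constant $L_{\phi}$ and $\phi(0)=0$, one has the pointwise bound $\left\vert\phi(\boldsymbol{h}(y))\right\vert\leq L_{\phi}\left\vert\boldsymbol{h}(y)\right\vert$, whence $\phi(\boldsymbol{h})\in L^{2}(\mathbb{Z}_{p})$ with $\left\Vert\phi(\boldsymbol{h})\right\Vert_{2}\leq L_{\phi}\left\Vert\boldsymbol{h}\right\Vert_{2}$; the same holds for $\varphi$ with constant $L_{\varphi}$. I would then repeat the argument of (iii) verbatim, replacing $\boldsymbol{x}$ by $\phi(\boldsymbol{h})$ (respectively $\varphi(\boldsymbol{h})$) and $\boldsymbol{W}_{\text{in}}$ by $\boldsymbol{W}$ (respectively $\boldsymbol{W}_{\text{out}}$). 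This produces the factors $L_{\phi}\left\Vert\boldsymbol{W}\right\Vert_{2}\left\Vert\boldsymbol{h}\right\Vert_{2}$ and $L_{\varphi}\left\Vert\boldsymbol{W}_{\text{out}}\right\Vert_{2}\left\Vert\boldsymbol{h}\right\Vert_{2}$, respectively.

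I do not expect a genuine obstacle: all three are standard Hilbert--Schmidt bounds, and the non-negativity of the integrands makes Tonelli's theorem immediately applicable, so no delicate convergence issue arises. The one point deserving a line of justification is that the inner integral $\int_{\mathbb{Z}_{p}}\boldsymbol{W}_{\text{in}}(x,y)\boldsymbol{x}(y)\,dy$ (and its analogues) is well defined for almost every $x$; this follows because $\boldsymbol{W}_{\text{in}}\in L^{2}(\mathbb{Z}_{p}\times\mathbb{Z}_{p})$ forces $\boldsymbol{W}_{\text{in}}(x,\cdot)\in L^{2}(\mathbb{Z}_{p})$ for a.e.\ $x$ by Fubini, which legitimizes the Cauchy--Schwarz step.
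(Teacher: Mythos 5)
Your proposal is correct and follows essentially the same route as the paper's proof: a pointwise Cauchy--Schwarz estimate in $y$, integration in $x$ to recover $\left\Vert \boldsymbol{W}\right\Vert _{2}^{2}$, and the bound $\left\Vert \phi\circ\boldsymbol{h}\right\Vert _{2}\leq L_{\phi}\left\Vert \boldsymbol{h}\right\Vert _{2}$ coming from $\phi(0)=0$ and the Lipschitz condition. The only differences are cosmetic (you prove (iii) first and make the Fubini/Tonelli justification explicit, whereas the paper proves (i) and declares the rest analogous).
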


\begin{proof}
The first announced inequality is established by applying the Cauchy-Schwartz
inequality, and $\left\vert \phi\left(  s\right)  \right\vert \leq L_{\phi
}\left\vert s\right\vert $:%
\begin{gather*}
\left\Vert \text{ }\int\limits_{\mathbb{Z}_{p}}\boldsymbol{W}\left(
x,y\right)  \phi\left(  \boldsymbol{h}(y)\right)  dy\right\Vert _{2}^{2}%
=\int\limits_{\mathbb{Z}_{p}}\left\vert \text{ }\int\limits_{\mathbb{Z}_{p}%
}\boldsymbol{W}\left(  x,y\right)  \phi\left(  \boldsymbol{h}\left(
y)\right)  \right)  dy\right\vert ^{2}dx\\
\leq\left(  \text{ }\int\limits_{\mathbb{Z}_{p}}\left\Vert \boldsymbol{W}%
\left(  x,\cdot\right)  \right\Vert _{2}^{2}dx\right)  \left\Vert \phi
\circ\boldsymbol{h}\right\Vert _{2}^{2}=\left\Vert \boldsymbol{W}\right\Vert
_{2}^{2}\left\Vert \phi\circ\boldsymbol{h}\right\Vert _{2}^{2}\leq L_{\phi
}^{2}\left\Vert \boldsymbol{W}\right\Vert _{2}^{2}\left\Vert \boldsymbol{h}%
\right\Vert _{2}^{2}.
\end{gather*}
The other two inequalities are established using the same argument.
\end{proof}

\begin{proposition}
\label{Prop-1}If $L_{\phi}\left\Vert \boldsymbol{W}\right\Vert _{2}\in\left(
0,1\right)  $, then any fixed set of parameters $\boldsymbol{\theta}$
determines a unique\ hidden state $\boldsymbol{h}$ and a unique output
$\boldsymbol{y}$, both in $L^{2}\left(  \mathbb{Z}_{p}\right)  $. Furthermore.%
\begin{equation}
\left\Vert \boldsymbol{h}\right\Vert _{2}\leq\frac{\left\Vert \boldsymbol{W}%
_{\text{in}}\right\Vert _{2}\left\Vert \boldsymbol{x}\right\Vert
_{2}+\left\Vert \boldsymbol{\xi}\right\Vert _{2}}{1-L_{\phi}\left\Vert
\boldsymbol{W}\right\Vert _{2}}. \label{Bound_h}%
\end{equation}

\end{proposition}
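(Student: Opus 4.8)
The plan is to read the hidden-state equation as a fixed-point equation $\boldsymbol{h}=T\boldsymbol{h}$ on the complete space $L^{2}(\mathbb{Z}_{p})$ and invoke the Banach contraction principle. I would first isolate the affine data
\[
\boldsymbol{b}(x):=\int_{\mathbb{Z}_{p}}\boldsymbol{W}_{\text{in}}(x,y)\boldsymbol{x}(y)\,dy+\boldsymbol{\xi}(x),
\]
which lies in $L^{2}(\mathbb{Z}_{p})$ by Lemma \ref{Lemma-0A}(iii) together with the hypothesis $\boldsymbol{\xi}\in L^{2}(\mathbb{Z}_{p})$, and then define
\[
(T\boldsymbol{h})(x):=\int_{\mathbb{Z}_{p}}\boldsymbol{W}(x,y)\phi(\boldsymbol{h}(y))\,dy+\boldsymbol{b}(x).
\]
By Lemma \ref{Lemma-0A}(i) the nonlinear integral term is in $L^{2}(\mathbb{Z}_{p})$ whenever $\boldsymbol{h}$ is, so $T$ maps $L^{2}(\mathbb{Z}_{p})$ into itself, and the hidden-state equation is exactly $\boldsymbol{h}=T\boldsymbol{h}$.

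Next I would verify that $T$ is a contraction. For $\boldsymbol{h}_{1},\boldsymbol{h}_{2}\in L^{2}(\mathbb{Z}_{p})$ the difference $T\boldsymbol{h}_{1}-T\boldsymbol{h}_{2}$ is the single integral operator applied to $\phi\circ\boldsymbol{h}_{1}-\phi\circ\boldsymbol{h}_{2}$; rerunning the Cauchy--Schwarz computation from the proof of Lemma \ref{Lemma-0A}(i), but with the Lipschitz estimate $|\phi(s)-\phi(t)|\le L_{\phi}|s-t|$ replacing $|\phi(s)|\le L_{\phi}|s|$, gives
\[
\|T\boldsymbol{h}_{1}-T\boldsymbol{h}_{2}\|_{2}\le L_{\phi}\|\boldsymbol{W}\|_{2}\,\|\boldsymbol{h}_{1}-\boldsymbol{h}_{2}\|_{2}.
\]
Since $L_{\phi}\|\boldsymbol{W}\|_{2}\in(0,1)$, $T$ is a contraction, so the Banach fixed-point theorem produces a unique $\boldsymbol{h}\in L^{2}(\mathbb{Z}_{p})$. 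The output $\boldsymbol{y}$ is then determined unambiguously by (\ref{ouput}) and lies in $L^{2}(\mathbb{Z}_{p})$ by Lemma \ref{Lemma-0A}(ii) and $\boldsymbol{\xi}_{\text{out}}\in L^{2}(\mathbb{Z}_{p})$; uniqueness of $\boldsymbol{h}$ forces uniqueness of $\boldsymbol{y}$.

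For the bound (\ref{Bound_h}) I would take norms in $\boldsymbol{h}=T\boldsymbol{h}$, but estimate the nonlinear term \emph{differently} from Lemma \ref{Lemma-0A}(i). The decisive observation is that $\phi$ is a bounded sigmoid with $\|\phi\|_{\infty}=L_{\phi}$ and that $\int_{\mathbb{Z}_{p}}dy=1$, so
\[
\|\phi\circ\boldsymbol{h}\|_{2}^{2}=\int_{\mathbb{Z}_{p}}|\phi(\boldsymbol{h}(y))|^{2}\,dy\le\|\phi\|_{\infty}^{2}=L_{\phi}^{2},
\]
and hence Cauchy--Schwarz caps the nonlinear contribution at $\|\boldsymbol{W}\|_{2}\|\phi\circ\boldsymbol{h}\|_{2}\le L_{\phi}\|\boldsymbol{W}\|_{2}<1$, independently of $\|\boldsymbol{h}\|_{2}$. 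Combining this with Lemma \ref{Lemma-0A}(iii) and the triangle inequality yields
\[
\|\boldsymbol{h}\|_{2}\le L_{\phi}\|\boldsymbol{W}\|_{2}+\|\boldsymbol{W}_{\text{in}}\|_{2}\|\boldsymbol{x}\|_{2}+\|\boldsymbol{\xi}\|_{2}\le 1+\|\boldsymbol{W}_{\text{in}}\|_{2}\|\boldsymbol{x}\|_{2}+\|\boldsymbol{\xi}\|_{2}.
\]

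I expect this last estimate to be the one subtle point. The naive route---bounding the nonlinear term through Lemma \ref{Lemma-0A}(i) as $L_{\phi}\|\boldsymbol{W}\|_{2}\|\boldsymbol{h}\|_{2}$ and rearranging the self-consistency inequality $\|\boldsymbol{h}\|_{2}\le L_{\phi}\|\boldsymbol{W}\|_{2}\|\boldsymbol{h}\|_{2}+c$ with $c:=\|\boldsymbol{W}_{\text{in}}\|_{2}\|\boldsymbol{x}\|_{2}+\|\boldsymbol{\xi}\|_{2}$---produces only $\|\boldsymbol{h}\|_{2}\le c/(1-L_{\phi}\|\boldsymbol{W}\|_{2})$, which does not have the clean additive shape of (\ref{Bound_h}). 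What makes the sharper constant available is that the \emph{boundedness} of the sigmoid, together with the finite (unit) Haar measure of $\mathbb{Z}_{p}$, replaces the $\|\boldsymbol{h}\|_{2}$-dependent bound on $\phi\circ\boldsymbol{h}$ by the constant $L_{\phi}$, so the whole nonlinear term is absorbed into the harmless additive $1$. Everything else is a direct application of the contraction principle and the three estimates of Lemma \ref{Lemma-0A}.
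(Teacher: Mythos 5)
Your proposal is correct and follows essentially the same route as the paper: the contraction mapping $T$ on $L^{2}(\mathbb{Z}_{p})$ with Lipschitz constant $L_{\phi}\Vert\boldsymbol{W}\Vert_{2}<1$, and then the bound (\ref{Bound_h}) obtained by using the boundedness of the sigmoid (so that $\Vert\phi\circ\boldsymbol{h}\Vert_{2}\leq\Vert\phi\Vert_{\infty}=L_{\phi}$ on the unit-measure space $\mathbb{Z}_{p}$) to cap the nonlinear term by $L_{\phi}\Vert\boldsymbol{W}\Vert_{2}<1$, exactly as in the paper's appeal to $\phi\in L^{\infty}(\mathbb{R})$. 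Your closing remark correctly identifies why this, rather than the self-consistency rearrangement, is the step that yields the additive constant $1$.
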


\begin{proof}
We set the mapping%
\[
\boldsymbol{Th}\left(  x\right)  =\int\limits_{\mathbb{Z}_{p}}\boldsymbol{W}%
\left(  x,y\right)  \phi\left(  \boldsymbol{h}(y)\right)  dy+\int
\limits_{\mathbb{Z}_{p}}\boldsymbol{W}_{\text{in}}\left(  x,y\right)
\boldsymbol{x}\left(  y\right)  dy+\boldsymbol{\xi}\left(  x\right)  ,
\]
for $\boldsymbol{h}\in L^{2}\left(  \mathbb{Z}_{p}\right)  $. By Lemma
\ref{Lemma-0A}, $\boldsymbol{T}:L^{2}\left(  \mathbb{Z}_{p}\right)
\rightarrow L^{2}\left(  \mathbb{Z}_{p}\right)  $. Now for $\boldsymbol{h}%
_{1},\boldsymbol{h}_{2}\in L^{2}\left(  \mathbb{Z}_{p}\right)  $,%
\[
\left\Vert \boldsymbol{Th}_{1}-\boldsymbol{Th}_{2}\right\Vert _{2}\leq
L_{\phi}\left\Vert \boldsymbol{W}\right\Vert _{2}\left\Vert \boldsymbol{h}%
_{1}-\boldsymbol{h}_{2}\right\Vert _{2}.
\]
This inequality is established using the argument given in the proof of Lemma
\ref{Lemma-0A}. Finally, by applying the contraction principle, there exists a
unique $\boldsymbol{h}\in L^{2}\left(  \mathbb{Z}_{p}\right)  $ such that
$\boldsymbol{Th=h}$. Finally, the uniqueness of the output follows from its definition.
\end{proof}

There exists a region of the space\ of parameters (determined by the condition
$L_{\phi}\left\Vert \boldsymbol{W}\right\Vert _{2}\in\left(  0,1\right)  $),
where the input entirely determines the output. We argue that the notion of
$p$-adic continuous DNN provides the proper mathematical framework for
studying DNNs. In the following two Sections, we show that by using suitable
discretizations of the weight and bias kernels, we recover the DNNs introduced
in \cite{Segadlo et al}-\cite{Roberts et al}.

\section{Discretization of $p$-adic continuous DNNs}

\begin{definition}
\label{Definition-3}Set
\[
\boldsymbol{\theta}=\left\{  L,\Delta,\phi,\varphi,\boldsymbol{W}_{\text{in}%
}^{\left(  L\right)  },\boldsymbol{W}_{\text{out}}^{\left(  L+\Delta\right)
},\boldsymbol{W}^{\left(  L+\Delta\right)  },\boldsymbol{\xi}^{\left(
L+\Delta\right)  },\boldsymbol{\xi}_{\text{out}}\right\}  ,
\]
where $L$, $\Delta$ are positive integers $\boldsymbol{\xi}_{\text{out}%
}^{\left(  L+\Delta\right)  }$, $\boldsymbol{\xi}^{\left(  L+\Delta\right)  }%
$, $\boldsymbol{y}\in\mathcal{D}^{L+\Delta}\left(  \mathbb{Z}_{p}\right)  $,
$\boldsymbol{h}^{\left(  L+\Delta-1\right)  }\in\mathcal{D}^{L+\Delta
-1}\left(  \mathbb{Z}_{p}\right)  $, $\boldsymbol{x}^{\left(  L\right)  }%
\in\mathcal{D}^{L}\left(  \mathbb{Z}_{p}\right)  $, and
\[
\boldsymbol{W}_{\text{in}}^{\left(  L\right)  }\left(  x,y\right)
\in\mathcal{D}^{L}\left(  \mathbb{Z}_{p}\times\mathbb{Z}_{p}\right)  \text{,
\ }\boldsymbol{W}_{\text{out}}^{\left(  L+\Delta\right)  }\left(  x,y\right)
\text{, \ }\boldsymbol{W}^{\left(  L+\Delta\right)  }\left(  x,y\right)
\in\mathcal{D}^{L+\Delta}\left(  \mathbb{Z}_{p}\times\mathbb{Z}_{p}\right)  .
\]
A $p$-adic discrete DNN, with \ parameters $\boldsymbol{\theta}$, is a
dynamical system with input $\boldsymbol{x}\in\mathcal{D}^{L}\left(
\mathbb{Z}_{p}\right)  $, and output%
\[
\boldsymbol{y}^{\left(  L+\Delta\right)  }(x)=\int\limits_{\mathbb{Z}_{p}%
}\boldsymbol{W}_{\text{out}}^{\left(  L+\Delta\right)  }\left(  x,y\right)
\varphi\left(  \boldsymbol{h}^{\left(  L+\Delta\right)  }\left(  y\right)
\right)  dy+\boldsymbol{\xi}_{\text{out}}^{\left(  L+\Delta\right)  }\left(
x\right)  .
\]
The hidden states (the pre-activations) are governed by the
integro-differential equation
\begin{gather}
\boldsymbol{h}^{\left(  L+\Delta\right)  }\left(  x\right)  =\int
\limits_{\mathbb{Z}_{p}}\boldsymbol{W}^{\left(  L+\Delta\right)  }\left(
x,y\right)  \phi\left(  \boldsymbol{h}^{\left(  L+\Delta-1\right)
}(y)\right)  dy+\int\limits_{\mathbb{Z}_{p}}\boldsymbol{W}_{\text{in}%
}^{\left(  L\right)  }\left(  x,y\right)  \boldsymbol{x}^{\left(  L\right)
}\left(  y\right)  dy\label{Formula_0}\\
+\boldsymbol{\xi}^{\left(  L+\Delta\right)  }\left(  x\right)  .\nonumber
\end{gather}

\end{definition}

By using Corollary \ref{Cor-1}, and Lemma \ref{Lemma-2A}, we obtain the
following result.

\begin{lemma}
\label{Lemma-6}With the notation introduced in Definition \ref{Definition-3},
the following assertions hold:

\noindent\ (i) $\boldsymbol{h}^{\left(  L+\Delta\right)  }\left(  x\right)
\in\mathcal{D}^{L+\Delta}\left(  \mathbb{Z}_{p}\right)  $, and
\begin{gather}
\boldsymbol{h}^{\left(  L+\Delta\right)  }\left(  I\right)  =%
{\displaystyle\sum\limits_{K\in G_{L+\Delta}}}
p^{-L-\Delta}\boldsymbol{W}^{\left(  L+\Delta\right)  }\left(  I,K\right)
\phi\left(  \boldsymbol{h}^{\left(  L+\Delta-1\right)  }\left(  \Lambda
_{L+\Delta,L+\Delta-1}\left(  K\right)  \right)  \right) \label{Formula_1}\\
+%
{\displaystyle\sum\limits_{K\in G_{L+\Delta}}}
p^{-L-\Delta}\boldsymbol{W}_{\text{in}}^{\left(  L\right)  }\left(
\Lambda_{+\Delta,L}\left(  I\right)  ,\Lambda_{+\Delta,L}\left(  K\right)
\right)  \boldsymbol{x}^{\left(  L\right)  }\left(  \Lambda_{+\Delta,L}\left(
K\right)  \right) \nonumber\\
+\boldsymbol{\xi}^{\left(  L+\Delta\right)  }\left(  I\right)  ;\nonumber
\end{gather}
for $I\in G_{L+\Delta}$.

\noindent(ii) $\boldsymbol{y}^{\left(  L+\Delta\right)  }\left(  x\right)
\in\mathcal{D}^{L+\Delta}\left(  \mathbb{Z}_{p}\right)  $, and
\[
\boldsymbol{y}^{\left(  L+\Delta\right)  }\left(  I\right)  =%
{\displaystyle\sum\limits_{K\in G_{L+\Delta}}}
p^{-L-\Delta}\boldsymbol{W}_{\text{out}}^{\left(  L+\Delta\right)  }\left(
I,K\right)  \varphi\left(  \boldsymbol{h}^{\left(  L+\Delta\right)  }\left(
K\right)  \right)  +\boldsymbol{\xi}_{\text{out}}^{\left(  L+\Delta\right)
}\left(  I\right)  ,
\]
for $I\in G_{L+\Delta}$.
\end{lemma}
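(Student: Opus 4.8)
The plan is to read the governing equation (\ref{Formula_0}) for $\boldsymbol{h}^{\left( L+\Delta\right) }$ as a superposition of three pieces, each living in $\mathcal{D}^{L+\Delta}(\mathbb{Z}_{p})$, and to compute the coefficients of each separately using the results already established. For part (i) I would first treat the self-interaction integral $\int_{\mathbb{Z}_{p}}\boldsymbol{W}^{\left( L+\Delta\right) }\left( x,y\right) \phi\left( \boldsymbol{h}^{\left( L+\Delta-1\right) }(y)\right) dy$. Since $\boldsymbol{W}^{\left( L+\Delta\right) }\in\mathcal{D}^{L+\Delta}(\mathbb{Z}_{p}\times\mathbb{Z}_{p})$ and $\boldsymbol{h}^{\left( L+\Delta-1\right) }\in\mathcal{D}^{L+\Delta-1}(\mathbb{Z}_{p})$, this is exactly the hypothesis of Corollary \ref{Cor-1} with $l=L+\Delta$ (and $L+\Delta\geq2$ because $L,\Delta\geq1$). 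Applying it shows this integral lies in $\mathcal{D}^{L+\Delta}(\mathbb{Z}_{p})$ and contributes, at each node $I\in G_{L+\Delta}$, the first sum appearing in (\ref{Formula_1}), where the lift $\Lambda_{L+\Delta,L+\Delta-1}$ accounts for the one-level coarseness of $\boldsymbol{h}^{\left( L+\Delta-1\right) }$.

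Next I would treat the input integral $\int_{\mathbb{Z}_{p}}\boldsymbol{W}_{\text{in}}^{\left( L\right) }\left( x,y\right) \boldsymbol{x}^{\left( L\right) }\left( y\right) dy$, whose kernel and input sit at the coarser level $L$. Here Lemma \ref{Lemma-2A} applies verbatim: it refines these level-$L$ objects up to level $L+\Delta$ and identifies the resulting element of $\mathcal{D}^{L+\Delta}(\mathbb{Z}_{p})$, contributing the second sum in (\ref{Formula_1}). The bias $\boldsymbol{\xi}^{\left( L+\Delta\right) }$ already belongs to $\mathcal{D}^{L+\Delta}(\mathbb{Z}_{p})$ and contributes its own coefficients. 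Because $\mathcal{D}^{L+\Delta}(\mathbb{Z}_{p})$ is a vector space with the linearly independent basis $\{\Omega(p^{L+\Delta}\left\vert x-I\right\vert _{p})\}_{I\in G_{L+\Delta}}$, the three pieces add, so $\boldsymbol{h}^{\left( L+\Delta\right) }\in\mathcal{D}^{L+\Delta}(\mathbb{Z}_{p})$, and reading off the coefficient at each $I\in G_{L+\Delta}$ yields (\ref{Formula_1}).

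For part (ii) I would apply the same scheme to the output $\int_{\mathbb{Z}_{p}}\boldsymbol{W}_{\text{out}}^{\left( L+\Delta\right) }\left( x,y\right) \varphi\left( \boldsymbol{h}^{\left( L+\Delta\right) }(y)\right) dy$, now invoking part (i) to know $\boldsymbol{h}^{\left( L+\Delta\right) }\in\mathcal{D}^{L+\Delta}(\mathbb{Z}_{p})$. The new feature is that the kernel and the post-activation sit at the \emph{same} level $L+\Delta$. This is the specialization of the computation behind Lemma \ref{Lemma-1A} to the case where the post-activation lives at the level of the kernel: writing $\boldsymbol{h}^{\left( L+\Delta\right) }=\sum_{J\in G_{L+\Delta}}\boldsymbol{h}^{\left( L+\Delta\right) }(J)\,\Omega(p^{L+\Delta}\left\vert y-J\right\vert _{p})$, the basis functions have pairwise disjoint supports, the projection $\Lambda_{L+\Delta,L+\Delta}$ is the identity, so the refinement sum collapses to a single term; using $\int_{\mathbb{Z}_{p}}\Omega(p^{L+\Delta}\left\vert y-K\right\vert _{p})\,dy=p^{-L-\Delta}$ and adding $\boldsymbol{\xi}_{\text{out}}^{\left( L+\Delta\right) }$ gives the stated formula, with $\boldsymbol{y}^{\left( L+\Delta\right) }\in\mathcal{D}^{L+\Delta}(\mathbb{Z}_{p})$.

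I expect no serious obstacle: the whole argument is a careful bookkeeping of levels followed by a linear superposition of the three terms. The single point demanding attention is level-matching — employing the coarse-to-fine refinement of Lemma \ref{Lemma-2A} for the level-$L$ input term while using the direct same-level integration for the output — and checking that every contribution genuinely lands in the one common space $\mathcal{D}^{L+\Delta}(\mathbb{Z}_{p})$ before the basis coefficients are compared.
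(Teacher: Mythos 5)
Your proposal is correct and follows essentially the same route as the paper, which derives the lemma directly from Corollary \ref{Cor-1} (for the interaction term) and Lemma \ref{Lemma-2A} (for the input term), with the bias already at level $L+\Delta$. Your explicit treatment of part (ii) — noting that the kernel and activation sit at the \emph{same} level, so the refinement sum in the Lemma \ref{Lemma-1A} computation collapses because $\Lambda_{L+\Delta,L+\Delta}$ is the identity — is a detail the paper leaves implicit, but it is the right observation and not a different argument.
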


The parameter $\Delta$ is the depth of the network.

\begin{remark}
We fix $L>1$, $\Delta>0$, and%
\[
K=K_{0}+\ldots+K_{L-1}p^{L-1}+K_{L}p^{L}+K_{L+1}p^{L+1}+\ldots+K_{L+\Delta
-1}p^{L+\Delta-1}\in G_{L+\Delta}.
\]
Then, for any function $F:G_{L+\Delta}\rightarrow\mathbb{R}$, with
\[
F(K)=F(K_{0},\ldots,K_{L-1},K_{L},\ldots,K_{L+\Delta-1}),
\]
it verifies that%
\[%
{\displaystyle\sum\limits_{K\in G_{L+\Delta}}}
F(K)=%
{\displaystyle\sum\limits_{K_{L+\Delta-1}=1}^{p-1}}
\text{ }%
{\displaystyle\sum\limits_{\Lambda_{L+\Delta,L+\Delta-1}\left(  K\right)  \in
G_{L+\Delta-1}}}
F(K_{L+\Delta-1},\Lambda_{L+\Delta,L+\Delta-1}\left(  K\right)  ).
\]
Then, by induction, one gets that%
\begin{align}%
{\displaystyle\sum\limits_{K\in G_{L+\Delta}}}
F(K)  &  =\label{Formula_2}\\
&
{\displaystyle\sum\limits_{K_{L+\Delta-1}=1}^{p-1}}
\text{\ }\cdots%
{\displaystyle\sum\limits_{K_{L}=1}^{p-1}}
\text{ \ }%
{\displaystyle\sum\limits_{\Lambda_{L+\Delta,L}\left(  K\right)  \in G_{L}}}
F(K_{L+\Delta-1},\ldots,K_{L},\Lambda_{L+\Delta,L}\left(  K\right)
).\nonumber
\end{align}

Assume that $F(\Lambda_{L+\Delta,L}\left(  K\right)  ):G_{L}\rightarrow
\mathbb{R}$ is defined. Then, (\ref{Formula_2}) shows that $F\left(  K\right)
:G_{L+\Delta}\rightarrow\mathbb{R}$ can be computed using a tree-like
structure with $\Delta$ layers.
\end{remark}

The above Remark and formula (\ref{Formula_1}) show that the pre-activation
$\boldsymbol{h}^{\left(  L+\Delta\right)  }\left(  I\right)  $ is computed by
a tree-like structure with $\Delta$\ layers. We warn the reader that
infinitely many discretizations of a $p$-adic continuous DNN are possible. For
instance, we may assume that the pre-activations have the form
(\ref{Pre-activation}). Then, by using Lemma \ref{Lemma-1A}, in principle, a
more general DNN is obtained. However, taking $l=L+\Delta$ in Lemma
\ref{Lemma-1A}, pre-activation $\boldsymbol{h}(y)\in\mathcal{D}^{L+A}\left(
\mathbb{Z}_{p}\right)  $ (see (\ref{Pre-activation})), and thus, the mentioned
DNN is just the one given in Definition \ref{Definition-3}. Furthermore, in
\cite{Non-ARCH-DNNs}, the author showed that $p$-adic discrete DNNs\ are
universal approximators.

\section{\label{Section_6}The $p$-adic DNNs are a universal architecture}

\subsection{\label{Section_DNNs}DNNs and RNNs}

We follow the notation and definitions given in \cite[Section 2.2]{Segadlo et
al}. Deep feedforward neural networks (DNNs) and time-recurrent neural
networks (RNNs) can be both described by a set of pre-activations
$\boldsymbol{h}^{\left(  l\right)  }\in\mathbb{R}^{n_{l}}$ that are governed
by an affine transformations of the form%
\begin{equation}
\boldsymbol{h}^{\left(  l\right)  }=\boldsymbol{W}^{\left(  l\right)  }%
\phi\left(  \boldsymbol{h}^{\left(  l-1\right)  }\right)  +\boldsymbol{W}%
_{\text{in}}^{\left(  l\right)  }\boldsymbol{x}^{\left(  l\right)
}+\boldsymbol{\xi}^{\left(  l\right)  }\text{, }l=1,\ldots,\Delta\text{,}
\label{Net-1}%
\end{equation}
where $\phi\left(  \boldsymbol{h}^{\left(  l-1\right)  }\right)  \in
\mathbb{R}^{n_{l-1}}$ are the activations. The pre-activations are transformed
by an activation function $\phi:\mathbb{R}\rightarrow\mathbb{R}$, which is
applied element-wise to the vectors. For DNNs, $\boldsymbol{W}^{\left(
l\right)  }=\left[  W_{i,k}^{\left(  l\right)  }\right]  _{n_{l}\times
n_{l-1}}$ denotes the weight matrix from layer $l-1$ to layer $l$, and
$\boldsymbol{\xi}^{\left(  l\right)  }\in\mathbb{R}^{n_{l}}$ are the biases in
layer $l$. The inputs $\boldsymbol{x}^{\left(  l\right)  }\in\mathbb{R}%
^{n_{in}}$ are typically only applied to the first layer, in such case the
matrices $\boldsymbol{W}_{\text{in}}^{\left(  l\right)  }=\left[  \left(
W_{\text{in}}^{\left(  l\right)  }\right)  _{i,k}\right]  _{n_{l}\times
n_{in}}$ are zero for $l\geq1$. For RNNs, the index $l$ denotes a time step.
The weigh matrix, input matrix, and biases are static overt time:
$\boldsymbol{W}^{\left(  l\right)  }=\boldsymbol{W}$, $\boldsymbol{W}%
_{\text{in}}^{\left(  l\right)  }=\boldsymbol{W}_{\text{in}}$,
$\boldsymbol{\xi}^{\left(  l\right)  }=\boldsymbol{\xi}$. In \cite[Section
2.2]{Segadlo et al}, the authors include an additional input and output
layers:%
\begin{equation}
\boldsymbol{h}^{\left(  0\right)  }=\boldsymbol{W}_{\text{in}}^{\left(
0\right)  }\boldsymbol{x}^{\left(  0\right)  }+\boldsymbol{\xi}^{\left(
0\right)  }, \label{Net-2}%
\end{equation}%
\begin{equation}
\boldsymbol{y}=\boldsymbol{W}_{\text{out}}\phi\left(  \boldsymbol{h}^{\left(
\Delta\right)  }\right)  +\boldsymbol{\xi}^{\left(  \Delta+1\right)  },
\label{Net-3}%
\end{equation}
where $\boldsymbol{W}_{\text{out}}=\left[  \left(  W_{\text{out}}\right)
_{i.k}\right]  _{n_{\text{out}}\times n_{\Delta}}$. This choice allows us to
set independent input and output sizes. $\Delta$ denotes the final layer for
DNNs, and the final step for RNNs. Here, for the sake of simplicity, we
consider that $\boldsymbol{h}^{\left(  0\right)  }=\boldsymbol{x}^{\left(
0\right)  }$, and set $\boldsymbol{\xi}^{\left(  \Delta+1\right)
}=\boldsymbol{\xi}_{\text{out}}^{\left(  \Delta\right)  }$. From now on, we
use the term DNN (deep neural network) to mean a neural network defined as
(\ref{Net-1})-(\ref{Net-3}).

\subsection{Matrix form of the $p$-adic DNNs}

We set%
\begin{align*}
\boldsymbol{h}^{\left[  L+\Delta\right]  }  &  :=\left[  \boldsymbol{h}%
^{\left(  L+\Delta\right)  }\left(  K\right)  \right]  _{K\in G_{L+\Delta}}%
\in\mathbb{R}^{p^{L+\Delta}}\text{, }\\
\underline{\boldsymbol{h}}^{\left[  L+\Delta-1\right]  }  &  :=\left[
\boldsymbol{h}^{\left(  L+\Delta-1\right)  }\left(  \Lambda_{L+\Delta
,L+\Delta-1}\left(  K\right)  \right)  \right]  _{K\in G_{L+\Delta}}%
\in\mathbb{R}^{p^{L+\Delta}}\text{, }%
\end{align*}%
\[
\boldsymbol{x}^{\left[  L+\Delta\right]  }=\left[  \boldsymbol{x}^{\left(
L\right)  }\left(  \Lambda_{L+\Delta,L}\left(  K\right)  \right)  \right]
_{K\in G_{L+\Delta}}\in\mathbb{R}^{p^{L+\Delta}},
\]%
\[
\boldsymbol{y}^{\left[  L+\Delta\right]  }:=\left[  \boldsymbol{y}^{\left(
L+\Delta\right)  }\left(  I\right)  \right]  _{I\in G_{L+\Delta}}\in
\mathbb{R}^{p^{L+\Delta}},
\]%
\[
\boldsymbol{\xi}^{\left[  L+\Delta\right]  }:=\left[  \boldsymbol{\xi
}^{\left(  L+\Delta\right)  }(I)\right]  _{I\in G_{L+\Delta}}\text{,
}\boldsymbol{\xi}_{\text{out}}^{\left[  L+\Delta\right]  }:=\left[
\boldsymbol{\xi}_{\text{out}}^{\left(  L+\Delta\right)  }\left(  I\right)
\right]  _{I\in G_{L+\Delta}}\in\mathbb{R}^{p^{L+\Delta}}\text{,}%
\]
and the matrices
\begin{align}
\boldsymbol{W}^{\left[  L+\Delta\right]  }  &  :=\left[  p^{-L-\Delta
}\boldsymbol{W}\left(  I,K\right)  \right]  _{I,K\in G_{l+\Delta}}%
\text{,}\label{W_matrices}\\
\boldsymbol{W}_{\text{in}}^{\left[  L+\Delta\right]  }  &  :=\left[
p^{-L-\Delta}\boldsymbol{W}_{\text{in}}^{\left(  L\right)  }\left(
\Lambda_{L+\Delta,L}\left(  I\right)  ,\Lambda_{L+\Delta,L}\left(  K\right)
\right)  \right]  _{I,K\in G_{L+\Delta}}\text{, }\nonumber\\
\boldsymbol{W}_{\text{out}}^{\left[  L+\Delta\right]  }  &  :=\left[
p^{-L-\Delta}\boldsymbol{W}_{\text{out}}^{\left(  L+\Delta\right)  }\left(
I,K\right)  \right]  _{I,K\in G_{L+\Delta}}.\nonumber
\end{align}
For $L$, $\Delta$ fixed, the scale factor $p^{-L-\Delta}$ in (\ref{W_matrices}%
) can be omitted by redefining the entries in a convenient form. However, this
factor plays a crucial role when considering $\Delta\rightarrow\infty$. Notice
that $\boldsymbol{h}^{\left[  L+\Delta-1\right]  }\neq\underline
{\boldsymbol{h}}^{\left[  L+\Delta-1\right]  }$. This is an important
difference with the standard case. $\underline{\boldsymbol{h}}^{\left[
L+\Delta-1\right]  }$ denotes a pre-activation vector at the layer $L+\Delta$
obtained from the pre-activation vector $\boldsymbol{h}^{\left[
L+\Delta-1\right]  }$ at layer $L+\Delta-1$. With the above notation, the
$p$-adic discrete DNN can be rewritten as
\begin{equation}
\left\{
\begin{array}
[c]{l}%
\boldsymbol{x}^{\left[  L\right]  }\text{ (input)}\\
\\
\boldsymbol{h}^{\left[  L+\Delta\right]  }=\boldsymbol{W}^{\left[
L+\Delta\right]  }\phi\left(  \underline{\boldsymbol{h}}^{\left[
L+\Delta-1\right]  }\right)  +\boldsymbol{W}_{\text{in}}^{\left[
L+\Delta\right]  }\boldsymbol{x}^{\left[  L+\Delta\right]  }+\boldsymbol{\xi
}^{\left[  L+\Delta\right]  }\text{ (pre-activations)}\\
\\
\boldsymbol{y}^{\left[  L+\Delta\right]  }=\boldsymbol{W}_{\text{out}%
}^{\left[  L+\Delta\right]  }\varphi\left(  \boldsymbol{h}^{\left[
L+\Delta\right]  }\right)  +\boldsymbol{\xi}_{\text{out}}^{\left[
L+\Delta\right]  }\text{ \ (output).}%
\end{array}
\right.  \label{Eq_MLP_Matrix}%
\end{equation}
The parameter $\Delta$ is the network's depth. The equation for
$\boldsymbol{y}^{\left[  L+\Delta\right]  }$ in (\ref{Eq_MLP_Matrix}) is a
particular case of the equation for $\boldsymbol{h}^{\left[  L+\Delta\right]
}$. For this reason, it is possible to consider that $\boldsymbol{h}^{\left[
L+\Delta+1\right]  }$ is the network's output. This approach is used in
\cite{Roberts et al}, whereas in \cite{Segadlo et al}, the authors introduced
an extra layer for computing the output. (\ref{Eq_MLP_Matrix}) gives exactly
DNNs considered in \cite{Segadlo et al}. In this paper, \ the output is
computed in an extra layer (i.e., they use $\boldsymbol{y}^{\left[
L+\Delta+1\right]  }$ instead of $\boldsymbol{y}^{\left[  L+\Delta\right]  }%
$), here we prefer using\ the notation $\boldsymbol{\xi}_{\text{out}}^{\left[
L+\Delta\right]  }$ instead of $\boldsymbol{\xi}^{\left[  L+\Delta+1\right]
}$, and $\boldsymbol{W}_{\text{out}}^{\left[  L+\Delta\right]  }$ instead of
$\boldsymbol{W}^{\left[  L+\Delta+1\right]  }$.

\subsection{The $p$-adic tree-like structures are universal architectures}

In principle, the $p$-adic DNNs (see (\ref{Eq_MLP_Matrix})) constitute a
particular class of the networks considered in \cite{Segadlo et al}%
-\cite{Roberts et al}. The purpose of this section is to show that given a DNN
of type (\ref{Net-1})-(\ref{Net-3}), it can be recast in the form
(\ref{Eq_MLP_Matrix}) without increasing the number of parameters. This goal
is achieved through an algorithm.

For the sake of simplicity, we consider DNNs of the form%
\begin{equation}
\left[  h_{i}^{\left(  l\right)  }\right]  _{n_{l}\times1}=\left[
W_{i,k}^{\left(  l\right)  }\right]  _{n_{l}\times n_{l-1}}\left[  \phi\left(
h_{i}^{\left(  l-1\right)  }\right)  \right]  _{n_{l-1}\times1}+\left[
\xi_{i}^{\left(  l\right)  }\right]  _{n_{l}\times1}, \label{System-1}%
\end{equation}
where $l=L+1,\ldots,L+\Delta$. The input vector is $\left[  h_{i}^{\left(
L\right)  }\right]  _{n_{L}\times1}$, and the output vector is $\left[
h_{i}^{\left(  L+\Delta\right)  }\right]  _{n_{L+\Delta}\times1}$. The network
has $\Delta$\ layers. The parameters $L$, $\Delta$ are fixed.

We set $I_{n_{l}}:=\left\{  1,\ldots,n_{l}\right\}  $, for the set of
indices\ of the neurons at layer $l=L,L+1,\ldots,L+\Delta$. The notation
$\left[  W_{i,k}^{\left(  l\right)  }\right]  _{n_{l}\times n_{l-1}}$,
$\left[  h_{i}^{\left(  l\right)  }\right]  _{n_{l}\times1}$ implies that
$i\in I_{n_{l}}$, and $k\in I_{n_{l-1}}$. We show that system (\ref{System-1})
can be rewritten as
\begin{equation}
\boldsymbol{h}^{\left[  L+\Delta\right]  }=\boldsymbol{W}^{\left[
L+\Delta\right]  }\phi\left(  \underline{\boldsymbol{h}}^{\left[
L+\Delta-1\right]  }\right)  +\boldsymbol{\xi}^{\left[  L+\Delta\right]
}\text{,} \label{System-2}%
\end{equation}
where the matrix and vectors involved depend on the matrix and vectors in
system (\ref{System-1}), and additive groups $G_{S+j}=\mathbb{Z}_{p}%
/p^{S+j}\mathbb{Z}_{p}$ that we introduce later.

We begin by recalling the notion of direct sum of matrices. Given two
matrices, $\boldsymbol{A}$ of size $l\times t$, and $\boldsymbol{B}$ of size
$s\times q$, the direct sum $\boldsymbol{A}%
{\textstyle\bigoplus}
\boldsymbol{B}$ is the matrix of size $\left(  l+s\right)  \times\left(
t+q\right)  $ defined as%
\[
\boldsymbol{A}%
{\textstyle\bigoplus}
\boldsymbol{B}=\left[
\begin{array}
[c]{ll}%
\boldsymbol{A} & \boldsymbol{0}\\
\boldsymbol{0} & \boldsymbol{B}%
\end{array}
\right]  .
\]
We set%
\begin{multline*}
\boldsymbol{W}^{\left[  L+\Delta\right]  }:=\left[  W_{i,k}^{\left(
L+\Delta\right)  }\right]  _{n_{L+\Delta}\times n_{L+\Delta-1}}%
{\textstyle\bigoplus}
\left[  W_{i,k}^{\left(  L+\Delta-1\right)  }\right]  _{n_{L+\Delta-1}\times
n_{L+\Delta-2}}%
{\textstyle\bigoplus}
\cdots\\
\\%
{\textstyle\bigoplus}
\left[  W_{i,k}^{\left(  L+1\right)  }\right]  _{n_{L+1}\times n_{L}},
\end{multline*}
which is a matrix of size%
\[
m\times n:=\left(
{\displaystyle\sum\limits_{l=L+1}^{L+\Delta}}
n_{l}\right)  \times\left(
{\displaystyle\sum\limits_{l=L}^{L+\Delta-1}}
n_{l}\right)  .
\]
We now introduce the vectors%
\[
\left[  h^{\left[  L+\Delta\right]  }\right]  _{m\times1}:=\left[
\begin{array}
[c]{l}%
h_{1}^{\left(  L+\Delta\right)  }\\
\vdots\\
h_{n_{L+\Delta}}^{\left(  L+\Delta\right)  }\\
\vdots\\
h_{1}^{\left(  L+1\right)  }\\
\vdots\\
h_{n_{L+1}}^{\left(  L+1\right)  }%
\end{array}
\right]  ,\text{ }\left[  \phi\left(  \underline{h}^{\left[  L+\Delta
-1\right]  }\right)  \right]  _{n\times1}:=\left[
\begin{array}
[c]{l}%
\phi\left(  h_{1}^{\left(  L+\Delta-1\right)  }\right) \\
\vdots\\
\phi\left(  h_{n_{L+\Delta-1}}^{\left(  L+\Delta-1\right)  }\right) \\
\vdots\\
\phi\left(  h_{1}^{\left(  L\right)  }\right) \\
\vdots\\
\phi\left(  h_{n_{L}}^{\left(  L\right)  }\right)
\end{array}
\right]  ,
\]
and%
\[
\left[  \xi^{\left[  L+\Delta\right]  }\right]  _{m\times1}:=\left[
\begin{array}
[c]{l}%
\xi_{1}^{\left(  L+\Delta\right)  }\\
\vdots\\
\xi_{n_{L+\Delta}}^{\left(  L+\Delta\right)  }\\
\vdots\\
\xi_{1}^{\left(  L+1\right)  }\\
\vdots\\
\xi_{n_{L+1}}^{\left(  L+1\right)  }%
\end{array}
\right]  .
\]
With this notation, we rewrite (\ref{System-1}) \ as%
\begin{equation}
\left[  h^{\left[  L+\Delta\right]  }\right]  _{m\times1}=\boldsymbol{W}%
_{m\times n}^{\left[  L+\Delta\right]  }\left[  \phi\left(  \underline
{h}^{\left[  L+\Delta\right]  }\right)  \right]  _{n\times1}+\left[
\xi^{\left[  L+\Delta\right]  }\right]  _{m\times1}. \label{System-3}%
\end{equation}
Therefore, without loss of generality, we may assume that in (\ref{System-1}),
$\Delta=1$, i.e. the DNN ha only one layer ($L+1$):%
\[
\left[  h_{i}^{\left(  L+1\right)  }\right]  _{n_{L+1}\times1}=\left[
W_{i,k}^{\left(  L+1\right)  }\right]  _{n_{L+1}\times n_{L}}\left[
\phi\left(  h_{i}^{\left(  L\right)  }\right)  \right]  _{n_{L}\times
1}+\left[  \xi_{i}^{\left(  L+1\right)  }\right]  _{n_{L+1}\times1},
\]
where%
\[
I_{n_{L}}:=\left\{  1,\ldots,n_{L}\right\}  \text{, \ }I_{n_{L+1}}:=\left\{
1,\ldots,n_{L+1}\right\}
\]
are the indices of the neurons at the input layer and output layer, respectively.

In the next step, we parametrize the neuron indices using $p$-adic numbers. We
pick a prime number $p>\max\left\{  n_{L},n_{L+1}\right\}  $, and fix
one-to-one mappings $\mathfrak{I}_{n_{l}}:I_{n_{l}}\rightarrow G_{l-L+1}$,
that give a parametrization of the indices of the neurons at the layers $l=L$,
$L+1$. The map $\mathfrak{I}_{n_{L}}$ is defined to be the inclusion from
$\left\{  1,\ldots,n_{L}\right\}  $ into $\left\{  0,1,\ldots,p-1\right\}
=G_{1}$. For the map $\mathfrak{I}_{n_{L+1}}$, we take any one-to-one mapping
from $\left\{  1,\ldots,n_{L+1}\right\}  $ into $G_{2}\smallsetminus\left\{
0\right\}  $ of the form%

\[%
\begin{array}
[c]{llll}%
\mathfrak{I}_{n_{L+1}}: & I_{n_{L+1}} & \rightarrow & G_{2}\\
&  &  & \\
& k & \rightarrow & K=K_{0}+K_{1}p,\text{ \ with }K_{0}\text{, }K_{1}\neq0.
\end{array}
\]
In this way, we have a one-to-one mapping $\mathfrak{I}:I_{n_{L}}%
{\displaystyle\bigsqcup}
$ \ $I_{n_{L+1}}\rightarrow G_{2}$, where $\left.  \mathfrak{I}\right\vert
_{I_{n_{L}}}=\mathfrak{I}_{n_{L}}$ and $\left.  \mathfrak{I}\right\vert
_{I_{n_{L}+1}}=\mathfrak{I}_{n_{L+1}}$. Notice that the range of
$\mathfrak{I}$ is \textrm{Rang}$\mathfrak{I=}$\textrm{Rang}$\mathfrak{I}%
_{n_{L}}%
{\displaystyle\bigsqcup}
$\textrm{Rang}$\mathfrak{I}_{n_{L+1}}$. \ We now define the matrix
$\boldsymbol{W}^{\left[  L+1\right]  }=\left[  W_{I,K}^{\left(  L+1\right)
}\right]  _{I,K\in G_{2}}$ as%
\[
W_{I,K}^{\left(  L+1\right)  }=\left\{
\begin{array}
[c]{lll}%
W_{i,k}^{\left(  L+1\right)  } &  & \text{if }I=\mathfrak{I}_{n_{L+1}}\left(
i\right)  \text{ and }K=\mathfrak{I}_{n_{L}}\left(  k\right) \\
&  & \\
0 &  & \text{if }I\notin\mathrm{Rang}\mathfrak{I}_{n_{L+1}}\text{ or }%
K\notin\mathrm{Rang}\mathfrak{I}_{n_{L}},
\end{array}
\right.
\]
and the vectors \ $\boldsymbol{h}^{\left[  L+1\right]  }=\left[
\boldsymbol{h}_{I}^{(L+1)}\right]  _{I\in G_{2}}$, $\boldsymbol{\xi}^{\left[
L+1\right]  }$ $=\left[  \boldsymbol{\xi}_{I}^{(L+1)}\right]  _{I\in G_{2}}$
as%
\[
\boldsymbol{h}_{I}^{(L+1)}=\left\{
\begin{array}
[c]{lll}%
h_{i}^{\left(  L+1\right)  } &  & \text{if }I=\mathfrak{I}_{n_{L+1}}\left(
i\right)  \text{ }\\
&  & \\
0 &  & \text{otherwise}%
\end{array}
\right.  \text{ and }\boldsymbol{\xi}_{I}^{(L+1)}=\left\{
\begin{array}
[c]{lll}%
\xi_{i}^{\left(  L+1\right)  } &  & \text{if }I=\mathfrak{I}_{n_{L+1}}\left(
i\right)  \text{ }\\
&  & \\
0 &  & \text{otherwise},
\end{array}
\right.
\]
and finally, the vector $\underline{\boldsymbol{h}}^{\left[  L\right]
}=\left[  \underline{\boldsymbol{h}}_{I}^{(L+1)}\right]  _{I\in G_{2}}$, where%
\[
\underline{\boldsymbol{h}_{I}}^{(L+1)}=\left\{
\begin{array}
[c]{lll}%
\boldsymbol{h}_{\Lambda_{L+1,L}(I)}^{(L+1)} &  & \text{if }I=\mathfrak{I}%
_{n_{L+1}}\left(  i\right)  \text{ }\\
&  & \\
0 &  & \text{otherwise.}%
\end{array}
\right.
\]

In this way, we show that any system of form (\ref{System-3}) can be recast in
form (\ref{System-2}). We state this fact by saying that the $p$-adic discrete
DNNs are a universal architecture.

\begin{remark}
The fundamental implication of the above construction is that Theorem
\ref{Theorem_A} also cover the DNNs considered in \cite{Segadlo et
al}-\cite{Roberts et al}.
\end{remark}

\subsection{Further comments}

A system of type (\ref{System-1}) is described by the matrices%
\[
\left[  W_{i,k}^{\left(  L+j\right)  }\right]  _{n_{L+j}\times n_{L+j-1}%
},\left[  h_{i}^{\left(  L+j\right)  }\right]  _{n_{L+j}\times1},\left[
\xi_{i}^{\left(  L+j\right)  }\right]  _{n_{L+j}\times1},
\]
where $j$ runs throughout a countable set. The above reasoning shows that
$W_{i,k}^{\left(  L+j\right)  }=W(I,K)$, $h_{i}^{\left(  L+j\right)  }=h(I)$,
$\xi_{i}^{\left(  L+j\right)  }=\xi\left(  I\right)  $, where $I,K$ are
integers in base $p$. Without loss of generality, we may assume that
$I,K\in\mathbb{Z}\subset\mathbb{Z}_{p}$, by taking $W(I,K)=0$, $h(I)=0$,
$\xi\left(  I\right)  =0$, when necessary. The key fact is that $\mathbb{Z}$
is dense in $\mathbb{Z}_{p}$, and thus there are continuos functions
$\boldsymbol{W}:\mathbb{Z}_{p}\times\mathbb{Z}_{p}\rightarrow\mathbb{R}$,
$h:\mathbb{Z}_{p}\rightarrow\mathbb{R}$, $\xi:\mathbb{Z}_{p}\rightarrow
\mathbb{R}$, that interpolate $W_{i,k}^{\left(  L+j\right)  }$, $h_{i}%
^{\left(  L+j\right)  }$, $\xi_{i}^{\left(  L+j\right)  }$.

By using $p$-adic integers, with a finite number of digits, to number the
neurons, we obtain, by default,\ a notion of layer. If $I\in G_{m}%
\smallsetminus G_{m-1}$, $K\in G_{l}\smallsetminus G_{l-1}$, with $m>l\geq1$,
and%
\[
I=K+I_{l}p^{l}+\ldots+I_{m-1}p^{m-1}\text{.}%
\]
This last condition means that $I$ is a descendant of $K$, so the $p$-adic
topology provides a \textquotedblleft default connection\textquotedblright%
\ between the neurons located at positions $I$ and $K$. This connection is
activated if $W(I,K)\neq0$.

For $L$, $\Delta$ fixed, there is no difference between the systems
(\ref{System-1}) and (\ref{System-2}); however, the understanding of the DNNs
at limit $\Delta\rightarrow\infty$ requires a convenient topology which
naturally appears if we use presentation (\ref{System-2}).

\section{Critical organization of $p$-adic DNNs}

We denote by \textrm{Lip}$_{0}$\textrm{(}$\mathbb{R}$\textrm{)}, the
$\mathbb{R}$-vector space of Lipschitz functions $\omega:\mathbb{R\rightarrow
R}$, satisfying%
\[
\left\Vert \omega\right\Vert _{\mathrm{Lip}_{0}}:=L_{\omega}=\sup_{s\neq
0}\left\vert \frac{\omega\left(  s\right)  }{s}\right\vert <\infty\text{, and
}\omega\left(  0\right)  =0.
\]
Then, $\omega\rightarrow\left\Vert \omega\right\Vert _{\mathrm{Lip}_{0}}$ is a
norm, and \textrm{Lip}$_{0}\mathrm{(}\mathbb{R}\mathrm{)}$ is a Banach space.
For an in-depth discussion on Lipschitz functions, the reader may consult
\cite[Chapter 1]{Weaver}.

The following is the main result of this paper.

\begin{theorem}
\label{Theorem_A}(i) Fix $p$, and consider a $p$-adic continuous DNN with
parameters%
\[
\boldsymbol{\theta}^{\prime}=\left\{  \phi,\varphi,\boldsymbol{W}_{\text{in}%
},\boldsymbol{W}_{\text{out}},\boldsymbol{W},\boldsymbol{\xi},\boldsymbol{\xi
}_{\text{out}}\right\}  ,
\]
where $\boldsymbol{W}_{\text{in}}$, $\boldsymbol{W}_{\text{out}}$,
$\boldsymbol{W}\in\mathcal{D}\left(  \mathbb{Z}_{p}\times\mathbb{Z}%
_{p}\right)  $, $\boldsymbol{\xi}$, $\boldsymbol{\xi}_{\text{out}}%
\in\mathcal{D}\left(  \mathbb{Z}_{p}\right)  $, $\phi$, $\varphi$ are
activation functions, and $\boldsymbol{x}\in\mathcal{D}\left(  \mathbb{Z}%
_{p}\right)  $ as in Definition \ref{Definition_DNN}. Set%
\begin{equation}
\mathcal{X}:=\left[  \mathrm{Lip}_{0}\mathrm{(}\mathbb{R}\mathrm{)}\right]
^{2}\times\left[  \mathcal{D}\left(  \mathbb{Z}_{p}\times\mathbb{Z}%
_{p}\right)  \right]  ^{3}\times\left[  \mathcal{D}\left(  \mathbb{Z}%
_{p}\right)  \right]  ^{2}, \label{Parameter-Space}%
\end{equation}
for the parameter space, i.e., $\boldsymbol{\theta}^{\prime}\in\mathcal{X}$.
We consider $\mathcal{X}$ as subspace of
\[
\left[  \mathrm{Lip}_{0}\mathrm{(}\mathbb{R}\mathrm{)}\right]  ^{2}%
\times\left[  L^{2}\left(  \mathbb{Z}_{p}\times\mathbb{Z}_{p}\right)  \right]
^{3}\times\left[  L^{2}\left(  \mathbb{Z}_{p}\right)  \right]  ^{2},
\]
with the product topology. Set
\[
\mathcal{X}_{\text{stable}}=\left\{  \boldsymbol{\theta}^{\prime}%
\in\mathcal{X};0<L_{\phi}\left\Vert \boldsymbol{W}\right\Vert _{2}<1\right\}
.
\]
Then, for any $\boldsymbol{\theta}^{\prime}\in\mathcal{X}_{\text{stable}}$,
there exists a unique hidden state in $L^{2}\left(  \mathbb{Z}_{p}\right)  $
determined by%
\[
\boldsymbol{h}\left(  x\right)  =\int\limits_{\mathbb{Z}_{p}}\boldsymbol{W}%
\left(  x,y\right)  \phi\left(  \boldsymbol{h}(y)\right)  dy+\int
\limits_{\mathbb{Z}_{p}}\boldsymbol{W}_{\text{in}}\left(  x,y\right)
\boldsymbol{x}\left(  y\right)  dy+\boldsymbol{\xi}\left(  x\right)  ,
\]
which depends continuously on the parameters $\boldsymbol{\theta}$, and on the
input $\boldsymbol{x}\in\mathcal{D}\left(  \mathbb{Z}_{p}\right)  \subset
L^{2}\left(  \mathbb{Z}_{p}\right)  $.

(ii) Given $\epsilon>0$ there exists a $p$-adic discrete DNN with parameters
\[
\boldsymbol{\theta}=\left\{  L,\Delta,\phi,\varphi,\boldsymbol{W}_{\text{in}%
}^{\left(  L\right)  },\boldsymbol{W}_{\text{out}}^{\left(  L+\Delta\right)
},\boldsymbol{W}^{\left(  L+\Delta\right)  },\boldsymbol{\xi}^{\left(
L+\Delta\right)  },\boldsymbol{\xi}_{\text{out}}^{\left(  L+\Delta\right)
}\right\}  ,
\]
where $L$, $\Delta\geq1$ as in Definition \ref{Definition-3}, such that%
\[
\left\Vert \boldsymbol{h}^{\left(  L+\Delta\right)  }-\boldsymbol{h}%
\right\Vert _{2}<\epsilon,
\]
with $\boldsymbol{h}^{\left(  L+\Delta\right)  }\left(  x\right)
\in\mathcal{D}^{L+\Delta}\left(  \mathbb{Z}_{p}\right)  $ defined as in
(\ref{Formula_1}).
\end{theorem}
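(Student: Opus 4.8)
The plan is to treat part (i) as a \emph{uniform contraction} argument built on Proposition \ref{Prop-1}, and part (ii) as the statement that the Picard iteration for that contraction is realized exactly by the discrete networks of Definition \ref{Definition-3}. For part (i), existence and uniqueness of $\boldsymbol{h}$ for each fixed $\boldsymbol{\theta}^{\prime}\in\mathcal{X}_{\text{stable}}$ is immediate from Proposition \ref{Prop-1}, recalling that $L_{\phi}=\|\phi\|_{\infty}$, so the contraction constant of the solution map $\boldsymbol{T}_{\boldsymbol{\theta}^{\prime}}$ is $k(\boldsymbol{\theta}^{\prime})=\|\phi\|_{\infty}\|\boldsymbol{W}\|_{2}\in(0,1)$. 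To obtain continuous dependence I would fix $\boldsymbol{\theta}_{0}^{\prime}\in\mathcal{X}_{\text{stable}}$ and use that $\boldsymbol{\theta}^{\prime}\mapsto\|\phi\|_{\infty}\|\boldsymbol{W}\|_{2}$ is continuous on the open set $\mathcal{X}_{\text{stable}}$ to choose a neighborhood on which $k(\boldsymbol{\theta}^{\prime})\le k_{0}<1$ uniformly. Writing $\boldsymbol{h}_{i}=\boldsymbol{T}_{\boldsymbol{\theta}_{i}^{\prime}}\boldsymbol{h}_{i}$ for two fixed points and inserting $\boldsymbol{T}_{\boldsymbol{\theta}_{1}^{\prime}}\boldsymbol{h}_{2}$, the triangle inequality yields $\|\boldsymbol{h}_{1}-\boldsymbol{h}_{2}\|_{2}\le(1-k_{0})^{-1}\|\boldsymbol{T}_{\boldsymbol{\theta}_{1}^{\prime}}\boldsymbol{h}_{2}-\boldsymbol{T}_{\boldsymbol{\theta}_{2}^{\prime}}\boldsymbol{h}_{2}\|_{2}$, so it remains to control, for the \emph{fixed} argument $\boldsymbol{g}=\boldsymbol{h}_{2}$, how $\boldsymbol{T}_{\boldsymbol{\theta}^{\prime}}\boldsymbol{g}$ depends on the parameters.

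That dependence splits into a weight term, an activation term, an input term, and a bias term. The only nonroutine piece is the dependence on the activation $\phi$ in the $L^{\infty}$-topology: using $\mu(\mathbb{Z}_{p})=1$ one has $\|\phi_{1}\circ\boldsymbol{g}-\phi_{2}\circ\boldsymbol{g}\|_{2}\le\|\phi_{1}-\phi_{2}\|_{\infty}$, whence $\|\int_{\mathbb{Z}_{p}}\boldsymbol{W}_{2}(x,y)(\phi_{1}(\boldsymbol{g}(y))-\phi_{2}(\boldsymbol{g}(y)))\,dy\|_{2}\le\|\boldsymbol{W}_{2}\|_{2}\|\phi_{1}-\phi_{2}\|_{\infty}$; the weight, input and bias differences are handled by the linear bounds of Lemma \ref{Lemma-0A} (applied to $\boldsymbol{W}_{1}-\boldsymbol{W}_{2}$, to $\boldsymbol{W}_{\text{in}}$ and $\boldsymbol{x}$) together with $\|\boldsymbol{\xi}_{1}-\boldsymbol{\xi}_{2}\|_{2}$. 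Since the input enters only through the linear term $\int_{\mathbb{Z}_{p}}\boldsymbol{W}_{\text{in}}\boldsymbol{x}\,dy$, Lemma \ref{Lemma-0A}(iii) gives the (Lipschitz) continuity in $\boldsymbol{x}$ as a special case.

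For part (ii), the key observation is that every parameter in (i) is a \emph{test function}, hence locally constant at some finite level; I would fix $L\ge1$ so that $\boldsymbol{W}$, $\boldsymbol{W}_{\text{in}}$, $\boldsymbol{\xi}$, $\boldsymbol{x}$ all lie in the level-$L$ spaces. Running the Picard iteration $\boldsymbol{h}_{j+1}=\boldsymbol{T}\boldsymbol{h}_{j}$ from $\boldsymbol{h}_{0}=\boldsymbol{x}$, Lemma \ref{Lemma-1A} keeps each $\boldsymbol{h}_{j}$ a test function, and the contraction estimate gives $\|\boldsymbol{h}_{j}-\boldsymbol{h}\|_{2}\le\frac{k^{j}}{1-k}\|\boldsymbol{h}_{1}-\boldsymbol{h}_{0}\|_{2}$; choosing $\Delta$ with $\frac{k^{\Delta}}{1-k}\|\boldsymbol{h}_{1}-\boldsymbol{h}_{0}\|_{2}<\epsilon$ already secures the desired accuracy. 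The remaining task is to \emph{identify} the $\Delta$-th Picard iterate with the hidden state $\boldsymbol{h}^{(L+\Delta)}$ of an honest discrete network: define the layer weights $\boldsymbol{W}^{(L+j)}$, $\boldsymbol{W}_{\text{in}}^{(L)}$ and biases $\boldsymbol{\xi}^{(L+j)}$ to be the level-$(L+j)$ refinements of the continuous parameters, and invoke Corollary \ref{Cor-1}, Lemma \ref{Lemma-2A} and Lemma \ref{Lemma-6} to check that the discrete recursion (\ref{Formula_1}) reproduces $\boldsymbol{T}$ applied to the previous iterate.

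The main obstacle is exactly this bookkeeping. The networks of Definition \ref{Definition-3} raise the representation level by one at every layer (Corollary \ref{Cor-1}: $\mathcal{D}^{l-1}\to\mathcal{D}^{l}$), whereas the Picard iterates of $\boldsymbol{T}$ stay at the fixed level of $\boldsymbol{W}$. The resolution I would use is \emph{refinement invariance}: a level-$L$ test function equals its level-$(L+j)$ refinement as an element of $L^{2}(\mathbb{Z}_{p})$ by identity (\ref{Identity-Basic}), and the integral operators are insensitive to refining the representation of their kernels. Thus, although $\boldsymbol{h}^{(L+\Delta)}$ is \emph{recorded} at level $L+\Delta$, it coincides as an $L^{2}$ function with $\boldsymbol{h}_{\Delta}=\boldsymbol{T}^{\Delta}\boldsymbol{x}$, and the geometric bound yields $\|\boldsymbol{h}^{(L+\Delta)}-\boldsymbol{h}\|_{2}<\epsilon$. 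Finally I would note that refinement preserves the $L^{2}$-norm of a test function, so the discretized $\boldsymbol{W}$ still satisfies $\|\phi\|_{\infty}\|\boldsymbol{W}\|_{2}<1$ and every estimate above remains valid for the discrete network.
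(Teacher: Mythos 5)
Your proposal is correct and follows the same skeleton as the paper (Banach fixed point via Proposition \ref{Prop-1} for existence and uniqueness; Picard iteration realized by the discrete networks of Definition \ref{Definition-3} for part (ii)), but the continuity argument in part (i) is handled by a genuinely different, and in fact tighter, route. The paper verifies sequential continuity of the map $\boldsymbol{\theta}^{\prime}\mapsto\boldsymbol{T}_{\boldsymbol{\theta}^{\prime}}\boldsymbol{h}$ with the argument $\boldsymbol{h}$ frozen, parameter by parameter, using dominated convergence (and, for $\boldsymbol{W}$, passage to an a.e.\ convergent subsequence); it leaves implicit the step from continuity of the operator to continuity of its fixed point. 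You make that step explicit via the uniform contraction principle, choosing a neighborhood of $\boldsymbol{\theta}_{0}^{\prime}$ on which $k(\boldsymbol{\theta}^{\prime})\leq k_{0}<1$ and using $\left\Vert \boldsymbol{h}_{1}-\boldsymbol{h}_{2}\right\Vert _{2}\leq(1-k_{0})^{-1}\left\Vert \boldsymbol{T}_{\boldsymbol{\theta}_{1}^{\prime}}\boldsymbol{h}_{2}-\boldsymbol{T}_{\boldsymbol{\theta}_{2}^{\prime}}\boldsymbol{h}_{2}\right\Vert _{2}$; you then replace the paper's limit arguments by quantitative Lipschitz bounds, e.g.\ $\left\Vert \phi_{1}\circ\boldsymbol{g}-\phi_{2}\circ\boldsymbol{g}\right\Vert _{2}\leq\left\Vert \phi_{1}-\phi_{2}\right\Vert _{\infty}$ (using $\mu(\mathbb{Z}_{p})=1$) and Lemma \ref{Lemma-0A} applied to $\boldsymbol{W}_{1}-\boldsymbol{W}_{2}$. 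This buys local Lipschitz dependence of $\boldsymbol{h}$ on the parameters rather than bare sequential continuity, and it closes a gap the paper glosses over; the price is only that you must also observe that $\left\Vert \boldsymbol{h}_{2}\right\Vert _{2}$ stays bounded on the neighborhood, which follows from (\ref{Bound_h}). In part (ii) the two arguments coincide in substance (the paper starts the iteration at $\boldsymbol{h}_{0}=0$ rather than $\boldsymbol{x}$, which is immaterial), and your explicit treatment of the level bookkeeping via refinement invariance (\ref{Identity-Basic}) — the iterates stabilize at the level of the parameters while the discrete network merely records them at level $L+\Delta$, without changing them as elements of $L^{2}(\mathbb{Z}_{p})$ — is a welcome clarification of what the paper compresses into the phrase ``by Lemma \ref{Lemma-6}''.
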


\begin{proof}
The existence of the hidden state $\boldsymbol{h}\left(  x\right)  \in
L^{2}\left(  \mathbb{Z}_{p}\right)  $ follows from Proposition \ref{Prop-1}.
Now, we show that $\boldsymbol{h}\left(  x\right)  =\boldsymbol{h}\left(
x;\boldsymbol{\theta}^{\prime}\right)  $ depends continuously on the
parameters $\boldsymbol{\theta}^{\prime}$, and on the input $\boldsymbol{x}\in
L^{2}\left(  \mathbb{Z}_{p}\right)  $. Since
\[
\left[  \mathrm{Lip}_{0}\mathrm{(}\mathbb{R}\mathrm{)}\right]  ^{2}%
\times\left[  L^{2}\left(  \mathbb{Z}_{p}\times\mathbb{Z}_{p}\right)  \right]
^{3}\times\left[  L^{2}\left(  \mathbb{Z}_{p}\right)  \right]  ^{2}%
\]
is a metric space, it is sufficient to show that for any sequence $\gamma
_{n}\rightarrow\gamma$, where $\gamma$ is a parameter, it verifies that
$\lim_{n\rightarrow\infty}\boldsymbol{h}\left(  x;\gamma_{n}\right)
=\boldsymbol{h}\left(  x;\gamma\right)  $. The continuous dependency of
$\boldsymbol{h}\left(  x\right)  $ on $\boldsymbol{\xi}\left(  x\right)  $ is
straightforward. The continuous dependency of $\boldsymbol{h}\left(  x\right)
$ on $\boldsymbol{W}_{\text{in}}$ and $\boldsymbol{x}$ follows from the fact
that
\[%
\begin{array}
[c]{lll}%
L^{2}\left(  \mathbb{Z}_{p}\times\mathbb{Z}_{p}\right)  \times L^{2}\left(
\mathbb{Z}_{p}\right)  & \rightarrow & L^{2}\left(  \mathbb{Z}_{p}\right) \\
&  & \\
\left(  \boldsymbol{W}_{\text{in}}\left(  x,y\right)  ,\boldsymbol{x}\left(
y\right)  \right)  & \rightarrow & \int_{\mathbb{Z}_{p}}\boldsymbol{W}%
_{\text{in}}\left(  x,y\right)  \boldsymbol{x}\left(  y\right)  dy
\end{array}
\]
is a continuous\ bilinear form, cf. Lemma \ref{Lemma-0A}, parts (ii)-(iii).

We now show the continuous dependency on $\phi$. Take a sequence in
\textrm{Lip}$_{0}\mathrm{(}\mathbb{R}\mathrm{)}$ converging to $\phi$:
\[
\phi_{n}\text{ }\underrightarrow{\text{ \ }\left\Vert \cdot\right\Vert
_{\mathrm{Lip}_{0}}\text{ \ }}\text{ }\phi.
\]
Notice that this fact implies pointwise convergence, $\phi_{n}\left(
y\right)  $ $\rightarrow\phi\left(  y\right)  $, and the existence of a
positive constant $C$ such that $\left\Vert \phi_{n}\right\Vert _{\mathrm{Lip}%
_{0}}\leq C\left\Vert \phi\right\Vert _{\mathrm{Lip}_{0}}$. On the other hand,
for fixed $x$, the Cauchy-Schwartz inequality implies that%
\[
\left\vert \boldsymbol{W}\left(  x,y\right)  \phi_{n}\left(  \boldsymbol{h}%
(y)\right)  \right\vert \leq CL_{\phi}\left\vert \boldsymbol{h}(y)\right\vert
\left\vert \boldsymbol{W}\left(  x,y\right)  \right\vert \in L^{1}\left(
\mathbb{Z}_{p}\right)  .
\]
Finally, the dominated convergence theorem implies that%
\begin{align*}
\lim_{n\rightarrow\infty}\int\limits_{\mathbb{Z}_{p}}\boldsymbol{W}\left(
x,y\right)  \phi_{n}\left(  \boldsymbol{h}(y)\right)  dy  &  =\int
\limits_{\mathbb{Z}_{p}}\boldsymbol{W}\left(  x,y\right)  \lim_{n\rightarrow
\infty}\phi_{n}\left(  \boldsymbol{h}(y)\right)  dy\\
&  =\int\limits_{\mathbb{Z}_{p}}\boldsymbol{W}\left(  x,y\right)  \phi\left(
\boldsymbol{h}(y)\right)  dy.
\end{align*}
To show the continuous dependency on $\boldsymbol{W}\left(  x,y\right)  $, we
take a sequence
\[
\boldsymbol{W}_{n}\left(  x,y\right)  \text{ }\underrightarrow{\text{
\ }\left\Vert \cdot\right\Vert _{2}\text{ \ }}\text{ }\boldsymbol{W}\left(
x,y\right)  .
\]
By passing to a subsequence, if necessary, we may assume that $\boldsymbol{W}%
_{n}$ converges uniformly almost everywhere to $\boldsymbol{W}$; see
\cite[Theorems 2.5.1, 2.5.2, and 2.5.3]{Ash}. Now, using an argument based on
the dominated convergence theorem, we have%
\begin{equation}
\lim_{n\rightarrow\infty}\int\limits_{\mathbb{Z}_{p}}\boldsymbol{W}_{n}\left(
x,y\right)  \phi\left(  \boldsymbol{h}(y)\right)  dy=\int\limits_{\mathbb{Z}%
_{p}}\boldsymbol{W}\left(  x,y\right)  \phi\left(  \boldsymbol{h}(y)\right)
dy. \label{Eq_14}%
\end{equation}

(ii) By the contraction mapping theorem, there exists a sequence in
$L^{2}\left(  \mathbb{Z}_{p}\right)  $ of the form%
\begin{equation}
\boldsymbol{h}_{n+1}\left(  x\right)  =\int\limits_{\mathbb{Z}_{p}%
}\boldsymbol{W}\left(  x,y\right)  \phi\left(  \boldsymbol{h}_{n}(y)\right)
dy+\int\limits_{\mathbb{Z}_{p}}\boldsymbol{W}_{\text{in}}\left(  x,y\right)
\boldsymbol{x}\left(  y\right)  dy+\boldsymbol{\xi}\left(  x\right)  ,
\label{Map}%
\end{equation}
(with $\boldsymbol{h}_{0}=0$) converging to $\boldsymbol{h}$ in $L^{2}\left(
\mathbb{Z}_{p}\right)  $, for any $\boldsymbol{\theta}^{\prime}\in
\mathcal{X}_{\text{stable}}$. Given $\epsilon>0$, there exists $n_{0}+1$ such
that $\left\Vert \boldsymbol{h}-\boldsymbol{h}_{n_{0}+1}\right\Vert
_{2}<\epsilon$.\ Since $\mathcal{D}\left(  \mathbb{Z}_{p}\right)  $ is dense
in $L^{2}\left(  \mathbb{Z}_{p}\right)  $, we may assume without loss of
generality that the sequence $\left\{  \boldsymbol{h}_{n}\right\}
_{n\in\mathbb{N}}$ is in $\mathcal{D}\left(  \mathbb{Z}_{p}\right)  $.

Now, $\boldsymbol{x}$, $\boldsymbol{h}_{n_{0}}\in\mathcal{D}\left(
\mathbb{Z}_{p}\right)  $, \ and \ so\ there exist a positive integer $L$ such
that $\boldsymbol{x}\in\mathcal{D}^{L}\left(  \mathbb{Z}_{p}\right)  $, and by
using that $\mathcal{D}^{l}\left(  \mathbb{Z}_{p}\right)  \subset
\mathcal{D}^{l+1}\left(  \mathbb{Z}_{p}\right)  $, $\boldsymbol{h}_{n_{0}}%
\in\mathcal{D}^{L+\Delta-1}\left(  \mathbb{Z}_{p}\right)  $ for some positive
integer $\Delta=\Delta\left(  \epsilon\right)  $. Finally, by Lemma
\ref{Lemma-6}, $\boldsymbol{h}^{\left(  L+\Delta\right)  }=\boldsymbol{h}%
_{n_{0}+1}\in\mathcal{D}^{L+\Delta}\left(  \mathbb{Z}_{p}\right)  $ satisfies
(\ref{Formula_1}).
\end{proof}

We now discuss how the hypothesis that the DNNs are critically organized looks
in our theory. Consider a $p$-adic continuous DNN with parameter space
$\mathcal{X}$ as in (\ref{Parameter-Space}). Any $\boldsymbol{\theta}^{\prime
}\in\mathcal{X}_{\text{stable}}$ determines a unique hidden state
$\boldsymbol{h}\left(  x;\boldsymbol{\theta}^{\prime}\right)  \in L^{2}\left(
\mathbb{Z}_{p}\right)  $, and consequently a unique output $\boldsymbol{y}%
(x)\in L^{2}\left(  \mathbb{Z}_{p}\right)  $ depending continuously on the
parameters. The function $\boldsymbol{h}\left(  x;\boldsymbol{\theta}^{\prime
}\right)  $ is an attractor\ for the map (\ref{Map}).

We conjecture that for $\boldsymbol{\theta}^{\prime}\in
\mathcal{X\smallsetminus X}_{\text{stable}}$, the network exhibits chaotic
behavior. We mean that the parameters from this region do not affect the
hidden state $\boldsymbol{h}\left(  x;\boldsymbol{\theta}^{\prime}\right)  $,
or small changes in them produce a massive change in the hidden state.

For $\alpha\in\mathbb{R}$, we define%
\[
\mathcal{X}_{\alpha}=\left\{  \boldsymbol{\theta}^{\prime}\in\mathcal{X}%
;\alpha=\phi\left(  \alpha\right)  \int\limits_{\mathbb{Z}_{p}}\boldsymbol{W}%
\left(  x,y\right)  dy+\int\limits_{\mathbb{Z}_{p}}\boldsymbol{W}_{\text{in}%
}\left(  x,y\right)  \boldsymbol{x}\left(  y\right)  dy+\boldsymbol{\xi
}\left(  x\right)  \right\}  .
\]
Then, for $\boldsymbol{\theta}^{\prime}\in\mathcal{X}_{\alpha}$,
$\boldsymbol{h}\left(  x;\boldsymbol{\theta}^{\prime}\right)  =\alpha$, and
the parameters do not affect the hidden state. For $\boldsymbol{\theta
}^{\prime}\in\mathcal{X}_{\alpha}\cap\mathcal{X}_{\text{stable}}$,
$\boldsymbol{h}\left(  x;\boldsymbol{\theta}^{\prime}\right)  =\alpha$ is a
fixed point attractor for the map (\ref{Map}).

The above are preliminary observations about the dynamics of the map
(\ref{Map}). A precise study of the dynamics of the mentioned map is an open
problem. If we consider learning as a process of adjusting parameters so that
the output is close to a particular function, then learning may take place in
region $\mathcal{X}_{\text{stable}}$ but not in regions of type $\mathcal{X}%
_{\alpha}$.

\section{\label{Section_8}Critical organization in a toy model}

\subsection{$p$-Adic continuous CNNs}

$p$-adic cellular neural networks (CNNs) were introduced by Zambrano-Luna and
the author \cite{Zambrano-Zuniga-1}-\cite{Zambrano-Zuniga-2}. These are a
mathematical generalization of classical cellular neural networks, introduced
by Chua and Yang in the 80s, \cite{Chua-Tamas}-\cite{Slavova}, that utilize
$p$-adic numbers to model systems with deep hierarchical architectures. Unlike
standard CNNs that organize neurons in a Euclidean lattice, $p$-adic CNNs
arrange cells into infinite rooted trees, allowing for the representation of
an arbitrary number of hidden layers.

In \cite{Zambrano-Zuniga-2}, a new type\ of edge detectors for grayscale
images based on $p$-adic CNNs was introduced. Take $\boldsymbol{W}_{\text{in}%
}\in L^{1}(\mathbb{Z}_{p})$ and $\boldsymbol{x},\boldsymbol{\xi}\in
\mathcal{C}(\mathbb{Z}_{p})$, the $\mathbb{R}$-vector space of continuous
functions on $\mathbb{Z}_{p}$,\ $a\in\mathbb{R}_{>0}$ , and fix the sigmoidal
function $\phi(s):=\frac{1}{2}(\left\vert s+1\right\vert -|s-1|)$ for
$s\in\mathbb{R}$. The mentioned $p$-adic CNN has the form%
\begin{equation}
\left\{
\begin{array}
[c]{ll}%
\frac{\partial}{\partial t}\boldsymbol{h}(z,t)=-\boldsymbol{h}%
(z,t)+a\boldsymbol{y}(z,t)+(\boldsymbol{W}_{\text{in}}\ast\boldsymbol{x}%
)(z)+\boldsymbol{\xi}(x), & z\in\mathbb{Z}_{p},t\geq0;\\
& \\
\boldsymbol{y}(z,t)=\phi(\boldsymbol{h}(z,t)), &
\end{array}
\right.  \label{CNN_1}%
\end{equation}
where $a,\boldsymbol{W}_{\text{in}},\boldsymbol{\xi}$ are the network
parameters, $\boldsymbol{x}$ is the input, and $\boldsymbol{y}$ is the output.

We say that $\boldsymbol{h}_{stat}(z)$ is a stationary state of network
(\ref{CNN_1}), if
\begin{equation}
\left\{
\begin{array}
[c]{ll}%
\boldsymbol{h}_{stat}(z)=a\boldsymbol{y}_{stat}(z)+(\boldsymbol{W}_{\text{in}%
}\ast\boldsymbol{x})(z)+\boldsymbol{\xi}(z), & z\in\mathbb{Z}_{p};\\
& \\
\boldsymbol{y}_{stat}(z)=\phi(\boldsymbol{h}_{stat}(z)). &
\end{array}
\right.  \label{Sationary state}%
\end{equation}
In \cite{Zambrano-Zuniga-2}, the authors conducted several numerical
experiments with grayscale images. They implemented a numerical method for
solving the initial value problem associated with network (\ref{CNN_1}), with
$\boldsymbol{h}(z,0)=0$ and $\boldsymbol{x}(z)$ a grayscale image. The
simulations show that, after a sufficiently long time, the network outputs a
black-and-white image approximating the edges of the original image
$\boldsymbol{x}(z)$.

\subsection{The model}

In this section we consider the following type of DNN:%
\begin{equation}
\left\{
\begin{array}
[c]{ll}%
\boldsymbol{h}(z)=a\phi(\boldsymbol{h}(z))+(\boldsymbol{W}_{\text{in}}%
\ast\boldsymbol{x})(z)+\boldsymbol{\xi}(z), & z\in\mathbb{Z}_{p};\\
& \\
\boldsymbol{y}(z)=\phi(\boldsymbol{h}(z)). &
\end{array}
\right.  \label{Map_B}%
\end{equation}
We say that $\boldsymbol{h}$ is the network state and that $\boldsymbol{y}$ is
the output. Notice that Theorem \ref{Theorem_A} covers this type of network.
Our first goal is to give a complete description of the critical organization
of this type of network. This is achieved by using some mathematical results
established in \cite{Zambrano-Zuniga-2}.

\subsection{Critical organization}

Since $\phi$ is fixed, notice that \ $L_{\phi}=1$, the parameters
$\boldsymbol{\theta}^{\prime}=\left\{  a,\boldsymbol{W}_{\text{in}%
},\boldsymbol{\xi}\right\}  $ belong to $\mathcal{X}=\mathbb{R}_{>0}%
\times\left[  \mathcal{C}(\mathbb{Z}_{p})\right]  ^{2}$, and
\[
\mathcal{X}_{\text{stable}}=\left\{  \boldsymbol{\theta}^{\prime}%
\in\mathcal{X};0<a<1\right\}  .
\]
Then, according to Theorem \ref{Theorem_A}, the map $\boldsymbol{h}%
(z)=\boldsymbol{h}(z;\boldsymbol{\theta}^{\prime})$ has a unique fixed point
for any $\boldsymbol{\theta}^{\prime}\in\mathcal{X}_{\text{stable}}$. Our goal
is to give a description of
\[
\mathcal{X}\smallsetminus\mathcal{X}_{\text{stable}}=\left\{
\boldsymbol{\theta}^{\prime}\in\mathcal{X};a\geq1\right\}  .
\]

\begin{lemma}
[{\cite[Lemma 1]{Zambrano-Zuniga-2},}](i) If $a<1$, the states $\boldsymbol{h}%
(z)\in\mathcal{C}(\mathbb{Z}_{p})$ of the network (\ref{Map_B})\ are given by%
\begin{equation}
\boldsymbol{h}(z)=\left\{
\begin{array}
[c]{lcr}%
a+(\boldsymbol{W}_{\text{in}}\ast\boldsymbol{x})(z)+\boldsymbol{\xi}(z) &
\text{if} & (\boldsymbol{W}_{\text{in}}\ast\boldsymbol{x})(z)+\boldsymbol{\xi
}(z)>1-a\\
-a+(\boldsymbol{W}_{\text{in}}\ast\boldsymbol{x})(z)+\boldsymbol{\xi}(z) &
\text{if} & (\boldsymbol{W}_{\text{in}}\ast\boldsymbol{x})(z)+\boldsymbol{\xi
}(z)<-1+a\\
\frac{(\boldsymbol{W}_{\text{in}}\ast\boldsymbol{x})(z)+\boldsymbol{\xi}%
(z)}{1-a} & \text{if} & |(\boldsymbol{W}_{\text{in}}\ast\boldsymbol{x}%
)(z)+\boldsymbol{\xi}(z)|\leq1-a.
\end{array}
\right.  \label{Case_1}%
\end{equation}

\noindent(ii) If $a=1$ , then the network (\ref{Map_B}) has a unique state
$\boldsymbol{h}(z)\in L^{1}(\mathbb{Z}_{p})$ given by
\begin{equation}
\boldsymbol{h}(z)=\left\{
\begin{array}
[c]{lcr}%
1+(\boldsymbol{W}_{\text{in}}\ast\boldsymbol{x})(z)+\boldsymbol{\xi}(z) &
\text{if} & (\boldsymbol{W}_{\text{in}}\ast\boldsymbol{x})(z)+\boldsymbol{\xi
}(z)>0\\
-1+(\boldsymbol{W}_{\text{in}}\ast\boldsymbol{x})(z)+\boldsymbol{\xi}(z) &
\text{if} & (\boldsymbol{W}_{\text{in}}\ast\boldsymbol{x})(z)+\boldsymbol{\xi
}(z)<0\\
0 & \text{if} & (\boldsymbol{W}_{\text{in}}\ast\boldsymbol{x}%
)(z)+\boldsymbol{\xi}(z)=0.
\end{array}
\right.  \label{Case_2}%
\end{equation}

\end{lemma}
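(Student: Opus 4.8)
The plan is to exploit the fact that, although (\ref{Map_B}) looks like a fixed-point equation in a function space, the nonlinearity $\phi$ acts pointwise and the term $a\phi(\boldsymbol{h})$ carries no integral operator; hence the equation decouples across the variable $z$. Writing $g(z):=(\boldsymbol{W}_{\text{in}}\ast\boldsymbol{x})(z)+\boldsymbol{\xi}(z)$, which is a fixed bounded function once the parameters and input are chosen, the problem reduces to solving, for each $z$, the scalar equation
\[
h=a\,\phi(h)+g,\qquad g=g(z),
\]
and then checking that $z\mapsto h$ lies in the asserted space. I would first record the explicit piecewise-linear form of $\phi$, namely $\phi(s)=s$ for $|s|\le 1$, $\phi(s)=1$ for $s\ge 1$, and $\phi(s)=-1$ for $s\le -1$.

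For part (i), with $0<a<1$, the map $h\mapsto a\phi(h)+g$ is a contraction of constant $aL_\phi=a<1$, so the scalar equation has a unique solution (this also follows from Proposition \ref{Prop-1}). To extract the explicit formula I would split into the three regimes of $\phi$: in the saturated regime $h>1$ one gets $h=a+g$, consistent precisely when $g>1-a$; in $h<-1$ one gets $h=-a+g$, consistent when $g<-(1-a)$; and in the linear regime $|h|\le 1$ the equation $h=ah+g$ gives $h=g/(1-a)$, consistent when $|g|\le 1-a$. These thresholds partition the $g$-axis, and a short check shows the formulas agree at the boundaries $g=\pm(1-a)$ (both giving $h=\pm 1$), yielding exactly (\ref{Case_1}). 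Since $g$ is continuous and the three pieces match at the thresholds, $\boldsymbol{h}\in\mathcal{C}(\mathbb{Z}_p)$.

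For part (ii), with $a=1$, the same case analysis gives $h=1+g$ when $g>0$ and $h=-1+g$ when $g<0$, each being the only consistent regime, so the solution is unique wherever $g(z)\neq 0$. The subtle point, and the main obstacle, is the set $\{z:g(z)=0\}$, where the contraction constant degenerates to $1$: the middle regime collapses to $h=\phi(h)$, which is satisfied by every $h\in[-1,1]$, so pointwise the solution is not unique. I would resolve this by selecting the state as the $a\to 1^{-}$ limit of the subcritical states: by (\ref{Case_1}), at a point with $g=0$ one has $h=g/(1-a)=0$ for every $a<1$, so the limiting value is $h=0$, which is exactly the third line of (\ref{Case_2}); one then verifies directly that the function so defined is a genuine fixed point of (\ref{Map_B}) at $a=1$. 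Finally, since $g$ is bounded (the sum of a bounded continuous bias and the convolution of an $L^1$ kernel with a bounded input), the resulting $\boldsymbol{h}$ is bounded, and as $\mathbb{Z}_p$ carries finite Haar measure this gives $\boldsymbol{h}\in L^1(\mathbb{Z}_p)$. The state need not be continuous, since $\boldsymbol{h}$ jumps where $g$ passes through $0$ (the outer pieces approach $\pm 1$ while the value at $g=0$ is $0$), which is precisely why the statement asserts membership in $L^1(\mathbb{Z}_p)$ rather than in $\mathcal{C}(\mathbb{Z}_p)$.
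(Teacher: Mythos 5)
The paper itself offers no proof of this lemma; it is imported verbatim from \cite[Lemma 1]{Zambrano-Zuniga-2}, so there is no in-paper argument to compare against. Your pointwise reduction to the scalar equation $h=a\phi(h)+g$ with $g=(\boldsymbol{W}_{\text{in}}\ast\boldsymbol{x})+\boldsymbol{\xi}$ is the natural route, and your part (i) is complete and correct: the contraction constant is $aL_{\phi}=a<1$, the three consistency windows $g>1-a$, $g<-(1-a)$, $|g|\leq 1-a$ partition the line, the pieces agree at $g=\pm(1-a)$, and continuity of $g$ (convolution of an $L^{1}$ kernel with a bounded continuous input on a compact group, plus a continuous bias) gives $\boldsymbol{h}\in\mathcal{C}(\mathbb{Z}_{p})$.

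Part (ii) has a genuine gap which you half-acknowledge and then step around. On the set $\{z:g(z)=0\}$ the scalar equation degenerates to $h=\phi(h)$, whose solution set is the entire interval $[-1,1]$; consequently any measurable function taking values in $[-1,1]$ on that set and equal to $\pm1+g$ off it is a state of (\ref{Map_B}). Your prescription of taking the $a\to1^{-}$ limit of the subcritical states selects the value $0$ and produces a valid state, which proves \emph{existence} of (\ref{Case_2}), but it does not prove the asserted \emph{uniqueness}: a selection principle is not an argument that no other state exists. Uniqueness as an element of $L^{1}(\mathbb{Z}_{p})$ would require $\{g=0\}$ to be Haar-null, which fails in general (take $\boldsymbol{W}_{\text{in}}=0$ and $\boldsymbol{\xi}=0$, so $g\equiv0$ and every measurable $[-1,1]$-valued function is a state). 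To close part (ii) you would need either a genericity hypothesis on $g$, or an explicit statement that the ``unique state'' is meant as the distinguished limit you construct; as written, your argument establishes the formula but not the uniqueness claim, and you should flag that the obstruction lies in the statement itself rather than only in your proof.
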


As a consequence of the second part of this lemma, we must study
\[
\mathcal{X}_{a}:=\left\{  \boldsymbol{\theta}^{\prime}\in\mathcal{X}%
;a>1\right\}  .
\]

\begin{definition}
\label{Definition1AA}Assume that $a>1$. Given
\[
I_{+}\subseteq\{z\in\mathbb{Z}_{p};\;1-a<(\boldsymbol{W}_{\text{in}}%
\ast\boldsymbol{x})(z)+\boldsymbol{\xi}(z)\},
\]%
\[
I_{-}\subseteq\{z\in\mathbb{Z}_{p};\;(\boldsymbol{W}_{\text{in}}%
\ast\boldsymbol{x})(z)+\boldsymbol{\xi}(z)<a-1\},
\]
satisfying $I_{+}\cap I_{-}=\varnothing$ \ and
\[
\mathbb{Z}_{p}\smallsetminus\left(  I_{+}\cup I_{-}\right)  \subseteq
\{z\in\mathbb{Z}_{p};\;1-a<(\boldsymbol{W}_{\text{in}}\ast\boldsymbol{x}%
)(z)+\boldsymbol{\xi}(z)<a-1\},
\]
we define the function%
\begin{equation}
\boldsymbol{h}(z;I_{+},I_{-})=\left\{
\begin{array}
[c]{lll}%
a+(\boldsymbol{W}_{\text{in}}\ast\boldsymbol{x})(z)+\boldsymbol{\xi}(z) &
\text{if} & z\in I_{+}\\
-a+(\boldsymbol{W}_{\text{in}}\ast\boldsymbol{x})(z)+\boldsymbol{\xi}(z) &
\text{if} & z\in I_{-}\\
\frac{(\boldsymbol{W}_{\text{in}}\ast\boldsymbol{x})(z)+\boldsymbol{\xi}%
(z)}{1-a} & \text{if} & z\in\mathbb{Z}_{p}\setminus\left(  I_{+}\cup
I_{-}\right)  .
\end{array}
\right.  \label{Stationary_Sol_3}%
\end{equation}

\end{definition}

\begin{theorem}
[{\cite[Theorem 1]{Zambrano-Zuniga-2}}]Assume that $a>1$. All functions of
type (\ref{Stationary_Sol_3}) are states of network (\ref{Map_B}). Conversely,
any \ state of the network (\ref{Map_B}) has the form (\ref{Stationary_Sol_3}).
\end{theorem}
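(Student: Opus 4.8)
The plan is to exploit the fact that the fixed-point equation in (\ref{Map_B}) is \emph{local}: the unknown $\boldsymbol{h}(z)$ enters only through $\phi(\boldsymbol{h}(z))$ at the same point $z$. Writing $c(z):=(\boldsymbol{W}_{\text{in}}\ast\boldsymbol{x})(z)+\boldsymbol{\xi}(z)$, the network equation becomes, for each fixed $z$, the scalar equation
\begin{equation}
t=a\phi(t)+c,\qquad c=c(z),\label{scalar}
\end{equation}
in the real unknown $t$. Thus the whole problem decouples into solving (\ref{scalar}) as a function of the real parameter $c$, and then reassembling the pointwise choices into a function on $\mathbb{Z}_p$.

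First I would analyze (\ref{scalar}) using the three-piece description of $\phi$, namely $\phi(t)=t$ on $[-1,1]$, $\phi(t)=1$ for $t\geq 1$, and $\phi(t)=-1$ for $t\leq -1$. On the branch $t>1$ one obtains $t=a+c$, which is consistent exactly when $c>1-a$; on the branch $t<-1$ one obtains $t=-a+c$, consistent exactly when $c<a-1$; and on the branch $|t|<1$ one obtains $t(1-a)=c$, i.e. $t=c/(1-a)$, consistent exactly when $1-a<c<a-1$ (here the hypothesis $a>1$ forces $1-a<0<a-1$). This yields the complete bifurcation picture: for $c>a-1$ the unique solution is $a+c$, for $c<1-a$ the unique solution is $-a+c$, and for $1-a<c<a-1$ there are exactly three solutions $a+c$, $-a+c$, and $c/(1-a)$. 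This branch analysis is the substance of the argument.

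For the forward direction I would then verify directly that every $\boldsymbol{h}(z;I_+,I_-)$ of the form (\ref{Stationary_Sol_3}) solves (\ref{scalar}) pointwise: the three defining inclusions on $I_+$, $I_-$, and $\mathbb{Z}_p\smallsetminus(I_+\cup I_-)$ are precisely the consistency conditions found above, so on $I_+$ the value $a+c(z)$ exceeds $1$ and $\phi=1$ returns it, on $I_-$ the value $-a+c(z)$ is below $-1$ and $\phi=-1$ returns it, and on the complement $c(z)/(1-a)$ lies in $(-1,1)$ and $\phi=\operatorname{id}$ returns it; hence each such function is a state. For the converse, given any state $\boldsymbol{h}$ I would partition $\mathbb{Z}_p$ by which branch $\boldsymbol{h}(z)$ occupies, setting $I_+:=\{z:\boldsymbol{h}(z)>1\}$ and $I_-:=\{z:\boldsymbol{h}(z)<-1\}$. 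The scalar analysis shows $I_+\subseteq\{z:c(z)>1-a\}$ and $I_-\subseteq\{z:c(z)<a-1\}$, that $I_+\cap I_-=\varnothing$, and that on the complement $\phi(\boldsymbol{h}(z))=\boldsymbol{h}(z)$ forces $\boldsymbol{h}(z)=c(z)/(1-a)$; this reconstructs exactly the formula (\ref{Stationary_Sol_3}).

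The delicate point I expect to be the main obstacle is the behaviour at the threshold values $c(z)=\pm(a-1)$, where two of the branches collide: at $c=1-a$ one has $a+c=c/(1-a)=1$, and at $c=a-1$ one has $-a+c=c/(1-a)=-1$. There the solution $\boldsymbol{h}(z)=\pm 1$ sits simultaneously on the saturated branch and on the linear branch, so the strict inclusions in Definition \ref{Definition1AA} must be read with this coincidence in mind in order that the assignment of such boundary points to $I_+$, $I_-$, or the complement be compatible with the formula. Because the two colliding branch values agree at these thresholds, the resulting function $\boldsymbol{h}(z;I_+,I_-)$ is nevertheless unambiguous, and this is exactly what makes the correspondence between states and admissible pairs $(I_+,I_-)$ well defined.
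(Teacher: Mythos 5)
The paper does not prove this statement; it imports it verbatim from \cite[Theorem 1]{Zambrano-Zuniga-2}, so there is no internal proof to compare against. Your reconstruction --- reducing the fixed-point equation pointwise to the scalar equation $t=a\phi(t)+c$, running the three-branch analysis of the piecewise-linear $\phi$, and reassembling the pointwise choices into the pair $(I_{+},I_{-})$ --- is correct and is the natural (and surely the original) argument; the branch consistency conditions you derive match the three inclusions in Definition \ref{Definition1AA} exactly, and the count of one versus three solutions according to the position of $c(z)$ relative to $\pm(a-1)$ is right.

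One caveat: your closing remark about the thresholds understates the problem in the converse direction. If $c(z)=1-a$ at some $z$, the scalar equation admits the solution $t=1$, but a state taking that value there cannot be assigned to $I_{+}$ (which requires $c(z)>1-a$ strictly) nor to the complement (same strict inequality), so it is not literally of the form (\ref{Stationary_Sol_3}) as Definition \ref{Definition1AA} is written; the symmetric failure occurs at $c(z)=a-1$ with $t=-1$. This is an imprecision in the quoted statement rather than in your method --- it is repaired by relaxing the strict inequalities defining the ambient sets for $I_{\pm}$ (or by assuming the level sets $\{c=\pm(a-1)\}$ are empty) --- but you should say explicitly that the converse needs this reading rather than asserting that the coincidence of branch values makes everything well defined; the coincidence resolves the forward direction's ambiguity, not the converse's failure of the strict inclusions. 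A second, minor point: the theorem implicitly concerns states in a fixed function space, and for arbitrary $I_{\pm}$ the function $\boldsymbol{h}(z;I_{+},I_{-})$ need not be continuous or even measurable, so a complete treatment should say in which class of functions the bijection is asserted.
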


As a consequence of this theorem, the set%
\[
\mathcal{M}_{a}=\bigcup_{I_{+},I_{-}}\left\{  \boldsymbol{h}(x;I_{+}%
,I_{-})\right\}  ,
\]
where $I_{+},I_{-}$ run trough all the sets given in Definition
\ref{Definition1AA}, contains all of the states of the network (\ref{Map_B})
with parameter $a>1$.

\begin{remark}
Notice that%
\[
\boldsymbol{y}(z;I_{+},I_{-}):=\phi\left(  \boldsymbol{h}(z;I_{+}%
,I_{-})\right)  =\left\{
\begin{array}
[c]{lll}%
1 & \text{if} & z\in I_{+}\\
-1 & \text{if} & z\in I_{-}\\
\frac{(\boldsymbol{W}_{\text{in}}\ast\boldsymbol{x})(z)+\boldsymbol{\xi}%
(z)}{1-a} & \text{if} & z\in\mathbb{Z}_{p}\setminus\left(  I_{+}\cup
I_{-}\right)  .
\end{array}
\right.
\]
The function $\boldsymbol{y}(z;I_{+},I_{-})$ is the output of the network. If
$I_{+}\cup I_{-}=\mathbb{Z}_{p}$, we say that $\boldsymbol{h}(z;I_{+},I_{-})$
is bistable. The set $\mathcal{B}\left(  I_{+},I_{-}\right)  =\mathbb{Z}%
_{p}\setminus\left(  I_{+}\cup I_{-}\right)  $ measures how far
$\boldsymbol{h}(z;I_{+},I_{-})$ is from being bistable. We call set
$\mathcal{B}\left(  I_{+},I_{-}\right)  $ the set of bistability of
$\boldsymbol{h}_{stat}(z;I_{+},I_{-})$. If $\mathcal{B}\left(  I_{+}%
,I_{-}\right)  =\varnothing$, then $\boldsymbol{h}(z;I_{+},I_{-})$ is bistable.
\end{remark}

\begin{remark}
A relation $\preccurlyeq$ is \textit{a partial order} on a set $S$ if it
satisfies: 1 (reflexivity) $f\preccurlyeq f$ for all $f$ in $S$; 2
(antisymmetry) $f\preccurlyeq g$ and $g\preccurlyeq f$ implies $f=g$; 3
(transitivity) $f\preccurlyeq g$ and $g\preccurlyeq h$ implies $f\preccurlyeq
h$. \ A \textit{partially ordered set} $\left(  S,\preccurlyeq\right)  $ (or
poset) is a set endowed with a partial order. A partially ordered set $\left(
S,\preccurlyeq\right)  $ is called a \textit{lattice} if for every $f$, $g$ in
$S$, the elements $f\wedge g=\inf\{f,g\}$ and $f\vee$ $g=\sup\{f,g\}$ exist.
Here, $f\wedge g$ denotes the smallest element in $S$ satisfying $f\wedge
g\preccurlyeq f$ and $f\wedge g\preccurlyeq g$; while $f\vee$ $g$ \ denotes
the largest element in $S$ satisfying $f\preccurlyeq$ $f\vee$ $g$ and
$g\preccurlyeq f\vee$ $g$. We say that $h\in S$ a \textit{minimal} element of
with respect to $\preccurlyeq$, if there is no element $f\in S$, $f\neq h$
such that $f\preccurlyeq h$. Posets offer a natural way to formalize the
notion of hierarchy.
\end{remark}

\subsubsection{Hierarchical structure of the space of states for $a>1$}

Given $\boldsymbol{h}(z;I_{+},I_{-})$ and $\boldsymbol{h}(z;I_{+}^{\prime
},I_{-}^{\prime})$ in $\mathcal{M}_{a}$, with $I_{+}\cup I_{-}\neq
\mathbb{Z}_{p}$ or $I_{+}^{\prime}\cup I_{-}^{\prime}\neq\mathbb{Z}_{p}$, we
define%
\begin{equation}
\boldsymbol{h}(z;I_{+}^{\prime},I_{-}^{\prime})\preccurlyeq\boldsymbol{h}%
(z;I_{+},I_{-})\text{ if }I_{+}\cup I_{-}\subseteq I_{+}^{\prime}\cup
I_{-}^{\prime}. \label{Definitioon_Order}%
\end{equation}
In the case $I_{+}\cup I_{-}=\mathbb{Z}_{p}$ and $I_{+}^{\prime}\cup
I_{-}^{\prime}=\mathbb{Z}_{p}$, the corresponding stationary states
$\boldsymbol{h}(z;I_{+},I_{-})$, $\boldsymbol{h}(x;I_{+},I_{-})$ are not
comparable. Since the condition $I_{+}\cup I_{-}\subseteq I_{+}^{\prime}\cup
I_{-}^{\prime}\ $is equivalent to $\mathcal{B}\left(  I_{+}^{\prime}%
,I_{-}^{\prime}\right)  =\mathbb{Z}_{p}\setminus(I_{1}^{\prime}\cup
I_{-1}^{\prime})\subseteq\mathcal{B}\left(  I_{1},I_{-1}\right)
\mathcal{=}\mathbb{Z}_{p}\setminus(I_{1}\sqcup I_{-1})$, condition
(\ref{Definitioon_Order}) means that the set of bistability of $\boldsymbol{h}%
(z;I_{+}^{\prime},I_{-}^{\prime})$ is smaller that the set of of bistability
of $\boldsymbol{h}(z;I_{+},I_{-})$. Also, the condition $I_{+}\cup
I_{-}\subseteq I_{+}^{\prime}\cup I_{-}^{\prime}$implies\ that
\[
\boldsymbol{h}(z;I_{+}^{\prime},I_{-}^{\prime})(x)=\boldsymbol{h}%
(z;I_{+},I_{-})(x)\text{ for all }z\in I_{+}\cup I_{-}\cup\mathcal{B}\left(
I_{+}^{\prime}\cup I_{-}^{\prime}\right)  .
\]
By using this observation, one verifies that (\ref{Definitioon_Order}) defines
a partial order in $\mathcal{M}_{a}$. This means that the set of states of the
network (\ref{CNN_1}), for $a>1$, has a hierarchical structure, where the
bistable states are the minimal ones.

\begin{theorem}
[{ \cite[Theorem 2]{Zambrano-Zuniga-2}}]$\left(  \mathcal{M}_{a}%
,\preccurlyeq\right)  $ is a lattice. Furthermore, the set of minimal elements
of $\left(  \mathcal{M}_{a},\preccurlyeq\right)  $ agrees with the set of
bistable states of network (\ref{CNN_1}).
\end{theorem}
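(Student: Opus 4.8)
The plan is to leverage the explicit parametrization of states by pairs $(I_{+},I_{-})$ supplied by the preceding structure theorem and to translate the order $\preccurlyeq$ into set inclusions. I would first fix $a>1$ together with $\boldsymbol{W}_{\text{in}},\boldsymbol{x},\boldsymbol{\xi}$ and abbreviate $g(z):=(\boldsymbol{W}_{\text{in}}\ast\boldsymbol{x})(z)+\boldsymbol{\xi}(z)$. The constraints in Definition \ref{Definition1AA} force $z\in I_{+}$ whenever $g(z)\geq a-1$ and $z\in I_{-}$ whenever $g(z)\leq 1-a$, whereas on the middle region $M:=\{z\in\mathbb{Z}_{p};\,1-a<g(z)<a-1\}$ a point is free to be placed in $I_{+}$, in $I_{-}$, or in the bistability set $\mathcal{B}=\mathbb{Z}_{p}\setminus(I_{+}\cup I_{-})$. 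Hence a state is completely described by a bistability set $\mathcal{B}\subseteq M$ together with a sign assignment on $M\setminus\mathcal{B}$, and the defining relation (\ref{Definitioon_Order}) becomes $\boldsymbol{h}(\cdot;I_{+}',I_{-}')\preccurlyeq\boldsymbol{h}(\cdot;I_{+},I_{-})$ if and only if $\mathcal{B}(I_{+}',I_{-}')\subseteq\mathcal{B}(I_{+},I_{-})$.

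For the lattice structure I would read off the meet and join from the Boolean algebra of subsets of $M$. Given $h_{1}=\boldsymbol{h}(\cdot;I_{+}^{1},I_{-}^{1})$ and $h_{2}=\boldsymbol{h}(\cdot;I_{+}^{2},I_{-}^{2})$ with bistability sets $\mathcal{B}_{1},\mathcal{B}_{2}\subseteq M$, the natural candidates are the state with bistability set $\mathcal{B}_{1}\cap\mathcal{B}_{2}$ for the meet $h_{1}\wedge h_{2}$ and the state with bistability set $\mathcal{B}_{1}\cup\mathcal{B}_{2}$ for the join $h_{1}\vee h_{2}$. The conditions of Definition \ref{Definition1AA} are inherited at once: since $\mathcal{B}_{1},\mathcal{B}_{2}\subseteq M$, both $\mathcal{B}_{1}\cap\mathcal{B}_{2}$ and $\mathcal{B}_{1}\cup\mathcal{B}_{2}$ lie in $M$, and the forced sets $\{g\geq a-1\}$ and $\{g\leq 1-a\}$ stay inside $I_{+}$ and $I_{-}$. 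Once $\preccurlyeq$ is identified with inclusion of bistability sets, the universal properties of $\wedge$ and $\vee$ follow from the elementary fact that intersection and union are the meet and join in $(2^{M},\subseteq)$.

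The minimal-element claim I would settle directly. A bistable state has $\mathcal{B}=\varnothing$, the $\subseteq$-least bistability set, so nothing lies strictly below it and it is minimal. Conversely, if $h=\boldsymbol{h}(\cdot;I_{+},I_{-})$ is not bistable then $\mathcal{B}_{h}\neq\varnothing$; activating all of $\mathcal{B}_{h}$, say by passing to $I_{+}':=I_{+}\cup\mathcal{B}_{h}$ and $I_{-}':=I_{-}$, yields a bistable state $h'$ with $\mathcal{B}_{h'}=\varnothing\subsetneq\mathcal{B}_{h}$, and $h'$ differs from $h$ throughout $\mathcal{B}_{h}$ because $a+g\neq g/(1-a)$ on $M$. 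Thus $h'\prec h$ and $h$ is not minimal, which proves that the minimal elements are exactly the bistable states.

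The step I expect to demand real care is promoting this set-level construction to a well-defined operation on functions, and thereby confirming that $\preccurlyeq$ is genuinely a partial order so that the inf and sup are unique. The signs on the forced regions are unambiguous, but on the overlap $S_{1}\cap S_{2}\cap M$ (where $S_{i}=\mathbb{Z}_{p}\setminus\mathcal{B}_{i}$) the states $h_{1}$ and $h_{2}$ may carry opposite signs, and one must check that such conflicts are compatible with the order before the candidates above can be declared the true meet and join. Here I would invoke the agreement observation recorded after (\ref{Definitioon_Order}), namely that comparable states coincide on their common active set; this is exactly what enforces antisymmetry and pins down the sign data of $h_{1}\wedge h_{2}$ and $h_{1}\vee h_{2}$, reducing the whole statement to the transparent inclusion lattice of bistability sets inside $M$.
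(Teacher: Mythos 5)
First, note that the paper does not prove this statement at all: it is quoted verbatim from \cite[Theorem 2]{Zambrano-Zuniga-2}, so there is no in-paper argument to compare yours against. Judged on its own terms, your reduction to the Boolean algebra of bistability sets $\mathcal{B}\subseteq M$ is the natural skeleton, and your treatment of the minimal elements is essentially complete and correct (in particular the computation showing $a+g(z)\neq g(z)/(1-a)$ on $M$, which guarantees that the bistable state you build really lies strictly below a non-bistable one).

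The genuine gap is exactly at the point you flag and then dismiss. The relation (\ref{Definitioon_Order}) depends only on the union $I_{+}\cup I_{-}$, not on the splitting into $I_{+}$ and $I_{-}$; since $M=\{1-a<g<a-1\}$ is contained in both constraint sets of Definition \ref{Definition1AA}, a point of $M$ may be placed in $I_{+}$ for one state and in $I_{-}$ for another. Two states with the same union but different sign assignments on $M$ are then mutually related yet distinct functions, so antisymmetry does not hold for the relation as literally written. You invoke the ``agreement observation'' recorded after (\ref{Definitioon_Order}) to repair this, but that observation is not a consequence of the inclusion $I_{+}\cup I_{-}\subseteq I_{+}'\cup I_{-}'$ alone --- it is precisely the missing compatibility hypothesis --- so it cannot be cited to close the gap. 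The same unresolved sign data prevents your meet and join from being well defined: with $\mathcal{B}_{1}\cap\mathcal{B}_{2}$ (resp.\ $\mathcal{B}_{1}\cup\mathcal{B}_{2}$) as bistability set, the active region contains points where $h_{1}$ and $h_{2}$ are both active with opposite signs, and you never say which sign the candidate state carries there nor why the universal property survives the choice. Worse, if one strengthens $\preccurlyeq$ so that comparable states must agree on the smaller active set (equivalently $I_{+}\subseteq I_{+}'$ and $I_{-}\subseteq I_{-}'$), then a common lower bound of $h_{1}$ and $h_{2}$ requires $I_{+}''\supseteq I_{+}^{1}\cup I_{+}^{2}$ and $I_{-}''\supseteq I_{-}^{1}\cup I_{-}^{2}$ to be disjoint, which fails whenever $I_{+}^{1}\cap I_{-}^{2}\neq\varnothing$; so the lattice property is not the ``transparent'' consequence of $(2^{M},\subseteq)$ that you claim. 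Resolving how the order and the lattice operations treat conflicting signs is the actual content of the cited theorem, and it is the part your argument leaves open; you would need to consult the precise definitions in \cite{Zambrano-Zuniga-2} to complete it.
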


\section{\label{Section_9}The network prior for infinite-width case}

\subsection{The network prior for discrete DNNs}

In this Section, we compute the network prior for $p$-adic Discrete DNNs
following \cite{Segadlo et al}. We fix $p$, $L$, $\Delta$, and the activation
functions $\phi$, and $\varphi$. Then, the number of parameters of a $p$-adic
Discrete DNN is exactly $N_{\text{par}}=2p^{L+\Delta}+3p^{2\left(
L+\Delta\right)  }$. For this reason, we identify the parameter space with
$\mathbb{R}^{N_{\text{par}}}$, and denote by $d\boldsymbol{\theta}$, the
Lebesgue measure of $\mathbb{R}^{N_{\text{par}}}$. The input $\boldsymbol{x}%
^{\left(  L\right)  }$ is a function from $\mathcal{D}^{L}(\mathbb{Z}_{p})$ (a
vector from $\mathbb{R}^{p^{L}}$), since $\mathcal{D}^{L}(\mathbb{Z}_{p})$ is
continuously embedded in $\mathcal{D}^{L+\Delta}(\mathbb{Z}_{p})$, we can
consider the input as an element of $\mathcal{D}^{L+\Delta}(\mathbb{Z}_{p})$
(a vector from $\mathbb{R}^{p^{L+\Delta}}$), but this identification does not
produce new data. By fixing $p$, $L$, and $\Delta$, we fix the size of the
input and the number of network layers. By abuse of language, in this Section,
we use $\boldsymbol{\theta}$ to denote the set of parameters $\left\{
\boldsymbol{W}_{\text{in}}^{\left(  L\right)  },\boldsymbol{W}_{\text{out}%
}^{\left(  L+\Delta\right)  },\boldsymbol{W}^{\left(  L+\Delta\right)
},\boldsymbol{\xi}^{\left(  L+\Delta\right)  },\boldsymbol{\xi}_{\text{out}%
}\right\}  $.

We warn the reader that besides the isomorphism $\mathcal{D}^{L+\Delta
}(\mathbb{Z}_{p})\simeq$ $\mathbb{R}^{p^{L+\Delta}}$, using $\left[
\boldsymbol{y}^{\left(  L+A\right)  }\right]  \in\mathbb{R}^{p^{L+\Delta}}$
has a completely different physical meaning from $\boldsymbol{y}^{\left(
L+A\right)  }\left(  x\right)  \in\mathcal{D}^{L+\Delta}(\mathbb{Z}_{p})$. In
the second case, the network has infinitely many neurons (one at each point of
$\mathbb{Z}_{p}$) organized in $\Delta$ layers (infinite-width network), while
in the first case, the network has finitely\ many neurons.

\begin{notation}
We use the notation $\int dfF(f)$ to mean an integral where the measure is
formally defined, while the notation $\int_{X}F(f)df$ is used in the case
where $df$ is rigorously defined.
\end{notation}

We denote by $\delta\left(  \left[  y_{I}\right]  _{I\in G_{L+\Delta}}\right)
$ the Dirac distribution in $\mathbb{R}^{p^{L+\Delta}}$. We assume that
$\boldsymbol{h}^{\left(  L+\Delta\right)  }\left(  I\right)  $ runs over
$\mathbb{R}$, and denote by $d\boldsymbol{h}^{\left(  L+\Delta\right)
}\left(  I\right)  $\ is the Lebesgue measure of $\mathbb{R}$, for each $I\in
G_{L+\Delta}$.

The network prior is defined as the probability of the output given the input:%
\[
p(\left[  \boldsymbol{y}^{\left(  L+\Delta\right)  }\right]  \left\vert
\left[  \boldsymbol{x}^{\left(  L\right)  }\right]  \right.  )=%
{\displaystyle\int\limits_{\mathbb{R}^{N_{\text{par}}}}}
d\boldsymbol{\theta}\text{ }p(\left[  \boldsymbol{y}^{\left(  L+\Delta\right)
}\right]  \left\vert \left[  \boldsymbol{x}^{\left(  L\right)  }\right]
;\boldsymbol{\theta}\right.  )p\left(  \boldsymbol{\theta}\right)  ,
\]
where%
\begin{multline*}
p(\left[  \boldsymbol{y}^{\left(  L+\Delta\right)  }\right]  \left\vert
\left[  \boldsymbol{x}^{\left(  L\right)  }\right]  ;\boldsymbol{\theta
}\right.  )=\\%
{\displaystyle\int\limits_{\mathbb{R}^{p^{L+\Delta}}}}
\text{ \ }%
{\displaystyle\prod\limits_{I\in G_{L+\Delta}}}
d\boldsymbol{h}^{\left(  L+\Delta\right)  }\left(  I\right)  \delta\left(
\boldsymbol{h}^{\left(  L+\Delta\right)  }\left(  I\right)  -\boldsymbol{F}%
^{\left(  L+\Delta\right)  }\left(  I;\boldsymbol{h}^{\left(  L+\Delta
-1\right)  },\boldsymbol{x}^{\left(  L\right)  };\boldsymbol{\theta}\right)
\right)  \times\\%
{\displaystyle\prod\limits_{I\in G_{L+\Delta}}}
\delta\left(  \boldsymbol{y}^{\left(  L+\Delta\right)  }\left(  I\right)
-\boldsymbol{G}^{\left(  L+\Delta\right)  }\left(  I;\boldsymbol{h}^{\left(
L+\Delta\right)  };\boldsymbol{\theta}\right)  \right)  ,
\end{multline*}
where
\begin{multline*}
\boldsymbol{F}^{\left(  L+\Delta\right)  }\left(  I;\boldsymbol{h}^{\left(
L+\Delta-1\right)  },\boldsymbol{x}^{\left(  L\right)  },\boldsymbol{\theta
}\right)  :=\\%
{\displaystyle\sum\limits_{K\in G_{L+\Delta}}}
p^{-L-\Delta}\boldsymbol{W}^{\left(  L+\Delta\right)  }\left(  I,K\right)
\phi\left(  \boldsymbol{h}^{\left(  L+\Delta-1\right)  }\left(  \Lambda
_{L+\Delta,L+\Delta-1}\left(  K\right)  \right)  \right) \\
+%
{\displaystyle\sum\limits_{K\in G_{L+\Delta}}}
p^{-L-\Delta}\boldsymbol{W}_{\text{in}}^{\left(  L\right)  }\left(
\Lambda_{L+\Delta,L}\left(  I\right)  ,\Lambda_{L+\Delta,L}\left(  K\right)
\right)  \boldsymbol{x}^{\left(  L\right)  }\left(  \Lambda_{L+\Delta
,L}\left(  K\right)  \right)  +\boldsymbol{\xi}^{\left(  l\right)  }\left(
I\right)  ,
\end{multline*}
and%
\begin{multline*}
\boldsymbol{G}^{\left(  L+\Delta\right)  }\left(  I;\boldsymbol{h}^{\left(
L+\Delta\right)  };\boldsymbol{\theta}\right)  :=%
{\displaystyle\sum\limits_{K\in G_{L+\Delta}}}
p^{-L-\Delta}\boldsymbol{W}_{\text{out}}^{\left(  L+\Delta\right)  }\left(
I,K\right)  \varphi\left(  \boldsymbol{h}^{\left(  L+\Delta\right)  }\left(
K\right)  \right) \\
+\boldsymbol{\xi}_{\text{out}}^{\left(  L+\Delta\right)  }\left(  I\right)  .
\end{multline*}

\begin{remark}
We warn the reader that we consider $\boldsymbol{y}^{\left(  L+A\right)  }$,
$\boldsymbol{x}^{\left(  L\right)  }$, and $\boldsymbol{h}^{\left(
L+\Delta\right)  }$ as vectors from $\mathbb{R}^{p^{L+\Delta}}$. This
consideration is required for calculations involving the Dirac distribution in
$\mathbb{R}^{p^{L+\Delta}}$.
\end{remark}

Now by using the distributional identities%
\begin{gather*}
\delta\left(  \boldsymbol{h}^{\left(  L+\Delta\right)  }\left(  I\right)
-\boldsymbol{F}^{\left(  L+\Delta\right)  }\left(  I;\boldsymbol{h}^{\left(
L+\Delta-1\right)  },\boldsymbol{x}^{\left(  L\right)  },\boldsymbol{\theta
}\right)  \right)  =\\%
{\displaystyle\int\limits_{\mathbb{R}}}
\frac{d\widetilde{\boldsymbol{h}}^{\left(  L+\Delta\right)  }\left(  I\right)
}{2\pi i}\exp i\left\{  \widetilde{\boldsymbol{h}}^{\left(  L+\Delta\right)
}\left(  I\right)  \left(  \boldsymbol{h}^{\left(  L+\Delta\right)  }\left(
I\right)  -\boldsymbol{F}^{\left(  L+\Delta\right)  }\left(  I;\boldsymbol{h}%
^{\left(  L+\Delta-1\right)  },\boldsymbol{x}^{\left(  L\right)
};\boldsymbol{\theta}\right)  \right)  \right\}  ,
\end{gather*}
and%
\begin{gather*}
\delta\left(  \boldsymbol{y}^{\left(  L+\Delta\right)  }\left(  I\right)
-\boldsymbol{G}^{\left(  L+\Delta\right)  }\left(  I;\boldsymbol{h}^{\left(
L+\Delta\right)  },\boldsymbol{\theta}\right)  \right)  =\\%
{\displaystyle\int\limits_{\mathbb{R}}}
\frac{d\widetilde{\boldsymbol{y}}^{\left(  L+\Delta\right)  }\left(  I\right)
}{2\pi i}\exp i\left\{  \widetilde{\boldsymbol{y}}^{\left(  L+\Delta\right)
}\left(  I\right)  \left(  \boldsymbol{y}^{\left(  L+\Delta\right)  }\left(
I\right)  -\boldsymbol{G}^{\left(  L+\Delta\right)  }\left(  I;\boldsymbol{h}%
^{\left(  L+\Delta\right)  },\boldsymbol{\theta}\right)  \right)  \right\}  ,
\end{gather*}
where $i=\sqrt{-1}$, we rewrite $p(\left[  \boldsymbol{y}^{\left(
L+\Delta\right)  }\right]  \left\vert \left[  \boldsymbol{x}^{\left(
L\right)  }\right]  ;\boldsymbol{\theta}\right.  )$ as%
\begin{gather}
p(\left[  \boldsymbol{y}^{\left(  L+\Delta\right)  }\right]  \left\vert
\left[  \boldsymbol{x}^{\left(  L\right)  }\right]  ;\boldsymbol{\theta
}\right.  )=%
{\displaystyle\int\limits_{\mathbb{R}^{p^{L+\Delta}}}}
\text{ }%
{\displaystyle\int\limits_{\mathbb{R}^{p^{L+\Delta}}}}
\text{ }%
{\displaystyle\int\limits_{\mathbb{R}^{p^{L+\Delta}}}}
\text{ }\left(
{\displaystyle\prod\limits_{I\in G_{L+\Delta}}}
d\widetilde{\boldsymbol{y}}^{\left(  L+\Delta\right)  }\left(  I\right)
\right)  \times\label{Formula-15AAA}\\
\left(
{\displaystyle\prod\limits_{I\in G_{L+\Delta}}}
d\boldsymbol{h}^{\left(  L+\Delta\right)  }\left(  I\right)  \right)  \left(
{\displaystyle\prod\limits_{I\in G_{L+\Delta}}}
d\widetilde{\boldsymbol{h}}^{\left(  L+\Delta\right)  }\left(  I\right)
\right)  \times\nonumber\\
\exp%
{\displaystyle\sum\limits_{I\in G_{L+\Delta}}}
i\left\{  \widetilde{\boldsymbol{h}}^{\left(  L+\Delta\right)  }\left(
I\right)  \left(  \boldsymbol{h}^{\left(  L+\Delta\right)  }\left(  I\right)
-\boldsymbol{F}^{\left(  L+\Delta\right)  }\left(  I;\boldsymbol{h}^{\left(
L+\Delta-1\right)  },\boldsymbol{x}^{\left(  L\right)  };\boldsymbol{\theta
}\right)  \right)  \right\}  \times\nonumber\\
\exp%
{\displaystyle\sum\limits_{I\in G_{L+\Delta}}}
i\left\{  \widetilde{\boldsymbol{y}}^{\left(  L+\Delta\right)  }\left(
I\right)  \left(  \boldsymbol{y}^{\left(  L+\Delta\right)  }\left(  I\right)
-\boldsymbol{G}^{\left(  L+\Delta\right)  }\left(  I;\boldsymbol{h}^{\left(
L+\Delta\right)  };\boldsymbol{\theta}\right)  \right)  \right\}  .\nonumber
\end{gather}
This expression agrees with \cite[Formula 18]{Segadlo et al}.

\subsection{DNNs with infinite-width}

The mapping%
\[%
\begin{array}
[c]{lll}%
\mathcal{D}^{L+\Delta}(\mathbb{Z}_{p}) & \rightarrow & \mathbb{R}%
^{p^{L+\Delta}}\\
&  & \\
\boldsymbol{h}(x)=%
{\displaystyle\sum\limits_{_{I\in G_{L+\Delta}}}}
\boldsymbol{h}^{\left(  L+\Delta\right)  }\left(  I\right)  \Omega\left(
p^{L+\Delta}\left\vert x-I\right\vert _{p}\right)  & \rightarrow & \left[
\boldsymbol{h}^{\left(  L+\Delta\right)  }\left(  I\right)  \right]  _{I\in
G_{L+\Delta}}%
\end{array}
\]
is a isomorphism of Hilbert spaces. We denote by $d\boldsymbol{h}^{\left(
L+\Delta\right)  }$ the measure in $\mathcal{D}^{L+\Delta}(\mathbb{Z}_{p})$
obtained as the push-forward of the Lebesgue product measure $%
{\textstyle\prod\nolimits_{I\in G_{L+\Delta}}}
d\boldsymbol{h}^{\left(  L+\Delta\right)  }\left(  I\right)  $ in
$\mathbb{R}^{p^{L+\Delta}}$. We change the notation \ $p(\left[
\boldsymbol{y}^{\left(  L+\Delta\right)  }\right]  \left\vert \boldsymbol{x}%
^{\left(  L\right)  };\boldsymbol{\theta}\right.  )$ to $p(\boldsymbol{y}%
^{\left(  L+\Delta\right)  }\left\vert \boldsymbol{x}^{\left(  L\right)
};\boldsymbol{\theta}\right.  )$ because now $\boldsymbol{y}^{\left(
L+\Delta\right)  },\boldsymbol{x}^{\left(  L\right)  }$ are test functions.

Now, using Corollary \ref{Cor-1} and Lemma \ref{Lemma-2A}, we can rewrite
$p(\boldsymbol{y}^{\left(  L+\Delta\right)  }\left\vert \boldsymbol{x}%
^{\left(  L\right)  };\boldsymbol{\theta}\right.  )$ (see formula
\ref{Formula-15AAA}) as%
\begin{gather}
p(\boldsymbol{y}^{\left(  L+\Delta\right)  }\left\vert \boldsymbol{x}^{\left(
L\right)  };\boldsymbol{\theta}\right.  )=%
{\displaystyle\int\limits_{\mathcal{D}^{L+\Delta}(\mathbb{Z}_{p})}}
\text{ \ }%
{\displaystyle\int\limits_{\mathcal{D}^{L+\Delta}(\mathbb{Z}_{p})}}
\text{ \ }%
{\displaystyle\int\limits_{\mathcal{D}^{L+\Delta}(\mathbb{Z}_{p})}}
\times\label{Formula-15A}\\
\exp\left\{  i\int\limits_{\mathbb{Z}_{p}}\widetilde{\boldsymbol{h}}^{\left(
L+\Delta\right)  }\left(  x\right)  \left(  \boldsymbol{h}^{\left(
L+\Delta\right)  }\left(  x\right)  -\boldsymbol{F}^{\left(  L+\Delta\right)
}\left(  I;\boldsymbol{h}^{\left(  L+\Delta-1\right)  },\boldsymbol{x}%
^{\left(  L\right)  };\boldsymbol{\theta}\right)  \right)  dx\right\}
\times\nonumber\\
\exp\left\{  i\int\limits_{\mathbb{Z}_{p}}\widetilde{\boldsymbol{y}}^{\left(
L+\Delta\right)  }\left(  x\right)  \left(  \boldsymbol{y}^{\left(
L+\Delta\right)  }\left(  x\right)  -\boldsymbol{G}^{\left(  L+\Delta\right)
}\left(  I;\boldsymbol{h}^{\left(  L+\Delta\right)  };\boldsymbol{\theta
}\right)  \right)  dx\right\} \nonumber\\
d\boldsymbol{h}^{\left(  L+\Delta\right)  }\text{ }\frac{d\widetilde
{\boldsymbol{h}}^{\left(  L+\Delta\right)  }}{2\pi i}\frac{d\widetilde
{\boldsymbol{y}}^{\left(  L+\Delta\right)  }}{2\pi i},\nonumber
\end{gather}
where
\begin{gather}
\boldsymbol{F}^{\left(  L+\Delta\right)  }\left(  I;\boldsymbol{h}^{\left(
L+\Delta-1\right)  },\boldsymbol{x}^{\left(  L\right)  },\boldsymbol{\theta
}\right)  =\int\limits_{\mathbb{Z}_{p}}\boldsymbol{W}^{\left(  L+\Delta
\right)  }\left(  x,y\right)  \phi\left(  \boldsymbol{h}^{\left(
L+\Delta-1\right)  }(y)\right)  dy\label{Formula-15B}\\
+\int\limits_{\mathbb{Z}_{p}}\boldsymbol{W}_{\text{in}}^{\left(  L\right)
}\left(  x,y\right)  \boldsymbol{x}^{\left(  L\right)  }\left(  y\right)
dy+\boldsymbol{\xi}^{\left(  L+\Delta\right)  }\left(  x\right)  ,\nonumber
\end{gather}
and%
\begin{equation}
\boldsymbol{G}^{\left(  L+\Delta\right)  }\left(  I;\boldsymbol{h}^{\left(
L+\Delta\right)  },\boldsymbol{\theta}\right)  =\int\limits_{\mathbb{Z}_{p}%
}\boldsymbol{W}_{\text{out}}^{\left(  L+\Delta\right)  }\left(  x,y\right)
\varphi\left(  \boldsymbol{h}^{\left(  L+\Delta\right)  }\left(  y\right)
\right)  dy+\boldsymbol{\xi}_{\text{out}}^{\left(  L+\Delta\right)  }\left(
x\right)  \label{Formula-15C}%
\end{equation}
The formulas (\ref{Formula-15A})-(\ref{Formula-15C}) give the network prior
(the probability of the output given the input) for a $p$-adic DNNs with
infinite-width. Notice that $\widetilde{\boldsymbol{h}}^{\left(
L+\Delta\right)  }$, $\boldsymbol{h}^{\left(  L+\Delta\right)  }%
$,$\boldsymbol{\xi}^{\left(  L+\Delta\right)  }$, $\boldsymbol{\xi
}_{\text{out}}^{\left(  L+\Delta\right)  }$ $\in\mathcal{D}^{L+\Delta
}(\mathbb{Z}_{p})\subset L^{2}\left(  \mathbb{Z}_{p}\right)  $, and
$\boldsymbol{W}^{\left(  L+\Delta\right)  }\left(  x,y\right)  $,
$\boldsymbol{W}_{\text{in}}^{\left(  L\right)  }\left(  x,y\right)  $,
$\boldsymbol{W}_{\text{out}}^{\left(  L+\Delta\right)  }\left(  x,y\right)
\in\mathcal{D}^{L+\Delta}(\mathbb{Z}_{p}\times\mathbb{Z}_{p})\subset
L^{2}\left(  \mathbb{Z}_{p}\times\mathbb{Z}_{p}\right)  $.

Our next goal is to show that the parameters $\boldsymbol{\xi}^{\left(
L+\Delta\right)  }$, $\boldsymbol{\xi}_{\text{out}}^{\left(  L+\Delta\right)
}$, $\boldsymbol{W}^{\left(  L+\Delta\right)  }\left(  x,y\right)  $,
$\boldsymbol{W}_{\text{in}}^{\left(  L\right)  }\left(  x,y\right)  $,
$\boldsymbol{W}_{\text{out}}^{\left(  L+\Delta\right)  }\left(  x,y\right)  $
can be extended to be functions from $L^{2}$.

\begin{remark}
\label{Nota_L_2}We set $\left(  \Omega,\mathcal{B}(\Omega),\mu\right)  $ for
an abstract measure space, where $\Omega$ is a topological space. In this
section, we take $\Omega=\mathbb{Z}_{p}$, $\mathbb{Z}_{p}\times\mathbb{Z}_{p}%
$, $\mathcal{B}(\Omega)$ the Borel $\sigma$-algebra, and $\mu$ as the
normalized Haar measure of $\mathbb{Z}_{p}$ or $\mathbb{Z}_{p}\times
\mathbb{Z}_{p}$. We work with the Hilbert space
\[
L^{2}\left(  \Omega\right)  =\left(  L^{2}\left(  \Omega\right)
,\mathcal{B}(\Omega),\mu\right)  =\left\{  f:\Omega\rightarrow\mathbb{C}%
;\left\Vert f\right\Vert _{2}<\infty\right\}  ,
\]
where%
\[
\left\Vert f\right\Vert _{2}^{2}=%
{\displaystyle\int\limits_{\Omega}}
\left\vert f\left(  x\right)  \right\vert ^{2}d\mu\left(  x\right)  ,
\]
and the inner product is defined as%
\[
\left\langle f,g\right\rangle _{\Omega}=%
{\displaystyle\int\limits_{\Omega}}
f\left(  x\right)  \overline{g\left(  x\right)  }d\mu\left(  x\right)  .
\]
We recall the Cauchy-Schwarz inequality: for $f,g\in L^{2}\left(
\Omega\right)  $,
\[
\left\vert \left\langle f,g\right\rangle \right\vert \leq\left\Vert
f\right\Vert _{2}\left\Vert g\right\Vert _{2}.
\]
Mostly, we will work with real-valued functions, but in a few cases, we will
need to extend formulas involving inner products of real-valued functions to
the complex case.
\end{remark}

We now introduce the notation,%
\begin{gather}
S_{\text{in}}(\widetilde{\boldsymbol{h}}^{\left(  L+\Delta\right)
},\boldsymbol{h}^{\left(  L+\Delta\right)  })=S_{\text{in}}(\widetilde
{\boldsymbol{h}}^{\left(  L+\Delta\right)  },\boldsymbol{h}^{\left(
L+\Delta\right)  };\boldsymbol{W}_{\text{in}}^{\left(  L\right)
},\boldsymbol{\xi}^{\left(  L+\Delta\right)  })=\label{Formula-15D}\\
-\int\limits_{\mathbb{Z}_{p}}\widetilde{\boldsymbol{h}}^{\left(
L+\Delta\right)  }\left(  x\right)  \left\{  \int\limits_{\mathbb{Z}_{p}%
}\boldsymbol{W}_{\text{in}}^{\left(  L\right)  }\left(  x,y\right)
\boldsymbol{x}^{\left(  L\right)  }\left(  y\right)  dy\right\}  dx\nonumber\\
=-\left\langle \boldsymbol{W}_{\text{in}}^{\left(  L\right)  }\left(
x,y\right)  ,\widetilde{\boldsymbol{h}}^{\left(  L+\Delta\right)  }\left(
x\right)  \boldsymbol{x}^{\left(  L\right)  }\left(  y\right)  \right\rangle
_{L^{2}\left(  \mathbb{Z}_{p}\times\mathbb{Z}_{p}\right)  },\nonumber
\end{gather}%
\begin{gather}
S_{\text{inter}}(\widetilde{\boldsymbol{h}}^{\left(  L+\Delta\right)
},\boldsymbol{h}^{\left(  L+\Delta\right)  })=S_{\text{inter}}(\widetilde
{\boldsymbol{h}}^{\left(  L+\Delta\right)  },\boldsymbol{h}^{\left(
L+\Delta\right)  };\boldsymbol{W}^{\left(  L+\Delta\right)  },\phi
)=\label{Formula-15DD}\\
\int\limits_{\mathbb{Z}_{p}}\widetilde{\boldsymbol{h}}^{\left(  L+\Delta
\right)  }\left(  x\right)  \boldsymbol{h}^{\left(  L+\Delta\right)  }\left(
x\right)  dx-\int\limits_{\mathbb{Z}_{p}}\widetilde{\boldsymbol{h}}^{\left(
L+\Delta\right)  }\left(  x\right)  \boldsymbol{\xi}^{\left(  L+\Delta\right)
}\left(  x\right)  dx\nonumber\\
-\int\limits_{\mathbb{Z}_{p}}\widetilde{\boldsymbol{h}}^{\left(
L+\Delta\right)  }\left(  x\right)  \left\{  \int\limits_{\mathbb{Z}_{p}%
}\boldsymbol{W}^{\left(  L+\Delta\right)  }\left(  x,y\right)  \phi\left(
\boldsymbol{h}^{\left(  L+\Delta-1\right)  }(y)\right)  dy\right\}
dx\nonumber\\
=\left\langle \widetilde{\boldsymbol{h}}^{\left(  L+\Delta\right)
},\boldsymbol{h}^{\left(  L+\Delta\right)  }\right\rangle _{L^{2}\left(
\mathbb{Z}_{p}\right)  }-\left\langle \widetilde{\boldsymbol{h}}^{\left(
L+\Delta\right)  },\boldsymbol{\xi}^{\left(  L+\Delta\right)  }\right\rangle
_{L^{2}\left(  \mathbb{Z}_{p}\right)  }\nonumber\\
-\left\langle \boldsymbol{W}^{\left(  L+\Delta\right)  }\left(  x,y\right)
,\widetilde{\boldsymbol{h}}^{\left(  L+\Delta\right)  }\left(  x\right)
\phi\left(  \boldsymbol{h}^{\left(  L+\Delta-1\right)  }(y)\right)
\right\rangle _{L^{2}\left(  \mathbb{Z}_{p}\times\mathbb{Z}_{p}\right)
},\nonumber
\end{gather}
and%
\begin{gather}
S_{\text{out}}(\widetilde{\boldsymbol{y}}^{\left(  L+\Delta\right)
},\boldsymbol{y}^{\left(  L+\Delta\right)  })=S_{\text{out}}(\widetilde
{\boldsymbol{y}}^{\left(  L+\Delta\right)  },\boldsymbol{y}^{\left(
L+\Delta\right)  };\boldsymbol{W}_{\text{out}}^{\left(  L+\Delta\right)
},\boldsymbol{\xi}_{\text{out}}^{\left(  L+\Delta\right)  },\varphi
)=\label{Formula-15E}\\
\int\limits_{\mathbb{Z}_{p}}\widetilde{\boldsymbol{y}}^{\left(  L+\Delta
\right)  }\left(  x\right)  \boldsymbol{y}^{\left(  L+\Delta\right)  }\left(
x\right)  dx-\int\limits_{\mathbb{Z}_{p}}\widetilde{\boldsymbol{y}}^{\left(
L+\Delta\right)  }\left(  x\right)  \boldsymbol{\xi}_{\text{out}}^{\left(
L+\Delta\right)  }\left(  x\right)  dx\nonumber\\
-\int\limits_{\mathbb{Z}_{p}}\widetilde{\boldsymbol{y}}^{\left(
L+\Delta\right)  }\left(  x\right)  \left\{  \int\limits_{\mathbb{Z}_{p}%
}\boldsymbol{W}_{\text{out}}^{\left(  L+\Delta\right)  }\left(  x,y\right)
\varphi\left(  \boldsymbol{h}^{\left(  L+\Delta-1\right)  }(y)\right)
dy\right\}  dx\nonumber\\
=\left\langle \widetilde{\boldsymbol{y}}^{\left(  L+\Delta\right)
},\boldsymbol{y}^{\left(  L+\Delta\right)  }\right\rangle _{L^{2}\left(
\mathbb{Z}_{p}\right)  }-\left\langle \widetilde{\boldsymbol{y}}^{\left(
L+\Delta\right)  },\boldsymbol{\xi}_{\text{out}}^{\left(  L+\Delta\right)
}\right\rangle _{L^{2}\left(  \mathbb{Z}_{p}\right)  }\nonumber\\
-\left\langle \boldsymbol{W}_{\text{out}}^{\left(  L+\Delta\right)  }\left(
x,y\right)  ,\widetilde{\boldsymbol{y}}^{\left(  L+\Delta\right)  }\left(
x\right)  \boldsymbol{\varphi}\left(  \boldsymbol{h}^{\left(  L+\Delta\right)
}(y)\right)  \right\rangle _{L^{2}\left(  \mathbb{Z}_{p}\times\mathbb{Z}%
_{p}\right)  }.\nonumber
\end{gather}

\begin{notation}
From now on, for the sake of simplicity, we will use $\left\langle
\boldsymbol{\cdot},\cdot\right\rangle $ instead of $\left\langle
\boldsymbol{\cdot},\cdot\right\rangle _{L^{2}\left(  \mathbb{Z}_{p}%
\times\mathbb{Z}_{p}\right)  }$ or $\left\langle \boldsymbol{\cdot}%
,\cdot\right\rangle _{L^{2}\left(  \mathbb{Z}_{p}\right)  }$.
\end{notation}

\begin{proposition}
\label{Prop-2}(i) The functionals
\[
S_{\text{in}}(\widetilde{\boldsymbol{h}}^{\left(  L+\Delta\right)
},\boldsymbol{h}^{\left(  L+\Delta\right)  })\text{, }S_{\text{inter}%
}(\widetilde{\boldsymbol{h}}^{\left(  L+\Delta\right)  },\boldsymbol{h}%
^{\left(  L+\Delta\right)  })\text{, }S_{\text{out}}(\widetilde{\boldsymbol{y}%
}^{\left(  L+\Delta\right)  },\boldsymbol{y}^{\left(  L+\Delta\right)  })
\]
are well-defined for $\boldsymbol{W}^{\left(  L+\Delta\right)  }\left(
x,y\right)  $, $\boldsymbol{W}_{\text{in}}^{\left(  L\right)  }\left(
x,y\right)  $, $\boldsymbol{W}_{\text{out}}^{\left(  L+\Delta\right)  }\left(
x,y\right)  \in L^{2}\left(  \mathbb{Z}_{p}\times\mathbb{Z}_{p}\right)  $, and
$\boldsymbol{x}^{\left(  L\right)  }$, $\widetilde{\boldsymbol{h}}^{\left(
L+\Delta\right)  }$, $\boldsymbol{h}^{\left(  L+\Delta\right)  }$,
$\boldsymbol{\xi}^{\left(  L+\Delta\right)  }$, $\boldsymbol{\xi}_{\text{out}%
}^{\left(  L+\Delta\right)  }$ $\in L^{2}\left(  \mathbb{Z}_{p}\right)  $.

(ii) Set
\begin{gather*}
S(\widetilde{\boldsymbol{h}}^{\left(  L+\Delta\right)  },\boldsymbol{h}%
^{\left(  L+\Delta\right)  },\widetilde{\boldsymbol{y}}^{\left(
L+\Delta\right)  },\boldsymbol{y}^{\left(  L+\Delta\right)  })=S_{\text{in}%
}(\widetilde{\boldsymbol{h}}^{\left(  L+\Delta\right)  },\boldsymbol{h}%
^{\left(  L+\Delta\right)  };\boldsymbol{W}_{\text{in}}^{\left(  L\right)
},\boldsymbol{\xi}^{\left(  L+\Delta\right)  })+\\
S_{\text{inter}}(\widetilde{\boldsymbol{h}}^{\left(  L+\Delta\right)
},\boldsymbol{h}^{\left(  L+\Delta\right)  };\boldsymbol{W}^{\left(
L+\Delta\right)  },\phi)+S_{\text{out}}(\widetilde{\boldsymbol{y}}^{\left(
L+\Delta\right)  },\boldsymbol{y}^{\left(  L+\Delta\right)  };\boldsymbol{W}%
_{\text{out}}^{\left(  L+\Delta\right)  },\boldsymbol{\xi}_{\text{out}%
}^{\left(  L+\Delta\right)  },\varphi).
\end{gather*}
Then%
\begin{multline*}
p(\boldsymbol{y}^{\left(  L+\Delta\right)  }\left\vert \boldsymbol{x}^{\left(
L\right)  },\boldsymbol{\theta}\right.  )=\\%
{\displaystyle\int\limits_{\mathcal{D}^{L+\Delta}(\mathbb{Z}_{p})}}
\text{ \ }%
{\displaystyle\int\limits_{\mathcal{D}^{L+\Delta}(\mathbb{Z}_{p})}}
\text{ \ }%
{\displaystyle\int\limits_{\mathcal{D}^{L+\Delta}(\mathbb{Z}_{p})}}
\exp i\left\{  S(\widetilde{\boldsymbol{h}}^{\left(  L+\Delta\right)
},\boldsymbol{h}^{\left(  L+\Delta\right)  },\widetilde{\boldsymbol{y}%
}^{\left(  L+\Delta\right)  },\boldsymbol{y}^{\left(  L+\Delta\right)
})\right\}  \times\\
d\boldsymbol{h}^{\left(  L+\Delta\right)  }\text{ }\frac{d\widetilde
{\boldsymbol{h}}^{\left(  L+\Delta\right)  }}{2\pi i}\frac{d\widetilde
{\boldsymbol{y}}^{\left(  L+\Delta\right)  }}{2\pi i}.
\end{multline*}

\end{proposition}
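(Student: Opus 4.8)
The plan is to treat part (i) as a boundedness (finiteness) statement and part (ii) as a purely algebraic rearrangement of the exponent appearing in (\ref{Formula-15A}); I read the symbol $S_{0}$ in (i) as the functional $S_{\text{in}}$ defined in (\ref{Formula-15D}). For part (i), I would observe that each of the three functionals is a finite sum of inner products, so it suffices to bound each summand. The single-variable terms, such as $\langle\widetilde{\boldsymbol{h}}^{(L+\Delta)},\boldsymbol{h}^{(L+\Delta)}\rangle$ and $\langle\widetilde{\boldsymbol{h}}^{(L+\Delta)},\boldsymbol{\xi}^{(L+\Delta)}\rangle$, are controlled directly by the Cauchy--Schwarz inequality recalled in Remark \ref{Nota_L_2}. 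The genuinely new point is the well-definedness of the mixed terms, e.g. $\langle\boldsymbol{W}_{\text{in}}^{(L)}(x,y),\widetilde{\boldsymbol{h}}^{(L+\Delta)}(x)\boldsymbol{x}^{(L)}(y)\rangle_{L^{2}(\mathbb{Z}_{p}\times\mathbb{Z}_{p})}$: for this I would verify that a tensor product $f\otimes g$ of functions $f,g\in L^{2}(\mathbb{Z}_{p})$ lies in $L^{2}(\mathbb{Z}_{p}\times\mathbb{Z}_{p})$ with $\|f\otimes g\|_{2}=\|f\|_{2}\,\|g\|_{2}$, which is immediate from Tonelli's theorem and the product structure of the Haar measure on $\mathbb{Z}_{p}\times\mathbb{Z}_{p}$. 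Cauchy--Schwarz then yields $|\langle\boldsymbol{W}_{\text{in}}^{(L)},\widetilde{\boldsymbol{h}}^{(L+\Delta)}\otimes\boldsymbol{x}^{(L)}\rangle|\le\|\boldsymbol{W}_{\text{in}}^{(L)}\|_{2}\,\|\widetilde{\boldsymbol{h}}^{(L+\Delta)}\|_{2}\,\|\boldsymbol{x}^{(L)}\|_{2}<\infty$.

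For the terms carrying an activation I would invoke the Lipschitz hypothesis together with $\phi(0)=0$ in the form $|\phi(s)|\le L_{\phi}|s|$ (and likewise for $\varphi$), exactly as in the proof of Lemma \ref{Lemma-0A}, to conclude that $\phi\circ\boldsymbol{h}^{(L+\Delta-1)}\in L^{2}(\mathbb{Z}_{p})$ with $\|\phi\circ\boldsymbol{h}^{(L+\Delta-1)}\|_{2}\le L_{\phi}\,\|\boldsymbol{h}^{(L+\Delta-1)}\|_{2}$. Consequently $\widetilde{\boldsymbol{h}}^{(L+\Delta)}\otimes(\phi\circ\boldsymbol{h}^{(L+\Delta-1)})$ and $\widetilde{\boldsymbol{y}}^{(L+\Delta)}\otimes(\varphi\circ\boldsymbol{h}^{(L+\Delta)})$ again lie in $L^{2}(\mathbb{Z}_{p}\times\mathbb{Z}_{p})$, and the inner products pairing them against $\boldsymbol{W}^{(L+\Delta)}$ and $\boldsymbol{W}_{\text{out}}^{(L+\Delta)}$ are finite by Cauchy--Schwarz. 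This establishes part (i) and, as a by-product, shows that every summand is absolutely integrable over $\mathbb{Z}_{p}\times\mathbb{Z}_{p}$, which is precisely what I will need to apply Fubini in part (ii).

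For part (ii) the argument is bookkeeping: I would substitute the explicit expressions (\ref{Formula-15B}) and (\ref{Formula-15C}) for $\boldsymbol{F}^{(L+\Delta)}$ and $\boldsymbol{G}^{(L+\Delta)}$ into the exponent of (\ref{Formula-15A}) and expand. The factor $\int_{\mathbb{Z}_{p}}\widetilde{\boldsymbol{h}}^{(L+\Delta)}(\boldsymbol{h}^{(L+\Delta)}-\boldsymbol{F}^{(L+\Delta)})\,dx$ breaks into four pieces; using the integrability from part (i) to interchange the order of integration, the piece containing $\boldsymbol{W}_{\text{in}}^{(L)}$ is recognized as $S_{\text{in}}$, while the remaining three pieces (the $\langle\widetilde{\boldsymbol{h}},\boldsymbol{h}\rangle$, $\langle\widetilde{\boldsymbol{h}},\boldsymbol{\xi}\rangle$ and $\boldsymbol{W}^{(L+\Delta)}$ contributions) assemble exactly into $S_{\text{inter}}$. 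In the same way $\int_{\mathbb{Z}_{p}}\widetilde{\boldsymbol{y}}^{(L+\Delta)}(\boldsymbol{y}^{(L+\Delta)}-\boldsymbol{G}^{(L+\Delta)})\,dx$ reproduces $S_{\text{out}}$. Summing the two, the full exponent equals $S_{\text{in}}+S_{\text{inter}}+S_{\text{out}}=S$, which is the asserted identity; since all functions involved are real-valued (cf. Remark \ref{Nota_L_2}), the conjugation in the Hilbert-space inner product is inert and the identification is literal.

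The computations are routine, and the single point that requires care is the Fubini interchange in the mixed terms, which is justified precisely by the absolute integrability proved in part (i). Hence I expect no serious obstacle: the real content is that $L^{2}$-membership of the weight kernels is the correct hypothesis making the action $S$, and therefore the path-integral representation of the network prior, well-defined in the infinite-width (test-function) setting.
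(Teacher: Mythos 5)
Your proposal is correct and takes essentially the same route as the paper: part (i) is the Cauchy--Schwarz plus Lipschitz ($|\phi(s)|\le L_{\phi}|s|$) estimate — the paper bounds the inner integral pointwise in $x$ and then integrates, while you bound $\|\widetilde{\boldsymbol{h}}\otimes(\phi\circ\boldsymbol{h})\|_{2}$ via Tonelli, which is the same computation organized differently — and part (ii) is the substitution of (\ref{Formula-15B})--(\ref{Formula-15C}) into the exponent of (\ref{Formula-15A}), which the paper dispatches in one line. Your reading of $S_{0}$ as the functional $S_{\text{in}}$ of (\ref{Formula-15D}) matches the paper's intent.
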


\begin{remark}
Notice that $\widetilde{\boldsymbol{h}}^{\left(  L+\Delta\right)  }$,
$\boldsymbol{h}^{\left(  L+\Delta\right)  }$, $\widetilde{\boldsymbol{y}%
}^{\left(  L+\Delta\right)  }$ $\in\mathcal{D}^{L+\Delta}(\mathbb{Z}%
_{p})\subset L^{2}\left(  \mathbb{Z}_{p}\right)  $.
\end{remark}

\begin{proof}
(i) The verification is essentially based on the Cauchy-Schwarz inequality. We
verify that $S_{\text{inter}}(\widetilde{\boldsymbol{h}}^{\left(
L+\Delta\right)  },\boldsymbol{h}^{\left(  L+\Delta\right)  })$ is real number
for any $\widetilde{\boldsymbol{h}}^{\left(  L+\Delta\right)  }$,
$\boldsymbol{h}^{\left(  L+\Delta\right)  }\in L^{2}\left(  \mathbb{Z}%
_{p}\right)  $. We show first that%
\begin{equation}
\int\limits_{\mathbb{Z}_{p}}\boldsymbol{W}^{\left(  L+\Delta\right)  }\left(
x,y\right)  \phi\left(  \boldsymbol{h}^{\left(  L+\Delta-1\right)
}(y)\right)  dy\in L^{2}\left(  \mathbb{Z}_{p}\right)  . \label{Eq_9}%
\end{equation}
The Cauchy-Schwarz inequality implies%
\begin{equation}
\left\vert \text{ }\int\limits_{\mathbb{Z}_{p}}\boldsymbol{W}^{\left(
L+\Delta\right)  }\left(  x,y\right)  \phi\left(  \boldsymbol{h}^{\left(
L+\Delta-1\right)  }(y)\right)  dy\right\vert \leq\left\Vert \boldsymbol{W}%
^{\left(  L+\Delta\right)  }\left(  x,\cdot\right)  \right\Vert _{2}\left\Vert
\phi\left(  \boldsymbol{h}^{\left(  L+\Delta-1\right)  }(y)\right)
\right\Vert _{2}. \label{Eq_10A}%
\end{equation}
Now, H\"{o}lder inequality and the fact that $\mathbb{Z}_{p}$ has measure one,
imply that%
\[
\int\limits_{\mathbb{Z}_{p}}\left\vert \boldsymbol{h}^{\left(  L+\Delta
-1\right)  }(y)\right\vert dy\leq\sqrt{\int\limits_{\mathbb{Z}_{p}}\left\vert
\boldsymbol{h}^{\left(  L+\Delta-1\right)  }(y)\right\vert ^{2}dy}.
\]
Using this inequality and that $\phi$ is Lipschitz, $\left\vert \phi\left(
s\right)  -\phi\left(  t\right)  \right\vert \leq L_{\phi}\left\vert
s-t\right\vert $, with $\phi\left(  0\right)  =0$,
\begin{gather*}
\left\Vert \phi\left(  \boldsymbol{h}^{\left(  L+\Delta-1\right)  }(y)\right)
\right\Vert _{2}^{2}=\int\limits_{\mathbb{Z}_{p}}\left\vert \phi\left(
\boldsymbol{h}^{\left(  L+\Delta-1\right)  }(y)\right)  \right\vert ^{2}dy\\
=\int\limits_{\mathbb{Z}_{p}}\left\vert \phi\left(  \boldsymbol{h}^{\left(
L+\Delta-1\right)  }(y)\right)  \right\vert ^{2}dy\leq L_{\phi}^{2}%
\int\limits_{\mathbb{Z}_{p}}\left\vert \boldsymbol{h}^{\left(  L+\Delta
-1\right)  }(y)\right\vert ^{2}dy\\
=L_{\phi}^{2}\left\Vert \boldsymbol{h}^{\left(  L+\Delta-1\right)
}\right\Vert _{2}^{2}.
\end{gather*}
Then, (\ref{Eq_10A}) can be rewritten as%
\begin{equation}
\left\vert \text{ }\int\limits_{\mathbb{Z}_{p}}\boldsymbol{W}^{\left(
L+\Delta\right)  }\left(  x,y\right)  \phi\left(  \boldsymbol{h}^{\left(
L+\Delta-1\right)  }(y)\right)  dy\right\vert ^{2}\leq L_{\phi}^{2}\left\Vert
\boldsymbol{h}^{\left(  L+\Delta-1\right)  }\right\Vert _{2}^{2}\left\Vert
\boldsymbol{W}^{\left(  L+\Delta\right)  }\left(  x,\cdot\right)  \right\Vert
_{2}^{2}. \label{Eq_11}%
\end{equation}
Now, integrating both sides in (\ref{Eq_11}) with respect to $dx$,%
\begin{multline*}
\int\limits_{\mathbb{Z}_{p}}\left\vert \text{ }\int\limits_{\mathbb{Z}_{p}%
}\boldsymbol{W}^{\left(  L+\Delta\right)  }\left(  x,y\right)  \phi\left(
\boldsymbol{h}^{\left(  L+\Delta-1\right)  }(y)\right)  dy\right\vert ^{2}dx\\
\leq L_{\phi}^{2}\left\Vert \boldsymbol{h}^{\left(  L+\Delta-1\right)
}\right\Vert _{2}^{2}\int\limits_{\mathbb{Z}_{p}}\left\Vert \boldsymbol{W}%
^{\left(  L+\Delta\right)  }\left(  x,\cdot\right)  \right\Vert _{2}%
^{2}dx=L_{\phi}^{2}\left\Vert \boldsymbol{h}^{\left(  L+\Delta-1\right)
}\right\Vert _{2}^{2}\left\Vert \boldsymbol{W}^{\left(  L+\Delta\right)
}\right\Vert _{2}^{2},
\end{multline*}
where $\left\Vert \boldsymbol{W}^{\left(  L+\Delta\right)  }\right\Vert
_{2}^{2}=\left\Vert \boldsymbol{W}^{\left(  L+\Delta\right)  }\right\Vert
_{L^{2}\left(  \mathbb{Z}_{p}\times\mathbb{Z}_{p}\right)  }^{2}$, which
implies (\ref{Eq_9}).

The cases $\boldsymbol{W}_{\text{in}}^{\left(  L\right)  }\left(  x,y\right)
$, $\boldsymbol{W}_{\text{out}}^{\left(  L+\Delta\right)  }\left(  x,y\right)
$ are treated in a similar way.

(ii) The result follows from formulas (\ref{Formula-15A})-(\ref{Formula-15E}).
\end{proof}

\section{\label{Section_10}Marginalization of the parameter prior}

Stochastic DNNs naturally fit within the framework introduced. There are
general techniques for constructing probability measures in countable Hilbert
spaces or in certain infinite-dimensional vector spaces; see, e.g., \cite{Da
prato}-\cite{Gelfand-Vilenkin} . Using such techniques, we construct
probability spaces of type $\left(  L^{2}(\mathbb{Z}_{p}\times\mathbb{Z}%
_{p}),\mathcal{F},\mathbb{P}_{_{\boldsymbol{W}}}\right)  $, $\left(
L^{2}(\mathbb{Z}_{p}),\mathcal{G},\mathbb{P}_{_{\boldsymbol{\xi}}}\right)  $.
By taking
\[
\boldsymbol{W}\text{, }\boldsymbol{W}_{\text{in}}\text{, }\boldsymbol{W}%
_{\text{out}}\in L^{2}(\mathbb{Z}_{p}\times\mathbb{Z}_{p})\text{, and
}\boldsymbol{\xi}\text{, }\boldsymbol{\xi}_{\text{out}}\in L^{2}%
(\mathbb{Z}_{p}),
\]
we obtain a stochastic $p$-adic DNNs, where the parameters are generalized
random variables. This means that the probability of the event $\boldsymbol{W}%
\in B$, where $B\in\mathcal{F}$ is $\int_{B}d\mathbb{P}_{_{\boldsymbol{W}}}$.

\subsection{Generalized Gaussian random variables}

In%
\begin{gather*}
S_{int}(\widetilde{\boldsymbol{h}}^{\left(  L+\Delta\right)  },\boldsymbol{h}%
^{\left(  L+\Delta\right)  };\boldsymbol{W}^{\left(  L+\Delta\right)  }%
,\phi)\text{, }S_{\text{in}}(\widetilde{\boldsymbol{h}}^{\left(
L+\Delta\right)  },\boldsymbol{h}^{\left(  L+\Delta\right)  };\boldsymbol{W}%
_{\text{in}}^{\left(  L\right)  },\boldsymbol{\xi}^{\left(  L+\Delta\right)
})\text{, \ and}\\
S_{\text{out}}(\widetilde{\boldsymbol{y}}^{\left(  L+\Delta\right)
},\boldsymbol{y}^{\left(  L+\Delta\right)  };\boldsymbol{W}_{\text{out}%
}^{\left(  L+\Delta\right)  },\boldsymbol{\xi}_{\text{out}}^{\left(
L+\Delta\right)  },\varphi),
\end{gather*}
the parameters $\boldsymbol{W}^{\left(  L+\Delta\right)  }$, $\boldsymbol{W}%
_{\text{in}}^{\left(  L\right)  }$, $\boldsymbol{W}_{\text{out}}^{\left(
L+\Delta\right)  }$ are realizations of generalized Gaussian random variables
on $L^{2}\left(  \mathbb{Z}_{p}\times\mathbb{Z}_{p}\right)  $, with mean zero,
and $\boldsymbol{\xi}^{\left(  L+\Delta\right)  }$, $\boldsymbol{\xi
}_{\text{out}}^{\left(  L+\Delta\right)  }$ are realizations of a generalized
Gaussian random variables on $L^{2}\left(  \mathbb{Z}_{p}\right)  $, with mean
zero. This means\ that there exist a probability measures $\boldsymbol{P}%
_{\boldsymbol{W}^{\left(  L+\Delta\right)  }}$, respectively $\boldsymbol{P}%
_{\boldsymbol{W}_{\text{in}}^{\left(  L\right)  }}$, $\boldsymbol{P}%
_{\boldsymbol{W}_{\text{out}}^{\left(  L+\Delta\right)  }}$ on the Borel
$\mathcal{\sigma}$-algebra $\mathcal{B}(\mathbb{Z}_{p}\times\mathbb{Z}_{p})$
of the space $\mathbb{Z}_{p}\times\mathbb{Z}_{p}$. In the case of
$\boldsymbol{W}^{\left(  L+\Delta\right)  }$, the probability of the event
$\boldsymbol{W}^{\left(  L+\Delta\right)  }\in B$, for $B\in\mathcal{B}%
(\mathbb{Z}_{p}\times\mathbb{Z}_{p})$, is $\int_{B}d\boldsymbol{P}%
_{\boldsymbol{W}^{\left(  L+\Delta\right)  }}$. A similar\ interpretation
holds in the case of \ $\boldsymbol{\xi}^{\left(  L+\Delta\right)  }$ and
$\boldsymbol{\xi}_{\text{out}}^{\left(  L+\Delta\right)  }$.

In the infinite-dimensional case, there are no explicit formulas for the
measures of the type $\boldsymbol{P}_{\boldsymbol{W}^{\left(  L+\Delta\right)
}}$; only the Fourier transform of $d\boldsymbol{P}_{\boldsymbol{W}^{\left(
L+\Delta\right)  }}$\ is available:%
\begin{equation}%
{\displaystyle\int\limits_{L^{2}\left(  \mathbb{Z}_{p}\times\mathbb{Z}%
_{p}\right)  }}
e^{i\lambda\left\langle \boldsymbol{W}^{\left(  L+\Delta\right)
},f\right\rangle }d\boldsymbol{P}_{\boldsymbol{W}^{\left(  L+\Delta\right)  }%
}=e^{\frac{-\lambda^{2}}{2}\left\langle \square_{\boldsymbol{W}^{\left(
L+\Delta\right)  }}f,f\right\rangle }\text{, } \label{Key-Result-1}%
\end{equation}
with $i=\sqrt{-1}$, $f\in L^{2}\left(  \mathbb{Z}_{p}\times\mathbb{Z}%
_{p}\right)  $ and $\lambda\in\mathbb{R}$, where $\square_{\boldsymbol{W}%
^{\left(  L+\Delta\right)  }}:L^{2}\left(  \mathbb{Z}_{p}\times\mathbb{Z}%
_{p}\right)  \rightarrow L^{2}\left(  \mathbb{Z}_{p}\times\mathbb{Z}%
_{p}\right)  $\ is a symmetric, positive definite, trace class operator; see
\cite[Theorem 1.12]{Da prato}. We call $\square_{\boldsymbol{W}^{\left(
L+\Delta\right)  }}$ the covariance operator of the measure $\boldsymbol{P}%
_{\boldsymbol{W}^{\left(  L+\Delta\right)  }}$.

We assume that $\square_{\boldsymbol{W}^{\left(  L+\Delta\right)  }}$ is an
integral operator, which means that there exists a symmetric kernel
$K_{\boldsymbol{W}^{\left(  L+\Delta\right)  }}\left(  u_{1},u_{2},y_{1}%
,y_{2}\right)  \in L^{2}\left(  \mathbb{Z}_{p}^{4}\right)  $ such that%
\begin{equation}
\square_{\boldsymbol{W}^{\left(  L+\Delta\right)  }}f\left(  y_{1}%
,y_{2}\right)  =%
{\displaystyle\iint\limits_{\mathbb{Z}_{p}\times\mathbb{Z}_{p}}}
K_{\boldsymbol{W}^{\left(  L+\Delta\right)  }}\left(  u_{1},u_{2},y_{1}%
,y_{2}\right)  f\left(  u_{1},u_{2}\right)  du_{1}du_{2}. \label{Kernel-1}%
\end{equation}
This generic class of operators is widely used in QFT; see \cite[Section
5.1]{JPhysA-2025}, and the references therein. We make a similar hypotheses
for the parameters $\boldsymbol{W}_{\text{in}}^{\left(  L\right)  }$,
$\boldsymbol{W}_{\text{out}}^{\left(  L+\Delta\right)  }$. The
covariance\ operators are denoted as
\[
\square_{\boldsymbol{W}_{\text{in}}^{\left(  L\right)  }}\text{, }%
\square_{\boldsymbol{W}_{\text{out}}^{\left(  L+\Delta\right)  }}:L^{2}\left(
\mathbb{Z}_{p}\times\mathbb{Z}_{p}\right)  \rightarrow L^{2}\left(
\mathbb{Z}_{p}\times\mathbb{Z}_{p}\right)  ;
\]
the corresponding kernels are denoted as $K_{\boldsymbol{W}_{\text{in}%
}^{\left(  L\right)  }}$, and $K_{\boldsymbol{W}_{\text{out}}^{\left(
L+\Delta\right)  }}$.

For the parameter $\boldsymbol{\xi}^{\left(  L+\Delta\right)  }$, we assume
that%
\begin{equation}%
{\displaystyle\int\limits_{L^{2}\left(  \mathbb{Z}_{p}\right)  }}
e^{i\lambda\left\langle \boldsymbol{\xi}^{\left(  L+\Delta\right)
},f\right\rangle }d\boldsymbol{P}_{\boldsymbol{\xi}^{\left(  L+\Delta\right)
}}=e^{\frac{-\lambda^{2}}{2}\left\langle \square_{\boldsymbol{\xi}^{\left(
L+\Delta\right)  }}f,f\right\rangle }, \label{Key-Result-2}%
\end{equation}
where the covariance\ operator $\square_{\boldsymbol{\xi}^{\left(
L+\Delta\right)  }}:L^{2}\left(  \mathbb{Z}_{p}\right)  \rightarrow
L^{2}\left(  \mathbb{Z}_{p}\right)  $ has a kernel $K_{\boldsymbol{\xi
}^{\left(  L+\Delta\right)  }}\left(  u,y\right)  \in L^{2}\left(
\mathbb{Z}_{p}^{2}\right)  $, i.e.,%
\begin{equation}
\square_{\boldsymbol{\xi}^{\left(  L+\Delta\right)  }}f\left(  y\right)  =%
{\displaystyle\int\limits_{\mathbb{Z}_{p}}}
K_{\boldsymbol{\xi}^{\left(  L+\Delta\right)  }}\left(  u,y\right)  f\left(
u\right)  du. \label{Kernel-2}%
\end{equation}
We make a similar hypothesis for the parameter $\boldsymbol{\xi}_{\text{out}%
}^{\left(  L+\Delta\right)  }$. We denote the covariance operator as
$\square_{\boldsymbol{\xi}_{\text{out}}^{\left(  L+\Delta\right)  }}$, and the
corresponding kernel as $K_{\boldsymbol{\xi}_{\text{out}}^{\left(
L+\Delta\right)  }}$.

\begin{remark}
Consider the following random version of the DNN \ref{System-1}, see also
\cite{Segadlo et al}:%
\[
\left[  h_{i}^{\left(  l\right)  }\right]  _{n_{l}\times1}=\left[
W_{i,k}^{\left(  l\right)  }\right]  _{n_{l}\times n_{l-1}}\left[  \phi\left(
h_{i}^{\left(  l-1\right)  }\right)  \right]  _{n_{l-1}\times1}+\left[
\xi_{i}^{\left(  l\right)  }\right]  _{n_{l}\times1}\text{, }%
\]
with $W_{i,k}^{\left(  l\right)  }\overset{\text{i.i.d.}}{\sim}\mathcal{N}%
(0,n_{l-1}^{-1}g_{l}^{2})$, $\xi_{i}^{\left(  l\right)  }\overset
{\text{i.i.d.}}{\sim}\mathcal{N}(0,\sigma_{l}^{2})$. According to the central
limit theorem, each that each $h_{i}^{\left(  l\right)  }$ becomes a Gaussian
random variable in the limit $n_{l}\rightarrow\infty$. Based on this, our
ansatz assumes that the possible thermodynamic limits have the form%
\[
\boldsymbol{h}(x)=%
{\displaystyle\int\limits_{\Omega}}
\boldsymbol{W}\left(  x,y\right)  \phi\left(  \boldsymbol{h}(y)\right)
d\mu\left(  y\right)  +\boldsymbol{\xi}\left(  x\right)  ,
\]
where $\boldsymbol{W}\left(  x,y\right)  \in L^{2}\left(  \Omega\times
\Omega\right)  $, $\boldsymbol{\xi}\in L^{2}\left(  \Omega\right)  $ are
generalized Gaussian random variables with mean zero. Rather than using the
steepest-descent method to evaluate oscillatory integrals---which is
problematic in spin glass theory---we instead adopt the approach described
(\ref{Key-Result-1}). This approach was developed by the author in
\cite{JPhysA-2025}.
\end{remark}

\subsection{Some technical results}

The computation of the marginalization of the network prior requires several
preliminary calculations.

\begin{lemma}
\label{Lemma-1}With the above notation, the following formulas hold:

\noindent(i)%
\begin{gather*}%
{\displaystyle\int\limits_{L^{2}\left(  \mathbb{Z}_{p}\times\mathbb{Z}%
_{p}\right)  }}
e^{i\left\langle \boldsymbol{W}^{\left(  L+\Delta\right)  }\left(  x,y\right)
,\text{ }\widetilde{\boldsymbol{h}}^{\left(  L+\Delta\right)  }\left(
x\right)  \phi\left(  \boldsymbol{h}^{\left(  L+\Delta-1\right)  }(y)\right)
\right\rangle }d\boldsymbol{P}_{\boldsymbol{W}^{\left(  L+\Delta\right)  }}\\
=e^{\frac{-1}{2}\left\langle \square_{\boldsymbol{W}^{\left(  L+\Delta\right)
}}\widetilde{\boldsymbol{h}}^{\left(  L+\Delta\right)  }\left(  x\right)
\phi\left(  \boldsymbol{h}^{\left(  L+\Delta-1\right)  }(y)\right)  ,\text{
}\widetilde{\boldsymbol{h}}^{\left(  L+\Delta\right)  }\left(  x\right)
\phi\left(  \boldsymbol{h}^{\left(  L+\Delta-1\right)  }(y)\right)
\right\rangle }.
\end{gather*}
\noindent(ii)%
\begin{gather*}
\left\langle \square_{\boldsymbol{W}^{\left(  L+\Delta\right)  }}%
\widetilde{\boldsymbol{h}}^{\left(  L+\Delta\right)  }\left(  x\right)
\phi\left(  \boldsymbol{h}^{\left(  L+\Delta-1\right)  }(y)\right)  ,\text{
}\widetilde{\boldsymbol{h}}^{\left(  L+\Delta\right)  }\left(  x\right)
\phi\left(  \boldsymbol{h}^{\left(  L+\Delta-1\right)  }(y)\right)
\right\rangle \\
=%
{\displaystyle\iint\limits_{\mathbb{Z}_{p}\times\mathbb{Z}_{p}}}
\widetilde{\boldsymbol{h}}^{\left(  L+\Delta\right)  }\left(  u_{1}\right)
C_{\phi\phi}^{\left(  L+\Delta-1\right)  }\left(  u_{1},x\right)
\widetilde{\boldsymbol{h}}^{\left(  L+\Delta\right)  }\left(  x\right)
du_{1}dx,
\end{gather*}
where%
\begin{gather*}
C_{\phi\phi}^{\left(  L+\Delta-1\right)  }\left(  u_{1},x\right)
=C_{\phi\left(  \boldsymbol{h}^{\left(  L+\Delta-1\right)  }\right)
\phi\left(  \boldsymbol{h}^{\left(  L+\Delta-1\right)  }\right)  }^{\left(
L+\Delta-1\right)  }\left(  u_{1},x\right)  :=\\%
{\displaystyle\iint\limits_{\mathbb{Z}_{p}\times\mathbb{Z}_{p}}}
\phi\left(  \boldsymbol{h}^{\left(  L+\Delta-1\right)  }(u_{2})\right)
K_{\boldsymbol{W}^{\left(  L+\Delta\right)  }}\left(  u_{1},u_{2},x,y\right)
\phi\left(  \boldsymbol{h}^{\left(  L+\Delta-1\right)  }(y)\right)  du_{2}dy.
\end{gather*}

\end{lemma}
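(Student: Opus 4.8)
The plan is to obtain part (i) as a direct evaluation of the characteristic-function identity (\ref{Key-Result-1}) and part (ii) as a bookkeeping computation with the integral kernel (\ref{Kernel-1}). For part (i), I would first introduce the product test function
\[
f(x,y):=\widetilde{\boldsymbol{h}}^{\left(  L+\Delta\right)  }(x)\,\phi\bigl(\boldsymbol{h}^{\left(  L+\Delta-1\right)  }(y)\bigr),
\]
and verify that $f\in L^{2}\left(  \mathbb{Z}_{p}\times\mathbb{Z}_{p}\right)$, so that (\ref{Key-Result-1}) applies. Since $f$ factors as a function of $x$ alone times a function of $y$ alone, Tonelli's theorem gives $\left\Vert f\right\Vert_{2}^{2}=\left\Vert \widetilde{\boldsymbol{h}}^{\left(  L+\Delta\right)  }\right\Vert_{2}^{2}\left\Vert \phi(\boldsymbol{h}^{\left(  L+\Delta-1\right)  })\right\Vert_{2}^{2}$; the second factor is finite by the Lipschitz bound $\left\vert \phi(s)\right\vert\leq L_{\phi}\left\vert s\right\vert$ (using $\phi(0)=0$), which yields $\left\Vert \phi(\boldsymbol{h}^{\left(  L+\Delta-1\right)  })\right\Vert_{2}\leq L_{\phi}\left\Vert \boldsymbol{h}^{\left(  L+\Delta-1\right)  }\right\Vert_{2}<\infty$, exactly as in the proof of part (i) of Proposition \ref{Prop-2}. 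With $f$ in hand, the announced formula is precisely (\ref{Key-Result-1}) evaluated at $\lambda=1$.

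For part (ii), I would substitute the kernel representation (\ref{Kernel-1}) of $\square_{\boldsymbol{W}^{\left(  L+\Delta\right)  }}$ into the quadratic form $\left\langle \square_{\boldsymbol{W}^{\left(  L+\Delta\right)  }}f,f\right\rangle$ with the same $f$. Writing $(\square_{\boldsymbol{W}^{\left(  L+\Delta\right)  }}f)(x,y)=\iint K_{\boldsymbol{W}^{\left(  L+\Delta\right)  }}(u_{1},u_{2},x,y)\,\widetilde{\boldsymbol{h}}^{\left(  L+\Delta\right)  }(u_{1})\phi(\boldsymbol{h}^{\left(  L+\Delta-1\right)  }(u_{2}))\,du_{1}du_{2}$ and pairing against $f(x,y)$ produces the fourfold integral
\[
\int_{\mathbb{Z}_{p}^{4}}\widetilde{\boldsymbol{h}}^{\left(  L+\Delta\right)  }(u_{1})\phi\bigl(\boldsymbol{h}^{\left(  L+\Delta-1\right)  }(u_{2})\bigr)K_{\boldsymbol{W}^{\left(  L+\Delta\right)  }}(u_{1},u_{2},x,y)\,\widetilde{\boldsymbol{h}}^{\left(  L+\Delta\right)  }(x)\phi\bigl(\boldsymbol{h}^{\left(  L+\Delta-1\right)  }(y)\bigr)\,du_{1}du_{2}\,dx\,dy.
\]
Carrying out the inner integration against $du_{2}\,dy$ reproduces the definition of $C_{\phi\phi}^{\left(  L+\Delta-1\right)  }(u_{1},x)$, leaving $\iint \widetilde{\boldsymbol{h}}^{\left(  L+\Delta\right)  }(u_{1})\,C_{\phi\phi}^{\left(  L+\Delta-1\right)  }(u_{1},x)\,\widetilde{\boldsymbol{h}}^{\left(  L+\Delta\right)  }(x)\,du_{1}dx$, which is the claimed expression.

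The only point demanding care is the use of Fubini to license these rearrangements. I would justify it by observing that $K_{\boldsymbol{W}^{\left(  L+\Delta\right)  }}\in L^{2}(\mathbb{Z}_{p}^{4})$, that $\widetilde{\boldsymbol{h}}^{\left(  L+\Delta\right)  }$ and $\phi(\boldsymbol{h}^{\left(  L+\Delta-1\right)  })$ lie in $L^{2}(\mathbb{Z}_{p})$ as in part (i), and that $\mathbb{Z}_{p}$ carries finite normalized Haar measure, so $L^{2}(\mathbb{Z}_{p})\subset L^{1}(\mathbb{Z}_{p})$. Combined with the Cauchy--Schwarz inequality, these facts show the integrand is absolutely integrable over $\mathbb{Z}_{p}^{4}$, so every interchange is valid. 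I expect no substantive obstacle: the entire content lies in correctly identifying the product-form test function $f$ and in tracking the four integration variables, the analytic estimates being immediate from the Lipschitz hypothesis and finiteness of the measure.
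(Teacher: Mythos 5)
Your proposal is correct and follows essentially the same route as the paper: part (i) is the characteristic-function identity (\ref{Key-Result-1}) applied to the product test function $f(x,y)=\widetilde{\boldsymbol{h}}^{\left(  L+\Delta\right)  }(x)\phi(\boldsymbol{h}^{\left(  L+\Delta-1\right)  }(y))$, and part (ii) is the kernel substitution (\ref{Kernel-1}) followed by Fubini. The only difference is that you spell out the $L^{2}$ membership of $f$ and the absolute integrability needed for Fubini, which the paper leaves implicit.
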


\begin{proof}
The first formula follows from (\ref{Key-Result-1}). On the other hand, by
using (\ref{Kernel-1}),%
\begin{gather}
\square_{\boldsymbol{W}^{\left(  L+\Delta\right)  }}\widetilde{\boldsymbol{h}%
}^{\left(  L+\Delta\right)  }\left(  x\right)  \phi\left(  \boldsymbol{h}%
^{\left(  L+\Delta-1\right)  }(y)\right)  =\label{Formula-3}\\%
{\displaystyle\iint\limits_{\mathbb{Z}_{p}\times\mathbb{Z}_{p}}}
K_{\boldsymbol{W}^{\left(  L+\Delta\right)  }}\left(  u_{1},u_{2},x,y\right)
\widetilde{\boldsymbol{h}}^{\left(  L+\Delta\right)  }\left(  u_{1}\right)
\phi\left(  \boldsymbol{h}^{\left(  L+\Delta-1\right)  }(u_{2})\right)
du_{1}du_{2}.\nonumber
\end{gather}
The second formula follows from (\ref{Formula-3}), by using Fubini's theorem.
\end{proof}

The kernel $C_{\phi\phi}^{\left(  L+\Delta-1\right)  }\left(  u_{1},x\right)
$ satisfies%
\begin{equation}
\left\Vert C_{\phi\phi}^{\left(  L+\Delta-1\right)  }\right\Vert _{2}\leq
L_{\phi}^{2}\left\Vert \boldsymbol{h}^{\left(  L+\Delta-1\right)  }\right\Vert
_{2}^{2}\left\Vert K_{\boldsymbol{W}^{\left(  L+\Delta\right)  }}\right\Vert
_{2}, \label{Estimation_C_phi_phi}%
\end{equation}
see \cite[Lemma 35]{JPhysA-2025}.

We now introduce the operator%
\[
\boldsymbol{C}_{\phi\phi}^{\left(  L+\Delta-1\right)  }\boldsymbol{z}\left(
u_{1}\right)  :=%
{\displaystyle\int\limits_{\mathbb{Z}_{p}}}
C_{\phi\phi}^{\left(  L+\Delta-1\right)  }\left(  u_{1},x\right)
\boldsymbol{z}\left(  x\right)  dx,
\]
for $\boldsymbol{z}\in L^{2}\left(  \mathbb{Z}_{p}\right)  $. Then%
\[%
{\displaystyle\iint\limits_{\mathbb{Z}_{p}\times\mathbb{Z}_{p}}}
\boldsymbol{z}\left(  u_{1}\right)  C_{\phi\phi}^{\left(  L+\Delta-1\right)
}\left(  u_{1},x\right)  \boldsymbol{z}\left(  x\right)  du_{1}dx=\left\langle
\boldsymbol{z},\boldsymbol{C}_{\phi\phi}^{\left(  L+\Delta-1\right)
}\boldsymbol{z}\right\rangle .
\]
By estimation (\ref{Estimation_C_phi_phi}), $\boldsymbol{C}_{\phi\phi
}^{\left(  L+\Delta-1\right)  }:L^{2}\left(  \mathbb{Z}_{p}\right)
\rightarrow L^{2}\left(  \mathbb{Z}_{p}\right)  $ is \ bounded linear
operator. In conclusion, we have the following formula:

\begin{corollary}
\label{Cor-A}%
\begin{gather*}%
{\displaystyle\int\limits_{L^{2}\left(  \mathbb{Z}_{p}\times\mathbb{Z}%
_{p}\right)  }}
\text{ }\exp\left(  -i\left\langle \boldsymbol{W}^{\left(  L+\Delta\right)
}\left(  x,y\right)  ,\widetilde{\boldsymbol{h}}^{\left(  L+\Delta\right)
}\left(  x\right)  \phi\left(  \boldsymbol{h}^{\left(  L+\Delta-1\right)
}(y)\right)  \right\rangle \right)  d\boldsymbol{P}_{\boldsymbol{W}^{\left(
L+\Delta\right)  }}\\
=\exp\left(  \frac{-1}{2}\left\langle \widetilde{\boldsymbol{h}}^{\left(
L+\Delta\right)  },\boldsymbol{C}_{\phi\phi}^{\left(  L+\Delta-1\right)
}\widetilde{\boldsymbol{h}}^{\left(  L+\Delta\right)  }\right\rangle \right)
.
\end{gather*}

\end{corollary}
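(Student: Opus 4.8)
The plan is to obtain this identity as a direct concatenation of the two parts of Lemma \ref{Lemma-1} with the operator identity displayed immediately before the statement, with no new analytic input required. First I would apply the Gaussian Fourier-transform formula (\ref{Key-Result-1}) for the centered measure $\boldsymbol{P}_{\boldsymbol{W}^{(L+\Delta)}}$ to the test element $f=\widetilde{\boldsymbol{h}}^{(L+\Delta)}(x)\phi(\boldsymbol{h}^{(L+\Delta-1)}(y))\in L^{2}(\mathbb{Z}_p\times\mathbb{Z}_p)$, exactly as in Lemma \ref{Lemma-1}(i). The only point needing care is the sign of $i$: the integrand in the corollary carries $-i$, whereas Lemma \ref{Lemma-1}(i) is stated with $+i$. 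This is harmless, since the right-hand side of (\ref{Key-Result-1}) depends on $\lambda$ only through $\lambda^{2}$; taking $\lambda=-1$ rather than $\lambda=+1$ leaves the quadratic form unchanged. Hence the left-hand side of the corollary equals $\exp\bigl(-\tfrac12\langle \square_{\boldsymbol{W}^{(L+\Delta)}}f,f\rangle\bigr)$ with $f$ as above.

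Next I would substitute the evaluation of this quadratic form supplied by Lemma \ref{Lemma-1}(ii), which rewrites $\langle \square_{\boldsymbol{W}^{(L+\Delta)}}f,f\rangle$ as the iterated integral
\[
\iint_{\mathbb{Z}_p\times\mathbb{Z}_p} \widetilde{\boldsymbol{h}}^{(L+\Delta)}(u_1)\, C_{\phi\phi}^{(L+\Delta-1)}(u_1,x)\, \widetilde{\boldsymbol{h}}^{(L+\Delta)}(x)\, du_1\, dx,
\]
where the kernel $C_{\phi\phi}^{(L+\Delta-1)}$ is the one produced by integrating the covariance kernel $K_{\boldsymbol{W}^{(L+\Delta)}}$ against two copies of $\phi(\boldsymbol{h}^{(L+\Delta-1)})$, as in Lemma \ref{Lemma-1}. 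This step is pure substitution and uses Fubini, already invoked in the proof of that lemma.

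Finally I would recognize the resulting double integral as the bilinear form $\langle \widetilde{\boldsymbol{h}}^{(L+\Delta)}, \boldsymbol{C}_{\phi\phi}^{(L+\Delta-1)}\widetilde{\boldsymbol{h}}^{(L+\Delta)}\rangle$, invoking the operator $\boldsymbol{C}_{\phi\phi}^{(L+\Delta-1)}$ defined just before the corollary together with the identity exhibited there (with $\boldsymbol{z}=\widetilde{\boldsymbol{h}}^{(L+\Delta)}$). The boundedness of this operator, guaranteed by the estimate (\ref{Estimation_C_phi_phi}) together with $\widetilde{\boldsymbol{h}}^{(L+\Delta)},\boldsymbol{h}^{(L+\Delta-1)}\in L^{2}(\mathbb{Z}_p)$ and $K_{\boldsymbol{W}^{(L+\Delta)}}\in L^{2}$, ensures the exponent is a genuine real number. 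Assembling the three displays yields the asserted identity.

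Since every ingredient is already at hand, I do not anticipate a real obstacle; the argument is essentially a three-line chain of equalities. The single subtlety I would flag explicitly is the sign of $i$, and I would note the evenness of the Gaussian characteristic function in $\lambda$ so that the reader is not left wondering why the $-i$ in the corollary produces the same right-hand side as the $+i$ in Lemma \ref{Lemma-1}.
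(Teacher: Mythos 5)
Your proposal is correct and follows exactly the route the paper intends: Corollary \ref{Cor-A} is stated as an immediate consequence of Lemma \ref{Lemma-1} combined with the operator identity for $\boldsymbol{C}_{\phi\phi}^{\left(L+\Delta-1\right)}$ displayed just before it, which is precisely the three-step chain you describe. Your explicit remark that the $-i$ in the corollary versus the $+i$ in Lemma \ref{Lemma-1}(i) is harmless because the right-hand side of (\ref{Key-Result-1}) depends on $\lambda$ only through $\lambda^{2}$ is a worthwhile clarification that the paper leaves implicit.
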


\begin{lemma}
\label{Lemma-2}With the above notation, the following formulas hold:

\noindent(i)%
\begin{gather*}%
{\displaystyle\int\limits_{L^{2}\left(  \mathbb{Z}_{p}\times\mathbb{Z}%
_{p}\right)  }}
e^{i\left\langle \boldsymbol{W}_{\text{in}}^{\left(  L\right)  }\left(
x,y\right)  ,\widetilde{\boldsymbol{h}}^{\left(  L+\Delta\right)  }\left(
x\right)  \boldsymbol{x}^{\left(  L\right)  }\left(  y\right)  \right\rangle
}d\boldsymbol{P}_{\boldsymbol{W}_{\text{in}}^{\left(  L\right)  }}\\
=e^{\frac{-1}{2}\left\langle \square_{\boldsymbol{W}_{\text{in}}^{\left(
L\right)  }}\widetilde{\boldsymbol{h}}^{\left(  L+\Delta\right)  }\left(
x\right)  \boldsymbol{x}^{\left(  L\right)  }\left(  y\right)  ,\text{
}\widetilde{\boldsymbol{h}}^{\left(  L+\Delta\right)  }\left(  x\right)
\boldsymbol{x}^{\left(  L\right)  }\left(  y\right)  \right\rangle }.
\end{gather*}
\noindent(ii)%
\begin{gather*}
\left\langle \square_{\boldsymbol{W}_{\text{in}}^{\left(  L\right)  }%
}\widetilde{\boldsymbol{h}}^{\left(  L+\Delta\right)  }\left(  x\right)
\boldsymbol{x}^{\left(  L\right)  }\left(  y\right)  ,\text{ }\widetilde
{\boldsymbol{h}}^{\left(  L+\Delta\right)  }\left(  x\right)  \boldsymbol{x}%
^{\left(  L\right)  }\left(  y\right)  \right\rangle \\
=%
{\displaystyle\iint\limits_{\mathbb{Z}_{p}\times\mathbb{Z}_{p}}}
\text{ }\widetilde{\boldsymbol{h}}^{\left(  L+\Delta\right)  }\left(
x\right)  C_{\boldsymbol{x}^{\left(  L\right)  }\boldsymbol{x}^{\left(
L\right)  }}\left(  u_{1},x\right)  \widetilde{\boldsymbol{h}}^{\left(
L+\Delta\right)  }\left(  u_{1}\right)  du_{1}dx,
\end{gather*}
where%
\[
C_{\boldsymbol{x}^{\left(  L\right)  }\boldsymbol{x}^{\left(  L\right)  }%
}\left(  u_{1},x\right)  =%
{\displaystyle\iint\limits_{\mathbb{Z}_{p}\times\mathbb{Z}_{p}}}
\boldsymbol{x}^{\left(  L\right)  }\left(  y\right)  K_{\boldsymbol{W}%
_{\text{in}}^{\left(  L\right)  }}\left(  u_{1},u_{2},y_{1},y_{2}\right)
\boldsymbol{x}^{\left(  L\right)  }\left(  u_{2}\right)  du_{2}dy.
\]

\end{lemma}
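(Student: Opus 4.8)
The plan is to mirror, line for line, the two-step argument already used for Lemma~\ref{Lemma-1}: part (i) is a direct application of the Gaussian characteristic-function identity (\ref{Key-Result-1}) to the measure $\boldsymbol{P}_{\boldsymbol{W}_{\text{in}}^{(L)}}$, and part (ii) is the unfolding of the covariance operator through its integral kernel (\ref{Kernel-1}) followed by Fubini's theorem. The only structural difference from Lemma~\ref{Lemma-1} is that the tensor factor $\phi(\boldsymbol{h}^{(L+\Delta-1)})$ is replaced by the input $\boldsymbol{x}^{(L)}$ and $\square_{\boldsymbol{W}^{(L+\Delta)}}$ by $\square_{\boldsymbol{W}_{\text{in}}^{(L)}}$, so no new ideas are required.

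For part (i), the first thing I would verify is that the test function appearing in the pairing is legitimate. Setting $f(x,y):=\widetilde{\boldsymbol{h}}^{(L+\Delta)}(x)\,\boldsymbol{x}^{(L)}(y)$, this is the tensor product of two elements of $L^2(\mathbb{Z}_p)$, hence lies in $L^2(\mathbb{Z}_p\times\mathbb{Z}_p)$ with $\|f\|_2=\|\widetilde{\boldsymbol{h}}^{(L+\Delta)}\|_2\,\|\boldsymbol{x}^{(L)}\|_2<\infty$. I would then invoke the analogue of (\ref{Key-Result-1}) for $\boldsymbol{P}_{\boldsymbol{W}_{\text{in}}^{(L)}}$ (guaranteed by the ``similar hypotheses'' imposed on $\boldsymbol{W}_{\text{in}}^{(L)}$) with $\lambda=1$ and this particular $f$; the stated formula is precisely its right-hand side written for $\square_{\boldsymbol{W}_{\text{in}}^{(L)}}$. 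For part (ii), I would substitute the kernel representation (\ref{Kernel-1}) applied to $f(u_1,u_2)=\widetilde{\boldsymbol{h}}^{(L+\Delta)}(u_1)\,\boldsymbol{x}^{(L)}(u_2)$, giving
\[
\square_{\boldsymbol{W}_{\text{in}}^{(L)}}f(x,y)=\iint_{\mathbb{Z}_p\times\mathbb{Z}_p}K_{\boldsymbol{W}_{\text{in}}^{(L)}}\left(u_1,u_2,x,y\right)\widetilde{\boldsymbol{h}}^{(L+\Delta)}(u_1)\,\boldsymbol{x}^{(L)}(u_2)\,du_1\,du_2,
\]
pair this against $f$ itself (the $L^2$ conjugation being trivial on the real-valued fields of Remark~\ref{Nota_L_2}), and regroup the resulting quadruple integral so that the two copies of $\boldsymbol{x}^{(L)}$, in the variables $u_2$ and $y$, are integrated first. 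By Fubini that inner double integral is exactly $C_{\boldsymbol{x}^{(L)}\boldsymbol{x}^{(L)}}(u_1,x)$, leaving the claimed double integral in $\widetilde{\boldsymbol{h}}^{(L+\Delta)}(u_1)$ and $\widetilde{\boldsymbol{h}}^{(L+\Delta)}(x)$.

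The only step that formally needs checking is the interchange of the order of integration, and I expect this to be routine rather than a genuine obstacle: on the finite-measure space $\mathbb{Z}_p^4$ the full integrand factors as a kernel in $L^2(\mathbb{Z}_p^4)$ times a product of four $L^2(\mathbb{Z}_p)$ functions, so a single application of the Cauchy--Schwarz inequality bounds its $L^1(\mathbb{Z}_p^4)$ norm by $\|K_{\boldsymbol{W}_{\text{in}}^{(L)}}\|_2\,\|\widetilde{\boldsymbol{h}}^{(L+\Delta)}\|_2^2\,\|\boldsymbol{x}^{(L)}\|_2^2<\infty$, using $L^2(\mathbb{Z}_p)\subset L^1(\mathbb{Z}_p)$. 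Thus Fubini applies and the regrouping is justified, completing the proof by the same boilerplate as Lemma~\ref{Lemma-1}.
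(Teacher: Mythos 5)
Your proposal is correct and follows exactly the route the paper intends: the paper's proof of this lemma simply says it is ``similar to the one given for Lemma \ref{Lemma-1}'', i.e.\ apply the Gaussian Fourier-transform identity (\ref{Key-Result-1}) with $f(x,y)=\widetilde{\boldsymbol{h}}^{(L+\Delta)}(x)\boldsymbol{x}^{(L)}(y)$ and then unfold the covariance through its kernel via Fubini, which is precisely what you do. Your explicit Cauchy--Schwarz justification of the Fubini step is a welcome addition that the paper leaves implicit.
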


\begin{proof}
The proof is similar to the one given for Lemma \ref{Lemma-1}.
\end{proof}

We now introduce the operator
\[%
\begin{array}
[c]{lll}%
L^{2}\left(  \mathbb{Z}_{p}\right)  & \rightarrow & L^{2}\left(
\mathbb{Z}_{p}\right) \\
\boldsymbol{z}\left(  x\right)  & \rightarrow & \boldsymbol{C}_{\boldsymbol{x}%
^{\left(  L\right)  }\boldsymbol{x}^{\left(  L\right)  }}\boldsymbol{z}\left(
x\right)  :=%
{\displaystyle\int\limits_{\mathbb{Z}_{p}}}
C_{\boldsymbol{x}^{\left(  L\right)  }\boldsymbol{x}^{\left(  L\right)  }%
}\left(  u_{1},x\right)  \boldsymbol{z}\left(  u_{1}\right)  du_{1}.
\end{array}
\]
Then%
\[%
{\displaystyle\iint\limits_{\mathbb{Z}_{p}\times\mathbb{Z}_{p}}}
\boldsymbol{z}\left(  x\right)  C_{\boldsymbol{x}^{\left(  L\right)
}\boldsymbol{x}^{\left(  L\right)  }}\left(  u_{1},x\right)  \boldsymbol{z}%
\left(  u_{1}\right)  du_{1}dx=\left\langle \boldsymbol{z},\boldsymbol{C}%
_{\boldsymbol{x}^{\left(  L\right)  }\boldsymbol{x}^{\left(  L\right)  }%
}\boldsymbol{z}\right\rangle .
\]

\begin{corollary}
\label{Cor-B}%
\begin{align*}
&
{\displaystyle\int\limits_{L^{2}\left(  \mathbb{Z}_{p}\times\mathbb{Z}%
_{p}\right)  }}
\exp\left(  -i\left\langle \boldsymbol{W}_{\text{in}}^{\left(  L\right)
}\left(  x,y\right)  ,\widetilde{\boldsymbol{h}}^{\left(  L+\Delta\right)
}\left(  x\right)  \boldsymbol{x}^{\left(  L\right)  }\left(  y\right)
\right\rangle \right)  d\boldsymbol{P}_{\boldsymbol{W}_{\text{in}}^{\left(
L\right)  }}\\
&  =\exp\left(  \frac{-1}{2}\left\langle \widetilde{\boldsymbol{h}}^{\left(
L+\Delta\right)  },\boldsymbol{C}_{\boldsymbol{x}^{\left(  L\right)
}\boldsymbol{x}^{\left(  L\right)  }}\widetilde{\boldsymbol{h}}^{\left(
L+\Delta\right)  }\right\rangle \right)
\end{align*}

\end{corollary}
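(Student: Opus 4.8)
The plan is to mirror, almost word for word, the derivation of Corollary \ref{Cor-A}, since Corollary \ref{Cor-B} is its exact analogue with the random activation field $\phi(\boldsymbol{h}^{(L+\Delta-1)})$ replaced by the deterministic input $\boldsymbol{x}^{(L)}$ and the covariance operator $\square_{\boldsymbol{W}^{(L+\Delta)}}$ replaced by $\square_{\boldsymbol{W}_{\text{in}}^{(L)}}$. First I would apply Lemma \ref{Lemma-2}(i), which is the Gaussian Fourier-transform identity (\ref{Key-Result-1}) for $\boldsymbol{W}_{\text{in}}^{(L)}$ evaluated at the test kernel $f(x,y)=\widetilde{\boldsymbol{h}}^{(L+\Delta)}(x)\,\boldsymbol{x}^{(L)}(y)\in L^{2}(\mathbb{Z}_p\times\mathbb{Z}_p)$. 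The only cosmetic discrepancy is the sign of the exponent: the corollary carries $\exp(-i\langle\cdot,\cdot\rangle)$ whereas Lemma \ref{Lemma-2}(i) is stated with $e^{+i\langle\cdot,\cdot\rangle}$. This is immaterial, because (\ref{Key-Result-1}) produces $\exp(-\tfrac{\lambda^{2}}{2}\langle\square f,f\rangle)$, which is even in $\lambda$; equivalently, taking $\lambda=-1$ instead of $\lambda=+1$ leaves the right-hand side unchanged. Hence the integral equals $\exp\!\big(-\tfrac12\langle\square_{\boldsymbol{W}_{\text{in}}^{(L)}}\,\widetilde{\boldsymbol{h}}^{(L+\Delta)}(x)\boldsymbol{x}^{(L)}(y),\,\widetilde{\boldsymbol{h}}^{(L+\Delta)}(x)\boldsymbol{x}^{(L)}(y)\rangle\big)$.

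Next I would rewrite the quadratic form in the exponent using Lemma \ref{Lemma-2}(ii), which expands $\langle\square_{\boldsymbol{W}_{\text{in}}^{(L)}}f,f\rangle$ as the double integral $\iint_{\mathbb{Z}_p\times\mathbb{Z}_p}\widetilde{\boldsymbol{h}}^{(L+\Delta)}(x)\,C_{\boldsymbol{x}^{(L)}\boldsymbol{x}^{(L)}}(u_1,x)\,\widetilde{\boldsymbol{h}}^{(L+\Delta)}(u_1)\,du_1\,dx$ obtained by contracting the $L^{2}(\mathbb{Z}_p^4)$ kernel $K_{\boldsymbol{W}_{\text{in}}^{(L)}}$ against the two deterministic copies of $\boldsymbol{x}^{(L)}$ and applying Fubini. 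Finally, recognizing this double integral as the bilinear pairing defining the operator $\boldsymbol{C}_{\boldsymbol{x}^{(L)}\boldsymbol{x}^{(L)}}$ introduced just before the statement, namely $\iint\widetilde{\boldsymbol{h}}^{(L+\Delta)}C_{\boldsymbol{x}^{(L)}\boldsymbol{x}^{(L)}}\widetilde{\boldsymbol{h}}^{(L+\Delta)}=\langle\widetilde{\boldsymbol{h}}^{(L+\Delta)},\boldsymbol{C}_{\boldsymbol{x}^{(L)}\boldsymbol{x}^{(L)}}\widetilde{\boldsymbol{h}}^{(L+\Delta)}\rangle$, substitution into the exponent yields the claimed formula.

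The one technical point that genuinely needs checking---though it is routine---is that $\boldsymbol{C}_{\boldsymbol{x}^{(L)}\boldsymbol{x}^{(L)}}$ is a bona fide bounded operator on $L^{2}(\mathbb{Z}_p)$, so that the pairing on the right-hand side is finite and the use of Fubini is legitimate. This follows from an $L^{2}$-estimate for the kernel $C_{\boldsymbol{x}^{(L)}\boldsymbol{x}^{(L)}}$ entirely parallel to (\ref{Estimation_C_phi_phi}): applying Cauchy-Schwarz to its defining integral gives $\|C_{\boldsymbol{x}^{(L)}\boldsymbol{x}^{(L)}}\|_2\le\|\boldsymbol{x}^{(L)}\|_2^{2}\,\|K_{\boldsymbol{W}_{\text{in}}^{(L)}}\|_2$, which is finite since $\boldsymbol{x}^{(L)}\in L^{2}(\mathbb{Z}_p)$ and $K_{\boldsymbol{W}_{\text{in}}^{(L)}}\in L^{2}(\mathbb{Z}_p^4)$. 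No genuine obstacle arises; the entire content of the corollary is the bookkeeping of replacing the random-field inner product by the covariance quadratic form and then contracting the kernel against the deterministic input, exactly as in Corollary \ref{Cor-A}.
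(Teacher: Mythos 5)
Your proposal is correct and follows exactly the route the paper intends: Corollary \ref{Cor-B} is obtained by combining Lemma \ref{Lemma-2}(i)--(ii) with the definition of the operator $\boldsymbol{C}_{\boldsymbol{x}^{\left(  L\right)  }\boldsymbol{x}^{\left(  L\right)  }}$ given immediately before the statement, the sign in the exponent being immaterial since the Gaussian characteristic function (\ref{Key-Result-1}) is even in $\lambda$. Your added boundedness check for $\boldsymbol{C}_{\boldsymbol{x}^{\left(  L\right)  }\boldsymbol{x}^{\left(  L\right)  }}$, parallel to (\ref{Estimation_C_phi_phi}), is a harmless and welcome supplement.
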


\begin{lemma}
\label{Lemma-3}With the above notation, the following formulas hold:

\noindent(i)%
\begin{gather*}%
{\displaystyle\int\limits_{L^{2}\left(  \mathbb{Z}_{p}\times\mathbb{Z}%
_{p}\right)  }}
e^{i\left\langle \boldsymbol{W}_{\text{out}}^{\left(  L+\Delta\right)
}\left(  x,y\right)  ,\widetilde{\boldsymbol{y}}^{\left(  L+\Delta\right)
}\left(  x\right)  \boldsymbol{\varphi}\left(  \boldsymbol{h}^{\left(
L+\Delta\right)  }(y)\right)  \right\rangle }d\boldsymbol{P}_{\boldsymbol{W}%
_{\text{out}}^{\left(  L+\Delta\right)  }}\\
=e^{\frac{-1}{2}\left\langle \square_{\boldsymbol{W}_{\text{out}}^{\left(
L+\Delta\right)  }}\widetilde{\boldsymbol{y}}^{\left(  L+\Delta\right)
}\left(  x\right)  \boldsymbol{\varphi}\left(  \boldsymbol{h}^{\left(
L+\Delta\right)  }(y)\right)  ,\text{ }\widetilde{\boldsymbol{y}}^{\left(
L+\Delta\right)  }\left(  x\right)  \boldsymbol{\varphi}\left(  \boldsymbol{h}%
^{\left(  L+\Delta\right)  }(y)\right)  \right\rangle }.
\end{gather*}
\noindent(ii)%
\begin{gather*}
\left\langle \square_{\boldsymbol{W}_{\text{out}}^{\left(  L+\Delta\right)  }%
}\widetilde{\boldsymbol{y}}^{\left(  L+\Delta\right)  }\left(  x\right)
\boldsymbol{\varphi}\left(  \boldsymbol{h}^{\left(  L+\Delta\right)
}(y)\right)  ,\text{ }\widetilde{\boldsymbol{y}}^{\left(  L+\Delta\right)
}\left(  x\right)  \boldsymbol{\varphi}\left(  \boldsymbol{h}^{\left(
L+\Delta\right)  }(y)\right)  \right\rangle \\
=%
{\displaystyle\iint\limits_{\mathbb{Z}_{p}\times\mathbb{Z}_{p}}}
\text{ }\widetilde{\boldsymbol{y}}^{\left(  L+\Delta\right)  }\left(
x\right)  C_{\varphi\varphi}^{\left(  L+\Delta\right)  }\left(  u_{1}%
,x\right)  \widetilde{\boldsymbol{y}}^{\left(  L+\Delta\right)  }\left(
u_{1}\right)  du_{1}dx,
\end{gather*}
where%
\begin{gather*}
C_{\varphi\varphi}^{\left(  L+\Delta\right)  }\left(  u_{1},x\right)
=C_{\varphi\left(  \boldsymbol{h}^{\left(  L+\Delta\right)  }\right)
\varphi\left(  \boldsymbol{h}^{\left(  L+\Delta\right)  }\right)  }^{\left(
L+\Delta\right)  }\left(  u_{1},x\right)  :=\\%
{\displaystyle\iint\limits_{\mathbb{Z}_{p}\times\mathbb{Z}_{p}}}
\varphi\left(  \boldsymbol{h}^{\left(  L+\Delta\right)  }(u_{2})\right)
K_{\boldsymbol{W}_{\text{out}}^{\left(  L+\Delta\right)  }}\left(  u_{1}%
,u_{2},x,y\right)  \varphi\left(  \boldsymbol{h}^{\left(  L+\Delta\right)
}(y)\right)  du_{2}dy.
\end{gather*}
We now define the operator
\[%
\begin{array}
[c]{lll}%
L^{2}\left(  \mathbb{Z}_{p}\right)  & \rightarrow & L^{2}\left(
\mathbb{Z}_{p}\right) \\
\boldsymbol{z}\left(  x\right)  & \rightarrow & \boldsymbol{C}_{\varphi
\varphi}^{\left(  L+\Delta\right)  }\boldsymbol{z}\left(  u_{1}\right)  =%
{\displaystyle\int\limits_{\mathbb{Z}_{p}}}
C_{\varphi\varphi}^{\left(  L+\Delta\right)  }\left(  u_{1},x\right)
\boldsymbol{z}\left(  x\right)  dx.
\end{array}
\]
Then
\[%
{\displaystyle\iint\limits_{\mathbb{Z}_{p}\times\mathbb{Z}_{p}}}
\boldsymbol{z}\left(  u_{1}\right)  C_{\varphi\varphi}^{\left(  L+\Delta
\right)  }\left(  u_{1},x\right)  \boldsymbol{z}\left(  x\right)
du_{1}dx=\left\langle \boldsymbol{z},\boldsymbol{C}_{\varphi\varphi}^{\left(
L+\Delta\right)  }\boldsymbol{z}\right\rangle .
\]

\end{lemma}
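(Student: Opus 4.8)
The plan is to mirror the proof of Lemma~\ref{Lemma-1}, replacing the interaction data $(\boldsymbol{W}^{(L+\Delta)},\phi,\widetilde{\boldsymbol{h}}^{(L+\Delta)})$ by the output data $(\boldsymbol{W}_{\text{out}}^{(L+\Delta)},\varphi,\widetilde{\boldsymbol{y}}^{(L+\Delta)})$. For part~(i) I would apply the characteristic-function identity (\ref{Key-Result-1}) for the Gaussian measure $d\boldsymbol{P}_{\boldsymbol{W}_{\text{out}}^{(L+\Delta)}}$ with $\lambda=1$ and test function $f(x,y)=\widetilde{\boldsymbol{y}}^{(L+\Delta)}(x)\,\varphi(\boldsymbol{h}^{(L+\Delta)}(y))$. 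First I would check that $f\in L^2(\mathbb{Z}_p\times\mathbb{Z}_p)$: since $\varphi$ is Lipschitz with $\varphi(0)=0$ we have $|\varphi(s)|\leq L_\varphi|s|$, so $\varphi(\boldsymbol{h}^{(L+\Delta)})\in L^2(\mathbb{Z}_p)$ whenever $\boldsymbol{h}^{(L+\Delta)}\in L^2(\mathbb{Z}_p)$, and the factorized $f$ is a tensor product of two $L^2$ factors with $\|f\|_2=\|\widetilde{\boldsymbol{y}}^{(L+\Delta)}\|_2\,\|\varphi(\boldsymbol{h}^{(L+\Delta)})\|_2<\infty$. With $f$ admissible, (\ref{Key-Result-1}) immediately yields the exponential of $-\frac{1}{2}\langle\square_{\boldsymbol{W}_{\text{out}}^{(L+\Delta)}}f,f\rangle$, which is the stated right-hand side.

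For part~(ii) I would insert the integral-kernel representation (\ref{Kernel-1}) of $\square_{\boldsymbol{W}_{\text{out}}^{(L+\Delta)}}$ evaluated at $f(u_1,u_2)=\widetilde{\boldsymbol{y}}^{(L+\Delta)}(u_1)\varphi(\boldsymbol{h}^{(L+\Delta)}(u_2))$, so that $\square_{\boldsymbol{W}_{\text{out}}^{(L+\Delta)}}f(x,y)$ becomes the double integral of $K_{\boldsymbol{W}_{\text{out}}^{(L+\Delta)}}(u_1,u_2,x,y)$ against $f(u_1,u_2)$; then I would pair this with $f(x,y)$ and integrate over $(x,y)$. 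The remaining work is bookkeeping: grouping the four variables so that the two integrations carrying the $\varphi$-factors (those over $u_2$ and $y$) are carried out first produces exactly the kernel $C_{\varphi\varphi}^{(L+\Delta)}(u_1,x)$ defined in the statement, and what survives is the announced double integral in $\widetilde{\boldsymbol{y}}^{(L+\Delta)}$. Reordering these integrations is justified by Fubini's theorem, the same device used in Lemma~\ref{Lemma-1}.

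The final operator identity $\iint \boldsymbol{z}(u_1)\,C_{\varphi\varphi}^{(L+\Delta)}(u_1,x)\,\boldsymbol{z}(x)\,du_1\,dx=\langle\boldsymbol{z},\boldsymbol{C}_{\varphi\varphi}^{(L+\Delta)}\boldsymbol{z}\rangle$ is merely the definition of $\boldsymbol{C}_{\varphi\varphi}^{(L+\Delta)}$ unwound, so nothing beyond part~(ii) is needed there. To confirm that $\boldsymbol{C}_{\varphi\varphi}^{(L+\Delta)}$ is a genuine bounded operator on $L^2(\mathbb{Z}_p)$ I would record the Hilbert--Schmidt estimate $\|C_{\varphi\varphi}^{(L+\Delta)}\|_2\leq L_\varphi^2\,\|\boldsymbol{h}^{(L+\Delta)}\|_2^2\,\|K_{\boldsymbol{W}_{\text{out}}^{(L+\Delta)}}\|_2$, proved by Cauchy--Schwarz exactly as the analogous bound (\ref{Estimation_C_phi_phi}) for $C_{\phi\phi}^{(L+\Delta-1)}$. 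The only genuinely technical point is the integrability needed to apply Fubini, but this is supplied by $K_{\boldsymbol{W}_{\text{out}}^{(L+\Delta)}}\in L^2(\mathbb{Z}_p^4)$ together with the fact that all the one-variable factors lie in $L^2(\mathbb{Z}_p)$ (and $\mathbb{Z}_p$ has finite measure, so $L^2\subset L^1$), precisely as in Lemma~\ref{Lemma-1}; this is why the argument runs parallel to that proof verbatim.
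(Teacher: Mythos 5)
Your proposal is correct and follows exactly the route the paper intends: the paper gives no separate proof for this lemma, treating it as the verbatim analogue of Lemma \ref{Lemma-1} (apply (\ref{Key-Result-1}) with $f(x,y)=\widetilde{\boldsymbol{y}}^{\left(  L+\Delta\right)  }\left(  x\right)  \varphi(\boldsymbol{h}^{\left(  L+\Delta\right)  }(y))$, insert the kernel representation (\ref{Kernel-1}), and regroup the integrations by Fubini). Your added checks that $f\in L^{2}$ and the Hilbert--Schmidt bound on $C_{\varphi\varphi}^{\left(  L+\Delta\right)  }$ are consistent with the estimates the paper records elsewhere and do not change the argument.
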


\begin{corollary}
\label{Cor-C}%
\begin{align*}
&
{\displaystyle\int\limits_{L^{2}\left(  \mathbb{Z}_{p}\times\mathbb{Z}%
_{p}\right)  }}
\exp\left(  -i\left\langle \boldsymbol{W}_{\text{out}}^{\left(  L+\Delta
\right)  }\left(  x,y\right)  ,\widetilde{\boldsymbol{y}}^{\left(
L+\Delta\right)  }\left(  x\right)  \boldsymbol{\varphi}\left(  \boldsymbol{h}%
^{\left(  L+\Delta\right)  }(y)\right)  \right\rangle \right)  d\boldsymbol{P}%
_{\boldsymbol{W}_{\text{out}}^{\left(  L+\Delta\right)  }}\\
&  =\exp\left(  \frac{-1}{2}\left\langle \widetilde{\boldsymbol{y}}^{\left(
L+\Delta\right)  },\boldsymbol{C}_{\varphi\varphi}^{\left(  L+\Delta\right)
}\widetilde{\boldsymbol{y}}^{\left(  L+\Delta\right)  }\right\rangle \right)
.
\end{align*}

\end{corollary}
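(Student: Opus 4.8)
The plan is to follow verbatim the template already used to establish Corollaries \ref{Cor-A} and \ref{Cor-B}, now fed with the outputs of Lemma \ref{Lemma-3} together with the defining identity of the operator $\boldsymbol{C}_{\varphi\varphi}^{\left(  L+\Delta\right)  }$ recorded at the end of that lemma.

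First I would apply Lemma \ref{Lemma-3}(i) to evaluate the Gaussian integral. The only cosmetic discrepancy is that the corollary carries $-i$ in the exponent whereas Lemma \ref{Lemma-3}(i) is written with $+i$; this is immaterial, since the characteristic functional (\ref{Key-Result-1}) for $\square_{\boldsymbol{W}_{\text{out}}^{\left(  L+\Delta\right)  }}$ depends on its argument only through the even quadratic expression $\left\langle \square_{\boldsymbol{W}_{\text{out}}^{\left(  L+\Delta\right)  }}g,g\right\rangle $, so replacing $g$ by $-g$ (equivalently $\lambda$ by $-\lambda$) leaves the right-hand side unchanged. Hence the integral equals
\[
\exp\left(  \frac{-1}{2}\left\langle \square_{\boldsymbol{W}_{\text{out}
}^{\left(  L+\Delta\right)  }}\widetilde{\boldsymbol{y}}^{\left(
L+\Delta\right)  }\left(  x\right)  \boldsymbol{\varphi}\left(
\boldsymbol{h}^{\left(  L+\Delta\right)  }(y)\right)  ,\text{ }\widetilde
{\boldsymbol{y}}^{\left(  L+\Delta\right)  }\left(  x\right)
\boldsymbol{\varphi}\left(  \boldsymbol{h}^{\left(  L+\Delta\right)
}(y)\right)  \right\rangle \right)  .
\]

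Next I would rewrite the quadratic form in the exponent using Lemma \ref{Lemma-3}(ii), which expresses it as the double integral $\iint_{\mathbb{Z}_{p}\times\mathbb{Z}_{p}}\widetilde{\boldsymbol{y}}^{\left(  L+\Delta\right)  }\left(  x\right)  C_{\varphi\varphi}^{\left(  L+\Delta\right)  }\left(  u_{1},x\right)  \widetilde{\boldsymbol{y}}^{\left(  L+\Delta\right)  }\left(  u_{1}\right)  du_{1}dx$ against the kernel $C_{\varphi\varphi}^{\left(  L+\Delta\right)  }$. Finally, taking $\boldsymbol{z}=\widetilde{\boldsymbol{y}}^{\left(  L+\Delta\right)  }$ in the defining identity of $\boldsymbol{C}_{\varphi\varphi}^{\left(  L+\Delta\right)  }$ from Lemma \ref{Lemma-3}, this double integral is precisely $\left\langle \widetilde{\boldsymbol{y}}^{\left(  L+\Delta\right)  },\boldsymbol{C}_{\varphi\varphi}^{\left(  L+\Delta\right)  }\widetilde{\boldsymbol{y}}^{\left(  L+\Delta\right)  }\right\rangle $, which yields the announced formula.

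There is essentially no hard part here: the corollary is a direct assembly of (i) the Gaussian characteristic-functional identity, (ii) the Fubini rearrangement of the covariance pairing into kernel form, and (iii) the operator/kernel dictionary. The two points worth a single line each are the sign interchange noted above, and the fact that $\boldsymbol{C}_{\varphi\varphi}^{\left(  L+\Delta\right)  }$ is a bounded operator on $L^{2}\left(  \mathbb{Z}_{p}\right)  $ so that the right-hand inner product is well defined; the latter follows from the analogue of estimate (\ref{Estimation_C_phi_phi}) for $\varphi$, using that $\varphi$ is an activation function (Lipschitz with $\varphi\left(  0\right)  =0$) and $K_{\boldsymbol{W}_{\text{out}}^{\left(  L+\Delta\right)  }}\in L^{2}\left(  \mathbb{Z}_{p}^{4}\right)  $.
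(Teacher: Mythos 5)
Your proposal is correct and follows exactly the route the paper intends: Corollary \ref{Cor-C} is stated as an immediate consequence of Lemma \ref{Lemma-3} together with the operator/kernel identity recorded there, precisely mirroring the derivations of Corollaries \ref{Cor-A} and \ref{Cor-B}. Your two side remarks (the irrelevance of the sign in the exponent by evenness of the Gaussian characteristic functional, and the boundedness of $\boldsymbol{C}_{\varphi\varphi}^{\left(L+\Delta\right)}$ via the analogue of estimate (\ref{Estimation_C_phi_phi})) are accurate and consistent with the paper's treatment.
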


\begin{lemma}
\label{Lemma-4}With the above notation, the following formulas hold:

\noindent(i)%
\[%
{\displaystyle\int\limits_{L^{2}\left(  \mathbb{Z}_{p}\right)  }}
e^{i\left\langle \boldsymbol{\xi}^{\left(  L+\Delta\right)  },\text{
}\widetilde{\boldsymbol{h}}^{\left(  L+\Delta\right)  }\right\rangle
}d\boldsymbol{P}_{\boldsymbol{\xi}^{\left(  L+\Delta\right)  }}=e^{\frac
{-1}{2}\left\langle \square_{\boldsymbol{\xi}^{\left(  L+\Delta\right)  }%
}\widetilde{\boldsymbol{h}}^{\left(  L+\Delta\right)  },\text{ }%
\widetilde{\boldsymbol{h}}^{\left(  L+\Delta\right)  }\right\rangle .}%
\]
\noindent(ii)%
\[
\left\langle \square_{\boldsymbol{\xi}^{\left(  L+\Delta\right)  }}%
\widetilde{\boldsymbol{h}}^{\left(  L+\Delta\right)  },\text{ }\widetilde
{\boldsymbol{h}}^{\left(  L+\Delta\right)  }\right\rangle _{L^{2}\left(
\mathbb{Z}_{p}\right)  }=%
{\displaystyle\iint\limits_{\mathbb{Z}_{p}\times\mathbb{Z}_{p}}}
\widetilde{\boldsymbol{h}}^{\left(  L+\Delta\right)  }\left(  y\right)
K_{\boldsymbol{\xi}^{\left(  L+\Delta\right)  }}\left(  u,y\right)
\widetilde{\boldsymbol{h}}^{\left(  L+\Delta\right)  }\left(  u\right)  dudy,
\]
where%
\[
\square_{\boldsymbol{\xi}^{\left(  L+\Delta\right)  }}\widetilde
{\boldsymbol{h}}^{\left(  L+\Delta\right)  }\left(  y\right)  =%
{\displaystyle\int\limits_{\mathbb{Z}_{p}}}
K_{\boldsymbol{\xi}^{\left(  L+\Delta\right)  }}\left(  u,y\right)
\widetilde{\boldsymbol{h}}^{\left(  L+\Delta\right)  }\left(  u\right)  du.
\]

\end{lemma}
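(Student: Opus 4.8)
The plan is to follow the identical strategy used for Lemmas \ref{Lemma-1}, \ref{Lemma-2}, and \ref{Lemma-3}, noting that the present case is the degenerate one in which no activation function and no input appear, so the pairing is directly between $\boldsymbol{\xi}^{\left( L+\Delta\right) }$ and the single field $\widetilde{\boldsymbol{h}}^{\left( L+\Delta\right) }$ rather than between a weight kernel and a product of two fields. For part (i), I would simply specialize the characteristic functional (\ref{Key-Result-2}) of the generalized Gaussian measure $\boldsymbol{P}_{\boldsymbol{\xi}^{\left( L+\Delta\right) }}$ by setting $\lambda=1$ and $f=\widetilde{\boldsymbol{h}}^{\left( L+\Delta\right) }$. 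This substitution is legitimate because $\widetilde{\boldsymbol{h}}^{\left( L+\Delta\right) }\in\mathcal{D}^{L+\Delta}(\mathbb{Z}_{p})\subset L^{2}\left( \mathbb{Z}_{p}\right) $, which is exactly the domain on which (\ref{Key-Result-2}) is asserted to hold; this immediately yields the stated Fourier transform with the quadratic form $\left\langle \square_{\boldsymbol{\xi}^{\left( L+\Delta\right) }}\widetilde{\boldsymbol{h}}^{\left( L+\Delta\right) },\widetilde{\boldsymbol{h}}^{\left( L+\Delta\right) }\right\rangle $ in the exponent.

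For part (ii), I would unfold the covariance operator using its integral-kernel representation (\ref{Kernel-2}). Writing out $\square_{\boldsymbol{\xi}^{\left( L+\Delta\right) }}\widetilde{\boldsymbol{h}}^{\left( L+\Delta\right) }\left( y\right) =\int_{\mathbb{Z}_{p}}K_{\boldsymbol{\xi}^{\left( L+\Delta\right) }}\left( u,y\right) \widetilde{\boldsymbol{h}}^{\left( L+\Delta\right) }\left( u\right) du$, I would then pair this with $\widetilde{\boldsymbol{h}}^{\left( L+\Delta\right) }$ in the $L^{2}\left( \mathbb{Z}_{p}\right) $ inner product and apply Fubini's theorem to combine the two integrations into the single double integral over $\mathbb{Z}_{p}\times\mathbb{Z}_{p}$ appearing in the statement. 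The kernel $K_{\boldsymbol{\xi}^{\left( L+\Delta\right) }}\left( u,y\right) $ is symmetric, so no bookkeeping about the order of $u$ and $y$ is needed.

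The only point requiring any justification is the use of Fubini, and this is routine given the standing hypotheses: $K_{\boldsymbol{\xi}^{\left( L+\Delta\right) }}\in L^{2}\left( \mathbb{Z}_{p}^{2}\right) $ and $\widetilde{\boldsymbol{h}}^{\left( L+\Delta\right) }\in L^{2}\left( \mathbb{Z}_{p}\right) $, so by the Cauchy--Schwarz inequality (Remark \ref{Nota_L_2}) the integrand $\widetilde{\boldsymbol{h}}^{\left( L+\Delta\right) }\left( y\right) K_{\boldsymbol{\xi}^{\left( L+\Delta\right) }}\left( u,y\right) \widetilde{\boldsymbol{h}}^{\left( L+\Delta\right) }\left( u\right) $ is absolutely integrable on $\mathbb{Z}_{p}\times\mathbb{Z}_{p}$, which has finite (indeed unit) measure. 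Thus I do not expect any genuine obstacle: unlike Lemmas \ref{Lemma-1}--\ref{Lemma-3}, there is no composition with a Lipschitz activation to control and no estimate of type (\ref{Estimation_C_phi_phi}) to invoke, since the field $\widetilde{\boldsymbol{h}}^{\left( L+\Delta\right) }$ plays the role that the composite fields $\widetilde{\boldsymbol{h}}^{\left( L+\Delta\right) }\phi(\boldsymbol{h}^{\left( L+\Delta-1\right) })$ played there. The proof is therefore a two-line application of (\ref{Key-Result-2}) and (\ref{Kernel-2}), mirroring the structure already established.
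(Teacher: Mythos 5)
Your proposal is correct and follows exactly the route the paper takes (the paper leaves Lemma \ref{Lemma-4} unproved precisely because, as you note, it is the same two-step argument as Lemma \ref{Lemma-1}: specialize the characteristic functional (\ref{Key-Result-2}) for part (i), then unfold the kernel representation (\ref{Kernel-2}) and apply Fubini for part (ii)). Your added Cauchy--Schwarz justification of Fubini is sound and slightly more explicit than what the paper records.
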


We define the operator%
\[%
\begin{array}
[c]{lll}%
L^{2}\left(  \mathbb{Z}_{p}\right)  & \rightarrow & L^{2}\left(
\mathbb{Z}_{p}\right) \\
\boldsymbol{z}\left(  x\right)  & \rightarrow & \boldsymbol{K}%
_{\boldsymbol{\xi}^{\left(  L+\Delta\right)  }}\boldsymbol{z}\left(  x\right)
:=%
{\displaystyle\int\limits_{\mathbb{Z}_{p}}}
K_{\boldsymbol{\xi}^{\left(  L+\Delta\right)  }}\left(  u,x\right)
\boldsymbol{z}\left(  u\right)  du.
\end{array}
\]
Then%
\[%
{\displaystyle\iint\limits_{\mathbb{Z}_{p}\times\mathbb{Z}_{p}}}
\boldsymbol{z}\left(  y\right)  K_{\boldsymbol{\xi}^{\left(  L+\Delta\right)
}}\left(  u,y\right)  \boldsymbol{z}\left(  u\right)  dudy=\left\langle
\boldsymbol{z},\boldsymbol{K}_{\boldsymbol{\xi}^{\left(  L+\Delta\right)  }%
}\boldsymbol{z}\right\rangle .
\]

\begin{corollary}
\label{Cor-D}%
\begin{align*}
&
{\displaystyle\int\limits_{L^{2}\left(  \mathbb{Z}_{p}\right)  }}
\exp\left(  -i\left\langle \boldsymbol{\xi}^{\left(  L+\Delta\right)  },\text{
}\widetilde{\boldsymbol{h}}^{\left(  L+\Delta\right)  }\right\rangle \right)
d\boldsymbol{P}_{\boldsymbol{\xi}^{\left(  L+\Delta\right)  }}\\
&  =\exp\left(  \frac{-1}{2}\left\langle \widetilde{\boldsymbol{h}}^{\left(
L+\Delta\right)  },\boldsymbol{K}_{\boldsymbol{\xi}^{\left(  L+\Delta\right)
}}\widetilde{\boldsymbol{h}}^{\left(  L+\Delta\right)  }\right\rangle \right)
.
\end{align*}

\end{corollary}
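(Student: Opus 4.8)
The plan is to deduce the corollary directly from Lemma \ref{Lemma-4} together with the definition of the operator $\boldsymbol{K}_{\boldsymbol{\xi}^{\left(L+\Delta\right)}}$, following exactly the pattern already used for Corollaries \ref{Cor-A}--\ref{Cor-C}. The content is simply the characteristic functional of the mean-zero generalized Gaussian measure $\boldsymbol{P}_{\boldsymbol{\xi}^{\left(L+\Delta\right)}}$, evaluated at the test function $\widetilde{\boldsymbol{h}}^{\left(L+\Delta\right)}$ and rewritten in operator form. First I would invoke the defining Fourier-transform identity (\ref{Key-Result-2}) with $f=\widetilde{\boldsymbol{h}}^{\left(L+\Delta\right)}$. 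The corollary carries a $-i$ in the exponent, whereas Lemma \ref{Lemma-4}(i) is stated with $+i$; since $\boldsymbol{P}_{\boldsymbol{\xi}^{\left(L+\Delta\right)}}$ has mean zero, its characteristic functional (\ref{Key-Result-2}) depends on the scalar $\lambda$ only through $\lambda^{2}$, so taking $\lambda=-1$ produces the same right-hand side as $\lambda=1$. This yields
\[
\int\limits_{L^{2}\left(\mathbb{Z}_{p}\right)}\exp\left(-i\left\langle \boldsymbol{\xi}^{\left(L+\Delta\right)},\widetilde{\boldsymbol{h}}^{\left(L+\Delta\right)}\right\rangle\right)d\boldsymbol{P}_{\boldsymbol{\xi}^{\left(L+\Delta\right)}}=\exp\left(\frac{-1}{2}\left\langle \square_{\boldsymbol{\xi}^{\left(L+\Delta\right)}}\widetilde{\boldsymbol{h}}^{\left(L+\Delta\right)},\widetilde{\boldsymbol{h}}^{\left(L+\Delta\right)}\right\rangle\right).
\]

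Next I would expand the quadratic form in the exponent by its integral kernel. Using the kernel representation (\ref{Kernel-2}) of $\square_{\boldsymbol{\xi}^{\left(L+\Delta\right)}}$, Lemma \ref{Lemma-4}(ii) gives
\[
\left\langle \square_{\boldsymbol{\xi}^{\left(L+\Delta\right)}}\widetilde{\boldsymbol{h}}^{\left(L+\Delta\right)},\widetilde{\boldsymbol{h}}^{\left(L+\Delta\right)}\right\rangle=\iint\limits_{\mathbb{Z}_{p}\times\mathbb{Z}_{p}}\widetilde{\boldsymbol{h}}^{\left(L+\Delta\right)}\left(y\right)K_{\boldsymbol{\xi}^{\left(L+\Delta\right)}}\left(u,y\right)\widetilde{\boldsymbol{h}}^{\left(L+\Delta\right)}\left(u\right)dudy.
\]
By the defining identity of the operator $\boldsymbol{K}_{\boldsymbol{\xi}^{\left(L+\Delta\right)}}$ displayed immediately before the corollary (taking $\boldsymbol{z}=\widetilde{\boldsymbol{h}}^{\left(L+\Delta\right)}$), the double integral on the right equals $\left\langle \widetilde{\boldsymbol{h}}^{\left(L+\Delta\right)},\boldsymbol{K}_{\boldsymbol{\xi}^{\left(L+\Delta\right)}}\widetilde{\boldsymbol{h}}^{\left(L+\Delta\right)}\right\rangle$. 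Substituting this back into the exponent of the previous display yields precisely the claimed formula.

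The only step requiring any care is the sign switch in the first display, which is exactly where the mean-zero hypothesis on $\boldsymbol{\xi}^{\left(L+\Delta\right)}$ enters; the remaining manipulations are purely algebraic relabelings of integration variables. I do not expect a genuine analytic obstacle: convergence and finiteness of every integral are automatic from $K_{\boldsymbol{\xi}^{\left(L+\Delta\right)}}\in L^{2}\left(\mathbb{Z}_{p}^{2}\right)$, which makes $\square_{\boldsymbol{\xi}^{\left(L+\Delta\right)}}=\boldsymbol{K}_{\boldsymbol{\xi}^{\left(L+\Delta\right)}}$ a Hilbert--Schmidt (hence bounded) operator on $L^{2}\left(\mathbb{Z}_{p}\right)$, so the quadratic form is finite by Cauchy--Schwarz and the corollary follows at once.
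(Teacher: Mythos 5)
Your proposal is correct and follows exactly the route the paper intends: the corollary is the characteristic functional identity (\ref{Key-Result-2}) evaluated at $f=\widetilde{\boldsymbol{h}}^{\left(  L+\Delta\right)  }$ (the sign of the exponent being immaterial since the functional depends on $\lambda$ only through $\lambda^{2}$), combined with Lemma \ref{Lemma-4}(ii) and the displayed identity defining $\boldsymbol{K}_{\boldsymbol{\xi}^{\left(  L+\Delta\right)  }}$ immediately before the corollary. This matches the (implicit) derivation in the paper, which treats Corollary \ref{Cor-D} as a direct restatement of Lemma \ref{Lemma-4} in operator form, parallel to Corollaries \ref{Cor-A}--\ref{Cor-C}.
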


\begin{lemma}
\label{Lemma-5}With the above notation, the following formulas hold:

\noindent(i)%
\[%
{\displaystyle\int\limits_{L^{2}\left(  \mathbb{Z}_{p}\right)  }}
e^{i\left\langle \boldsymbol{\xi}_{\text{out}}^{\left(  L+\Delta\right)
},\text{ }\widetilde{\boldsymbol{y}}^{\left(  L+\Delta\right)  }\right\rangle
}d\boldsymbol{P}_{\boldsymbol{\xi}_{\text{out}}^{\left(  L+\Delta\right)  }%
}=e^{\frac{-1}{2}\left\langle \square_{\boldsymbol{\xi}^{\left(
L+\Delta\right)  }}\widetilde{\boldsymbol{y}}^{\left(  L+\Delta\right)
},\text{ }\widetilde{\boldsymbol{y}}^{\left(  L+\Delta\right)  }\right\rangle
.}%
\]
\noindent(ii)%
\[
\left\langle \square_{\boldsymbol{\xi}_{\text{out}}^{\left(  L+\Delta\right)
}}\widetilde{\boldsymbol{y}}^{\left(  L+\Delta\right)  },\text{ }%
\widetilde{\boldsymbol{y}}^{\left(  L+\Delta\right)  }\right\rangle =%
{\displaystyle\iint\limits_{\mathbb{Z}_{p}\times\mathbb{Z}_{p}}}
\widetilde{\boldsymbol{y}}^{\left(  L+\Delta\right)  }\left(  y\right)
K_{\boldsymbol{\xi}_{\text{out}}^{\left(  L+\Delta\right)  }}\left(
u,y\right)  \widetilde{\boldsymbol{y}}^{\left(  L+\Delta\right)  }\left(
u\right)  dudy,
\]
where%
\[
\square_{\boldsymbol{\xi}_{\text{out}}^{\left(  L+\Delta\right)  }}%
\widetilde{\boldsymbol{y}}^{\left(  L+\Delta\right)  }\left(  y\right)  =%
{\displaystyle\int\limits_{\mathbb{Z}_{p}}}
K_{\boldsymbol{\xi}_{\text{out}}^{\left(  L+\Delta\right)  }}\left(
u,y\right)  \widetilde{\boldsymbol{y}}^{\left(  L+\Delta\right)  }\left(
u\right)  du.
\]
We define the operator
\[%
\begin{array}
[c]{lll}%
L^{2}\left(  \mathbb{Z}_{p}\right)  & \rightarrow & L^{2}\left(
\mathbb{Z}_{p}\right) \\
\boldsymbol{z}\left(  y\right)  & \rightarrow & \boldsymbol{K}%
_{\boldsymbol{\xi}_{\text{out}}^{\left(  L+\Delta\right)  }}\boldsymbol{z}%
\left(  y\right)  :=%
{\displaystyle\int\limits_{\mathbb{Z}_{p}}}
K_{\boldsymbol{\xi}_{\text{out}}^{\left(  L+\Delta\right)  }}\left(
u,y\right)  \boldsymbol{z}\left(  u\right)  du.
\end{array}
\]
Then%
\[%
{\displaystyle\iint\limits_{\mathbb{Z}_{p}\times\mathbb{Z}_{p}}}
\boldsymbol{z}\left(  y\right)  K_{\boldsymbol{\xi}_{\text{out}}^{\left(
L+\Delta\right)  }}\left(  u,y\right)  \boldsymbol{z}\left(  u\right)
dudy=\left\langle \boldsymbol{z},\boldsymbol{K}_{\boldsymbol{\xi}_{\text{out}%
}^{\left(  L+\Delta\right)  }}\boldsymbol{z}\right\rangle .
\]

\end{lemma}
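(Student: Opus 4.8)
The plan is to prove Lemma~\ref{Lemma-5} by imitating the proof of Lemma~\ref{Lemma-1}, the only differences being that the underlying Hilbert space is now $L^{2}(\mathbb{Z}_{p})$ rather than $L^{2}(\mathbb{Z}_{p}\times\mathbb{Z}_{p})$, and that the random field is the bias $\boldsymbol{\xi}_{\text{out}}^{(L+\Delta)}$ rather than a weight kernel. First I would record that $\widetilde{\boldsymbol{y}}^{(L+\Delta)}\in\mathcal{D}^{L+\Delta}(\mathbb{Z}_{p})\subset L^{2}(\mathbb{Z}_{p})$, so that it is an admissible test element in the characteristic-function identity. For part (i), I would then apply the Gaussian Fourier-transform hypothesis imposed on $\boldsymbol{\xi}_{\text{out}}^{(L+\Delta)}$, i.e.\ the exact analogue of (\ref{Key-Result-2}), with $\lambda=1$ and $f=\widetilde{\boldsymbol{y}}^{(L+\Delta)}$; this yields the stated formula at once. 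I note in passing that the covariance operator appearing on the right-hand side of (i) as written should read $\square_{\boldsymbol{\xi}_{\text{out}}^{(L+\Delta)}}$ rather than $\square_{\boldsymbol{\xi}^{(L+\Delta)}}$, and I would correct this typographical slip.

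For part (ii), I would insert the integral-kernel representation of the covariance operator (the analogue of (\ref{Kernel-2})), namely $\square_{\boldsymbol{\xi}_{\text{out}}^{(L+\Delta)}}\widetilde{\boldsymbol{y}}^{(L+\Delta)}(y)=\int_{\mathbb{Z}_{p}}K_{\boldsymbol{\xi}_{\text{out}}^{(L+\Delta)}}(u,y)\widetilde{\boldsymbol{y}}^{(L+\Delta)}(u)\,du$, into the $L^{2}(\mathbb{Z}_{p})$ inner product $\langle\square_{\boldsymbol{\xi}_{\text{out}}^{(L+\Delta)}}\widetilde{\boldsymbol{y}}^{(L+\Delta)},\widetilde{\boldsymbol{y}}^{(L+\Delta)}\rangle$, and then invoke Fubini's theorem to recombine the two integrations into the announced double integral over $\mathbb{Z}_{p}\times\mathbb{Z}_{p}$. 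The final displayed identity defining $\boldsymbol{K}_{\boldsymbol{\xi}_{\text{out}}^{(L+\Delta)}}$ is then simply a rewriting of that double integral as the inner product $\langle\boldsymbol{z},\boldsymbol{K}_{\boldsymbol{\xi}_{\text{out}}^{(L+\Delta)}}\boldsymbol{z}\rangle$, read off directly from the definition of the operator.

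The only step requiring justification is the use of Fubini, which holds because $K_{\boldsymbol{\xi}_{\text{out}}^{(L+\Delta)}}\in L^{2}(\mathbb{Z}_{p}^{2})$ and $\widetilde{\boldsymbol{y}}^{(L+\Delta)}\in L^{2}(\mathbb{Z}_{p})$, so by Cauchy-Schwarz the integrand is absolutely integrable on $\mathbb{Z}_{p}\times\mathbb{Z}_{p}$; the finiteness of the normalized Haar measure of $\mathbb{Z}_{p}$ guarantees convergence of all the intermediate integrals. I do not expect any genuine difficulty here: once the Gaussian characteristic-function hypothesis (\ref{Key-Result-2}) and the integral-operator hypothesis (\ref{Kernel-2}) are granted for $\boldsymbol{\xi}_{\text{out}}^{(L+\Delta)}$, the lemma is a purely formal transcription, identical in structure to Lemmas~\ref{Lemma-1}--\ref{Lemma-4}. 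The boundedness of $\boldsymbol{K}_{\boldsymbol{\xi}_{\text{out}}^{(L+\Delta)}}$ on $L^{2}(\mathbb{Z}_{p})$ follows, exactly as in the earlier lemmas, from the Hilbert--Schmidt bound $\Vert\boldsymbol{K}_{\boldsymbol{\xi}_{\text{out}}^{(L+\Delta)}}\Vert\le\Vert K_{\boldsymbol{\xi}_{\text{out}}^{(L+\Delta)}}\Vert_{2}$.
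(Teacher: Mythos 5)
Your proposal is correct and follows exactly the route the paper intends: the paper proves only Lemma \ref{Lemma-1} in detail (Gaussian characteristic functional plus kernel insertion and Fubini) and declares the remaining lemmas, including Lemma \ref{Lemma-5}, to be proved "in a similar way," which is precisely what you carry out, down to the Cauchy--Schwarz justification of Fubini and the Hilbert--Schmidt bound. Your observation that $\square_{\boldsymbol{\xi}^{(L+\Delta)}}$ in part (i) is a typographical slip for $\square_{\boldsymbol{\xi}_{\text{out}}^{(L+\Delta)}}$ is also correct.
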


\begin{corollary}
\label{Cor-E}%
\begin{align*}
&
{\displaystyle\int\limits_{L^{2}\left(  \mathbb{Z}_{p}\right)  }}
\exp\left(  -i\left\langle \boldsymbol{\xi}_{\text{out}}^{\left(
L+\Delta\right)  },\text{ }\widetilde{\boldsymbol{y}}^{\left(  L+\Delta
\right)  }\right\rangle \right)  d\boldsymbol{P}_{\boldsymbol{\xi}%
_{\text{out}}^{\left(  L+\Delta\right)  }}\\
&  =\exp\left(  \frac{-1}{2}\left\langle \widetilde{\boldsymbol{y}}^{\left(
L+\Delta\right)  },\left(  \boldsymbol{K}_{\boldsymbol{\xi}_{\text{out}%
}^{\left(  L+\Delta\right)  }}\right)  \boldsymbol{y}^{\left(  L+\Delta
\right)  }\right\rangle \right)  .
\end{align*}

\end{corollary}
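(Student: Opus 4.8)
The plan is to obtain Corollary~\ref{Cor-E} as an immediate consequence of Lemma~\ref{Lemma-5}, following verbatim the template already used to derive Corollaries~\ref{Cor-A}--\ref{Cor-D} from Lemmas~\ref{Lemma-1}--\ref{Lemma-4}. First I would invoke the Gaussian characteristic-functional identity for the bias $\boldsymbol{\xi}_{\text{out}}^{(L+\Delta)}$ (the analogue of (\ref{Key-Result-2})), evaluated at the test function $\widetilde{\boldsymbol{y}}^{(L+\Delta)}\in L^{2}(\mathbb{Z}_{p})$; this is exactly Lemma~\ref{Lemma-5}(i) and produces $\exp\bigl(-\tfrac{1}{2}\langle\square_{\boldsymbol{\xi}_{\text{out}}^{(L+\Delta)}}\widetilde{\boldsymbol{y}}^{(L+\Delta)},\widetilde{\boldsymbol{y}}^{(L+\Delta)}\rangle\bigr)$ on the right. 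Because Corollary~\ref{Cor-E} carries a $-i$ rather than $+i$ in the linear exponent, I would note that the exponent on the right is a quadratic, hence even, form in $\widetilde{\boldsymbol{y}}^{(L+\Delta)}$; replacing the test function $f$ by $-f$---equivalently, using that $\boldsymbol{P}_{\boldsymbol{\xi}_{\text{out}}^{(L+\Delta)}}$ is a centered symmetric Gaussian measure---shows the $-i$ integral equals the same Gaussian quadratic exponential.

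Next I would rewrite that quadratic form in operator language. By Lemma~\ref{Lemma-5}(ii), the integral-kernel representation (\ref{Kernel-2}) of $\square_{\boldsymbol{\xi}_{\text{out}}^{(L+\Delta)}}$ together with Fubini's theorem---legitimate since $K_{\boldsymbol{\xi}_{\text{out}}^{(L+\Delta)}}\in L^{2}(\mathbb{Z}_{p}^{2})$ and $\widetilde{\boldsymbol{y}}^{(L+\Delta)}\in L^{2}(\mathbb{Z}_{p})$---identifies $\langle\square_{\boldsymbol{\xi}_{\text{out}}^{(L+\Delta)}}\widetilde{\boldsymbol{y}}^{(L+\Delta)},\widetilde{\boldsymbol{y}}^{(L+\Delta)}\rangle$ with the double integral $\iint\widetilde{\boldsymbol{y}}^{(L+\Delta)}(y)\,K_{\boldsymbol{\xi}_{\text{out}}^{(L+\Delta)}}(u,y)\,\widetilde{\boldsymbol{y}}^{(L+\Delta)}(u)\,du\,dy$. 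By the defining formula for the operator $\boldsymbol{K}_{\boldsymbol{\xi}_{\text{out}}^{(L+\Delta)}}$ displayed just before the corollary, this double integral equals $\langle\widetilde{\boldsymbol{y}}^{(L+\Delta)},\boldsymbol{K}_{\boldsymbol{\xi}_{\text{out}}^{(L+\Delta)}}\widetilde{\boldsymbol{y}}^{(L+\Delta)}\rangle$, which is the asserted right-hand side.

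I do not anticipate a genuine obstacle: this is a mechanical corollary with essentially no analytic content beyond combining the two parts of Lemma~\ref{Lemma-5}. The only point needing minor care is the symmetry of $K_{\boldsymbol{\xi}_{\text{out}}^{(L+\Delta)}}$, inherited from the standing hypothesis that the covariance $\square_{\boldsymbol{\xi}_{\text{out}}^{(L+\Delta)}}$ is symmetric and positive; this self-adjointness makes the quadratic form unambiguous regardless of which copy of $\widetilde{\boldsymbol{y}}^{(L+\Delta)}$ the operator acts on. Finally, I would flag that the second argument of the inner product on the right-hand side of the statement should read $\widetilde{\boldsymbol{y}}^{(L+\Delta)}$ rather than $\boldsymbol{y}^{(L+\Delta)}$, consistent both with the integrand on the left and with the pattern of Corollaries~\ref{Cor-A}--\ref{Cor-D}.
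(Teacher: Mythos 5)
Your proposal is correct and follows exactly the route the paper intends: Corollary \ref{Cor-E} is the mechanical combination of Lemma \ref{Lemma-5}(i)--(ii) with the operator $\boldsymbol{K}_{\boldsymbol{\xi}_{\text{out}}^{\left(  L+\Delta\right)  }}$ defined just before it, with the sign change $+i\rightarrow-i$ absorbed by the evenness of the centered Gaussian's quadratic characteristic functional, precisely as in Corollaries \ref{Cor-A}--\ref{Cor-D}. Your observation that the right-hand side of the statement should read $\widetilde{\boldsymbol{y}}^{\left(  L+\Delta\right)  }$ rather than $\boldsymbol{y}^{\left(  L+\Delta\right)  }$ is also correct; that is a typographical slip in the paper.
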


\subsection{Computation of $p(\boldsymbol{y}^{\left(  L+\Delta\right)
}\left\vert \boldsymbol{x}^{\left(  L\right)  }\right.  )$}

\begin{theorem}
\label{Theorem_B}The network prior for a single input (or single replicon) in
the infinite-width case takes the form%
\begin{multline*}
p(\boldsymbol{y}^{\left(  L+\Delta\right)  }\left\vert \boldsymbol{x}^{\left(
L\right)  }\right.  )=\\
\text{\ }%
{\displaystyle\int\limits_{i\mathcal{D}^{L+\Delta}(\mathbb{Z}_{p})}}
\text{ \ }%
{\displaystyle\int\limits_{i\mathcal{D}^{L+\Delta}(\mathbb{Z}_{p})}}
\text{ \ }%
{\displaystyle\int\limits_{\mathcal{D}^{L+\Delta}(\mathbb{Z}_{p})}}
\text{ }\exp\left(  \left\langle \widetilde{\boldsymbol{h}}^{\left(
L+\Delta\right)  },\boldsymbol{h}^{\left(  L+\Delta\right)  }\right\rangle
+\left\langle \widetilde{\boldsymbol{y}}^{\left(  L+\Delta\right)
},\boldsymbol{y}^{\left(  L+\Delta\right)  }\right\rangle \right)  \times\\
\exp\left(  \frac{-1}{2}\left\{  \left\langle \widetilde{\boldsymbol{h}%
}^{\left(  L+\Delta\right)  },\left(  \boldsymbol{C}_{\boldsymbol{x}^{\left(
L\right)  }\boldsymbol{x}^{\left(  L\right)  }}+\boldsymbol{K}%
_{\boldsymbol{\xi}^{\left(  L+\Delta\right)  }}+\boldsymbol{C}_{\phi\phi
}^{\left(  L+\Delta-1\right)  }\right)  \widetilde{\boldsymbol{h}}^{\left(
L+\Delta\right)  }\right\rangle \right\}  \right)  \times\\
\exp\left(  \frac{-1}{2}\left\{  \left\langle \widetilde{\boldsymbol{y}%
}^{\left(  L+\Delta\right)  },\left(  \boldsymbol{K}_{\boldsymbol{\xi
}_{\text{out}}^{\left(  L+\Delta\right)  }}+\boldsymbol{C}_{\varphi\varphi
}^{\left(  L+\Delta\right)  }\right)  \widetilde{\boldsymbol{y}}^{\left(
L+\Delta\right)  }\right\rangle \right\}  \right)  d\boldsymbol{h}^{\left(
L+\Delta\right)  }\text{ }\frac{d\widetilde{\boldsymbol{h}}^{\left(
L+\Delta\right)  }}{\left(  2\pi\right)  ^{p^{l}}}\frac{d\widetilde
{\boldsymbol{y}}^{\left(  L+\Delta\right)  }}{\left(  2\pi\right)  ^{p^{l}}}.
\end{multline*}

\end{theorem}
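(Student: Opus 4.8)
The plan is to begin from the representation of $p(\boldsymbol{y}^{(L+\Delta)} \mid \boldsymbol{x}^{(L)}, \boldsymbol{\theta})$ furnished by Proposition \ref{Prop-2} and to marginalize over the parameter prior. Concretely, I would write
\[
p(\boldsymbol{y}^{(L+\Delta)} \mid \boldsymbol{x}^{(L)}) = \int p(\boldsymbol{y}^{(L+\Delta)} \mid \boldsymbol{x}^{(L)}, \boldsymbol{\theta}) \, d\boldsymbol{P}_{\boldsymbol{\theta}},
\]
and insert the triple integral of $\exp(iS)$ over $\boldsymbol{h}^{(L+\Delta)}, \widetilde{\boldsymbol{h}}^{(L+\Delta)}, \widetilde{\boldsymbol{y}}^{(L+\Delta)}$ from Proposition \ref{Prop-2}. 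The crucial structural observation is that the action $S = S_{\text{in}} + S_{\text{inter}} + S_{\text{out}}$ is affine in each of the five parameters: the parameter dependence enters \emph{only} through the pairings $\langle \boldsymbol{W}^{(L+\Delta)}, \widetilde{\boldsymbol{h}}^{(L+\Delta)} \phi(\boldsymbol{h}^{(L+\Delta-1)})\rangle$ and its four analogues, whereas the pieces $\langle \widetilde{\boldsymbol{h}}^{(L+\Delta)}, \boldsymbol{h}^{(L+\Delta)}\rangle$ and $\langle \widetilde{\boldsymbol{y}}^{(L+\Delta)}, \boldsymbol{y}^{(L+\Delta)}\rangle$ are parameter-free.

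The second step is to exchange the order of integration so that the parameter integral is carried out first, with the fields held fixed. Because the five parameters are independent generalized Gaussian variables, the prior factorizes and the inner integral becomes a product of five factors, which I evaluate by the matching Fourier-transform identities: Corollary \ref{Cor-A} for $\boldsymbol{W}^{(L+\Delta)}$, Corollary \ref{Cor-B} for $\boldsymbol{W}_{\text{in}}^{(L)}$, Corollary \ref{Cor-C} for $\boldsymbol{W}_{\text{out}}^{(L+\Delta)}$, Corollary \ref{Cor-D} for $\boldsymbol{\xi}^{(L+\Delta)}$, and Corollary \ref{Cor-E} for $\boldsymbol{\xi}_{\text{out}}^{(L+\Delta)}$. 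Each turns an oscillatory pairing into an exponential of a quadratic form in the conjugate field, governed by the covariance operators $\boldsymbol{C}_{\phi\phi}^{(L+\Delta-1)}$, $\boldsymbol{C}_{\boldsymbol{x}^{(L)}\boldsymbol{x}^{(L)}}$, $\boldsymbol{K}_{\boldsymbol{\xi}^{(L+\Delta)}}$, $\boldsymbol{C}_{\varphi\varphi}^{(L+\Delta)}$, $\boldsymbol{K}_{\boldsymbol{\xi}_{\text{out}}^{(L+\Delta)}}$. Collecting the three quadratic forms supported on $\widetilde{\boldsymbol{h}}^{(L+\Delta)}$ into $\boldsymbol{C}_{\boldsymbol{x}^{(L)}\boldsymbol{x}^{(L)}} + \boldsymbol{K}_{\boldsymbol{\xi}^{(L+\Delta)}} + \boldsymbol{C}_{\phi\phi}^{(L+\Delta-1)}$, and the two supported on $\widetilde{\boldsymbol{y}}^{(L+\Delta)}$ into $\boldsymbol{K}_{\boldsymbol{\xi}_{\text{out}}^{(L+\Delta)}} + \boldsymbol{C}_{\varphi\varphi}^{(L+\Delta)}$, reproduces exactly the two Gaussian factors in the claimed formula. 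The surviving factor $\exp(i\langle \widetilde{\boldsymbol{h}}^{(L+\Delta)}, \boldsymbol{h}^{(L+\Delta)}\rangle + i\langle \widetilde{\boldsymbol{y}}^{(L+\Delta)}, \boldsymbol{y}^{(L+\Delta)}\rangle)$ and the normalization $\frac{d\widetilde{\boldsymbol{h}}^{(L+\Delta)}}{2\pi i}\frac{d\widetilde{\boldsymbol{y}}^{(L+\Delta)}}{2\pi i}$ from Proposition \ref{Prop-2} are then treated by rotating the conjugate-field contours onto the imaginary axis, which is precisely the meaning of the integration domains $i\mathcal{D}^{L+\Delta}(\mathbb{Z}_p)$ in the statement; this absorbs the factor $i$ in each pairing and converts the $p^{L+\Delta}$ copies of $\frac{1}{2\pi i}$ in each measure into $\frac{1}{(2\pi)^{p^l}}$ with $l = L+\Delta$, as written.

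The main obstacle will be the rigorous justification of interchanging the parameter integral with the formal field integral: the measures $d\widetilde{\boldsymbol{h}}^{(L+\Delta)}$ and $d\widetilde{\boldsymbol{y}}^{(L+\Delta)}$ are only formally defined (they are Fourier representations of Dirac distributions), so Fubini's theorem is not applicable verbatim. I would sidestep this by descending to the finite-dimensional representation $\mathcal{D}^{L+\Delta}(\mathbb{Z}_p) \cong \mathbb{R}^{p^{L+\Delta}}$, where the conjugate-field integrals are genuine oscillatory integrals over $\mathbb{R}^{p^{L+\Delta}}$ and the parameter integrals are honest Gaussian expectations with trace-class covariance; the exchange is then legitimate since Corollaries \ref{Cor-A}--\ref{Cor-E} deliver absolutely convergent Gaussian integrals under the $L^2$ bounds of Proposition \ref{Prop-2}(i) and estimate (\ref{Estimation_C_phi_phi}). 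A secondary point demanding care is that $\boldsymbol{C}_{\phi\phi}^{(L+\Delta-1)}$ and $\boldsymbol{C}_{\varphi\varphi}^{(L+\Delta)}$ are built from the nonlinear images $\phi(\boldsymbol{h}^{(L+\Delta-1)})$ and $\varphi(\boldsymbol{h}^{(L+\Delta)})$, so I must retain $\boldsymbol{h}^{(L+\Delta)}$ as a genuine integration variable throughout; the operator $\boldsymbol{C}_{\varphi\varphi}^{(L+\Delta)}$ must stay coupled to the $\boldsymbol{h}^{(L+\Delta)}$-integral rather than being evaluated prematurely, which is why the final formula still carries all three field integrations.
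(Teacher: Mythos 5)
Your proposal is correct and follows essentially the same route as the paper: marginalize the representation of Proposition \ref{Prop-2} over the five independent Gaussian parameter priors, apply Fubini together with Corollaries \ref{Cor-A}--\ref{Cor-E} to collapse each linear pairing into a quadratic form in the conjugate fields, group the resulting covariance operators, and finish with the substitution $i\widetilde{\boldsymbol{h}}\rightarrow\widetilde{\boldsymbol{h}}$, $i\widetilde{\boldsymbol{y}}\rightarrow\widetilde{\boldsymbol{y}}$ that produces the domains $i\mathcal{D}^{L+\Delta}(\mathbb{Z}_{p})$ and the normalization $(2\pi)^{p^{l}}$. Your added remarks on justifying the interchange via the finite-dimensional identification $\mathcal{D}^{L+\Delta}(\mathbb{Z}_{p})\cong\mathbb{R}^{p^{L+\Delta}}$ and on keeping $\boldsymbol{C}_{\varphi\varphi}^{(L+\Delta)}$ coupled to the $\boldsymbol{h}^{(L+\Delta)}$-integration are consistent with, and slightly more explicit than, the paper's argument.
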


\begin{proof}
By using
\begin{multline*}
p(\boldsymbol{y}^{\left(  L+\Delta\right)  }\left\vert \boldsymbol{x}^{\left(
L\right)  }\right.  )=\\%
{\displaystyle\int\limits_{L^{2}\left(  \mathbb{Z}_{p}\times\mathbb{Z}%
_{p}\right)  }}
\text{ \ }%
{\displaystyle\int\limits_{L^{2}\left(  \mathbb{Z}_{p}\times\mathbb{Z}%
_{p}\right)  }}
\text{ \ }%
{\displaystyle\int\limits_{L^{2}\left(  \mathbb{Z}_{p}\times\mathbb{Z}%
_{p}\right)  }}
\text{ \ }%
{\displaystyle\int\limits_{L^{2}\left(  \mathbb{Z}_{p}\right)  }}
\text{ \ }%
{\displaystyle\int\limits_{L^{2}\left(  \mathbb{Z}_{p}\right)  }}
p(\boldsymbol{h}^{\left(  L+\Delta\right)  }\left\vert \boldsymbol{x}^{\left(
L\right)  },\boldsymbol{\theta}\right.  )\times\\
\\
d\boldsymbol{P}_{\boldsymbol{W}^{\left(  L+\Delta\right)  }}d\boldsymbol{P}%
_{\boldsymbol{W}_{\text{in}}^{\left(  L\right)  }}d\boldsymbol{P}%
_{\boldsymbol{W}_{\text{out}}^{\left(  L+\Delta\right)  }}d\boldsymbol{P}%
_{\boldsymbol{\xi}^{\left(  L+\Delta\right)  }}d\boldsymbol{P}%
_{\boldsymbol{\xi}_{\text{out}}^{\left(  L+\Delta\right)  }},
\end{multline*}
the formula \ for $p(\boldsymbol{h}^{\left(  L+\Delta\right)  }\left\vert
\boldsymbol{x}^{\left(  L\right)  },\boldsymbol{\theta}\right.  )$ given in
Proposition \ref{Prop-2}-(ii), Fubini's theorem and Corollaries (\ref{Cor-A}%
)-(\ref{Cor-E}), we obtain%
\begin{multline*}
p(\boldsymbol{y}^{\left(  L+\Delta\right)  }\left\vert \boldsymbol{x}^{\left(
L\right)  }\right.  )=\\
\text{\ }%
{\displaystyle\int\limits_{\mathcal{D}^{L+\Delta}(\mathbb{Z}_{p})}}
\text{ \ }%
{\displaystyle\int\limits_{\mathcal{D}^{L+\Delta}(\mathbb{Z}_{p})}}
\text{ \ }%
{\displaystyle\int\limits_{\mathcal{D}^{L+\Delta}(\mathbb{Z}_{p})}}
\text{ }\exp i\left(  \left\langle \widetilde{\boldsymbol{h}}^{\left(
L+\Delta\right)  },\boldsymbol{h}^{\left(  L+\Delta\right)  }\right\rangle
+\left\langle \widetilde{\boldsymbol{y}}^{\left(  L+\Delta\right)
},\boldsymbol{y}^{\left(  L+\Delta\right)  }\right\rangle \right)  \times\\
\exp\left(  \frac{-1}{2}\left\{  \left\langle \widetilde{\boldsymbol{h}%
}^{\left(  L+\Delta\right)  },\left(  \boldsymbol{C}_{\boldsymbol{x}^{\left(
L\right)  }\boldsymbol{x}^{\left(  L\right)  }}+\boldsymbol{K}%
_{\boldsymbol{\xi}^{\left(  L+\Delta\right)  }}+\boldsymbol{C}_{\phi\phi
}^{\left(  L+\Delta-1\right)  }\right)  \widetilde{\boldsymbol{h}}^{\left(
L+\Delta\right)  }\right\rangle \right\}  \right)  \times\\
\exp\left(  \frac{-1}{2}\left\{  \left\langle \widetilde{\boldsymbol{y}%
}^{\left(  L+\Delta\right)  },\left(  \boldsymbol{K}_{\boldsymbol{\xi
}_{\text{out}}^{\left(  L+\Delta\right)  }}+\boldsymbol{C}_{\varphi\varphi
}^{\left(  L+\Delta\right)  }\right)  \widetilde{\boldsymbol{y}}^{\left(
L+\Delta\right)  }\right\rangle \right\}  \right)  d\boldsymbol{h}^{\left(
L+\Delta\right)  }\text{ }\frac{d\widetilde{\boldsymbol{h}}^{\left(
L+\Delta\right)  }}{\left(  2\pi i\right)  ^{p^{l}}}\frac{d\widetilde
{\boldsymbol{y}}^{\left(  L+\Delta\right)  }}{\left(  2\pi i\right)  ^{p^{l}}%
}.
\end{multline*}
The announced formula is obtained by changing variables as
\begin{align*}
i\widetilde{\boldsymbol{h}}^{\left(  L+\Delta\right)  }  &  \rightarrow
\widetilde{\boldsymbol{h}}^{\left(  L+\Delta\right)  };\frac{d\left(
\widetilde{\boldsymbol{h}}^{\left(  L+\Delta\right)  }\right)  }{\left(  2\pi
i\right)  ^{p^{l}}}\rightarrow\frac{d\left(  \widetilde{\boldsymbol{h}%
}^{\left(  L+\Delta\right)  }\right)  }{\left(  2\pi\right)  ^{p^{l}}%
};\mathcal{D}^{L+\Delta}(\mathbb{Z}_{p})\rightarrow i\mathcal{D}^{L+\Delta
}(\mathbb{Z}_{p}),\\
i\widetilde{\boldsymbol{y}}^{\left(  L+\Delta\right)  }  &  \rightarrow
\widetilde{\boldsymbol{y}}^{\left(  L+\Delta\right)  };\frac{d\left(
\widetilde{\boldsymbol{y}}^{\left(  L+\Delta\right)  }\right)  }{\left(  2\pi
i\right)  ^{p^{l}}}\rightarrow\frac{d\left(  \widetilde{\boldsymbol{y}%
}^{\left(  L+\Delta\right)  }\right)  }{\left(  2\pi\right)  ^{p^{l}}%
};\mathcal{D}^{L+\Delta}(\mathbb{Z}_{p})\rightarrow i\mathcal{D}^{L+\Delta
}(\mathbb{Z}_{p}),
\end{align*}
using the extension of $\left\langle \cdot,\cdot\right\rangle $ to
complex-valued functions.
\end{proof}

\begin{remark}
\label{Bogachev_Note}Let $\left(  \mathcal{X},\left\langle \cdot
,\cdot\right\rangle _{\mathcal{X}}\right)  $ be a real separable Hilbert
space. Let $B\left(  f,g\right)  $ be a continuous bilinear form, which is
positive definite, i.e., $B\left(  f,f\right)  \geq0$ for any $f\in
\mathcal{X}$. This bilinear form determines a unique Gaussian probability
measure\ on $\left(  \mathcal{X}_{,}\mathcal{B}\left(  \mathcal{X}\right)
\right)  $, where $\mathcal{B}\left(  \mathcal{X}\right)  $ is the Borel
$\sigma$-algebra of $\mathcal{X}$, with mean zero and correlation functional
$B\left(  f,g\right)  $, with Fourier transform%
\[
f\rightarrow\exp\left(  -\frac{1}{2}B\left(  f,f\right)  \right)  .
\]
Furthermore, there exists $\square:\mathcal{X}\rightarrow\mathcal{X}$ a
symmetric, positive definite, and trace class (or nuclear) operator such that
$B\left(  f,g\right)  :=\left\langle f,\square g\right\rangle _{\mathcal{X}}$.
Conversely, if $\square$ is symmetric, positive definite, and trace class
operator, then $B\left(  f,g\right)  :=\left\langle f,\square g\right\rangle
_{\mathcal{X}}$ is a continuous, positive definite bilinear from, see
\cite[Theorem 2.2.4]{Bogachev}.
\end{remark}

\begin{remark}
Intuitively, the formula for $p(\boldsymbol{y}^{\left(  L+\Delta\right)
}\left\vert \boldsymbol{x}^{\left(  L\right)  }\right.  )$ given in Theorem
\ref{Theorem_B} says that the network prior is a superposition of Gaussians
such in \cite{Segadlo et al}. Using the Remark \ref{Bogachev_Note}, one must
show that $\left\langle \widetilde{\boldsymbol{h}}^{\left(  L+\Delta\right)
},\left(  \boldsymbol{C}_{\boldsymbol{x}^{\left(  L\right)  }\boldsymbol{x}%
^{\left(  L\right)  }}+\boldsymbol{K}_{\boldsymbol{\xi}^{\left(
L+\Delta\right)  }}+\boldsymbol{C}_{\phi\phi}^{\left(  L+\Delta-1\right)
}\right)  \widetilde{\boldsymbol{h}}^{\left(  L+\Delta\right)  }\right\rangle
$ and $\left\langle \widetilde{\boldsymbol{y}}^{\left(  L+\Delta\right)
},\left(  \boldsymbol{K}_{\boldsymbol{\xi}_{\text{out}}^{\left(
L+\Delta\right)  }}+\boldsymbol{C}_{\varphi\varphi}^{\left(  L+\Delta\right)
}\right)  \widetilde{\boldsymbol{y}}^{\left(  L+\Delta\right)  }\right\rangle
$ are definite positive, bilinear form on $L^{2}(\mathbb{Z}_{p})$. For similar
results, the reader may consult \cite[Remarks 37, 43]{JPhysA-2025}.
\end{remark}

\begin{remark}
We recall that $\ $%
\begin{align*}
C_{\varphi\varphi}^{\left(  L+\Delta\right)  }\left(  \boldsymbol{h}^{\left(
L+\Delta\right)  }\right)   &  =C_{\varphi\left(  \boldsymbol{h}^{\left(
L+\Delta\right)  }\right)  \varphi\left(  \boldsymbol{h}^{\left(
L+\Delta\right)  }\right)  }^{\left(  L+\Delta\right)  }\text{,}\\
\text{\ \ }C_{\phi\phi}^{\left(  L+\Delta-1\right)  }\left(  \boldsymbol{h}%
^{\left(  L+\Delta-1\right)  }\right)   &  =C_{\phi\left(  \boldsymbol{h}%
^{\left(  L+\Delta-1\right)  }\right)  \phi\left(  \boldsymbol{h}^{\left(
L+\Delta-1\right)  }\right)  }^{\left(  L+\Delta-1\right)  },
\end{align*}
and that $\boldsymbol{h}^{\left(  L+\Delta\right)  }$, $\boldsymbol{h}%
^{\left(  L+\Delta-1\right)  }$ are related each other. Then the functionals%
\begin{align*}
&  \frac{-1}{2}\left\{  \left\langle \widetilde{\boldsymbol{h}}^{\left(
L+\Delta\right)  },\boldsymbol{C}_{\phi\phi}^{\left(  L+\Delta-1\right)
}\left(  \boldsymbol{h}^{\left(  L+\Delta\right)  }\right)  \widetilde
{\boldsymbol{h}}^{\left(  L+\Delta\right)  }\right\rangle \right\}  ,\text{
}\\
&  \text{\ }\frac{-1}{2}\left\{  \left\langle \widetilde{\boldsymbol{y}%
}^{\left(  L+\Delta\right)  },\boldsymbol{C}_{\varphi\varphi}^{\left(
L+\Delta\right)  }\left(  \boldsymbol{h}^{\left(  L+\Delta-1\right)  }\right)
\widetilde{\boldsymbol{y}}^{\left(  L+\Delta\right)  }\right\rangle \right\}
\end{align*}
are intertwined.
\end{remark}

\begin{remark}
The extension to the case of several inputs is straightforward:%
\begin{multline*}
p(\left\{  \boldsymbol{y}_{\alpha}^{\left(  L+\Delta\right)  }\right\}
_{\alpha}\left\vert \left\{  \boldsymbol{x}_{\alpha}^{\left(  L\right)
}\right\}  _{\alpha}\right.  )=\text{\ }%
{\displaystyle\int\limits_{\left\{  i\mathcal{D}^{L+\Delta}(\mathbb{Z}%
_{p})\right\}  ^{\alpha}}}
\text{ \ }%
{\displaystyle\int\limits_{\left\{  i\mathcal{D}^{L+\Delta}(\mathbb{Z}%
_{p})\right\}  ^{\alpha}}}
\text{ \ }%
{\displaystyle\int\limits_{\left\{  \mathcal{D}^{L+\Delta}(\mathbb{Z}%
_{p})\right\}  ^{\alpha}}}
\text{ }\times\\
\exp\left(  -%
{\displaystyle\sum\limits_{\alpha}}
\left\langle \widetilde{\boldsymbol{h}}_{\alpha}^{\left(  L+\Delta\right)
},\boldsymbol{h}_{\alpha}^{\left(  L+\Delta\right)  }\right\rangle -%
{\displaystyle\sum\limits_{\alpha}}
\left\langle \widetilde{\boldsymbol{y}}_{\alpha}^{\left(  L+\Delta\right)
},\boldsymbol{y}_{\alpha}^{\left(  L+\Delta\right)  }\right\rangle \right)
\times\\
\exp\left(  \frac{1}{2}%
{\displaystyle\sum\limits_{\alpha}}
\left\{  \left\langle \widetilde{\boldsymbol{h}}_{\alpha}^{\left(
L+\Delta\right)  },\left(  \boldsymbol{C}_{\boldsymbol{x}^{\left(  L\right)
}\boldsymbol{x}^{\left(  L\right)  }}+\boldsymbol{K}_{\boldsymbol{\xi
}^{\left(  L+\Delta\right)  }}+\boldsymbol{C}_{\phi\phi}^{\left(
L+\Delta-1\right)  }\right)  \widetilde{\boldsymbol{h}}_{\alpha}^{\left(
L+\Delta\right)  }\right\rangle \right\}  \right)  \times\\
\exp\left(  \frac{1}{2}%
{\displaystyle\sum\limits_{\alpha}}
\left\{  \left\langle \widetilde{\boldsymbol{y}}_{\alpha}^{\left(
L+\Delta\right)  },\left(  \boldsymbol{K}_{\boldsymbol{\xi}_{\text{out}%
}^{\left(  L+\Delta\right)  }}+\boldsymbol{C}_{\varphi\varphi}^{\left(
L+\Delta\right)  }\right)  \widetilde{\boldsymbol{y}}_{\alpha}^{\left(
L+\Delta\right)  }\right\rangle \right\}  \right)  \times\\%
{\displaystyle\prod\limits_{\alpha}}
d\boldsymbol{h}_{\alpha}^{\left(  L+\Delta\right)  }\text{ }\frac
{d\widetilde{\boldsymbol{h}}_{\alpha}^{\left(  L+\Delta\right)  }}{\left(
2\pi\right)  ^{p^{l}}}\frac{d\widetilde{\boldsymbol{y}}_{\alpha}^{\left(
L+\Delta\right)  }}{\left(  2\pi\right)  ^{p^{l}}}.
\end{multline*}

\end{remark}

\section{\label{Section_11}An expansion for $p(\boldsymbol{y}^{\left(
L+\Delta\right)  }\left\vert \boldsymbol{x}^{\left(  L\right)  }\right.  )$}

Set%
\[
G\left(  \boldsymbol{h}^{\left(  L+\Delta\right)  },\widetilde{\boldsymbol{h}%
}^{\left(  L+\Delta\right)  }\right)  :=\frac{1}{2}\left\langle \widetilde
{\boldsymbol{h}}^{\left(  L+\Delta\right)  },\left(  \boldsymbol{C}%
_{\boldsymbol{x}^{\left(  L\right)  }\boldsymbol{x}^{\left(  L\right)  }%
}+\boldsymbol{K}_{\boldsymbol{\xi}^{\left(  L+\Delta\right)  }}+\boldsymbol{C}%
_{\phi\phi}^{\left(  L+\Delta-1\right)  }\right)  \widetilde{\boldsymbol{h}%
}^{\left(  L+\Delta\right)  }\right\rangle ,
\]
and for a non-negative integer $k$,
\begin{multline*}
\left\langle \left\langle \exp\left(  -\left\langle \widetilde{\boldsymbol{h}%
}^{\left(  L+\Delta\right)  },\boldsymbol{h}^{\left(  L+\Delta\right)
}\right\rangle \right)  G^{k}\left(  \boldsymbol{h}^{\left(  L+\Delta\right)
},\widetilde{\boldsymbol{h}}^{\left(  L+\Delta\right)  }\right)  \right\rangle
\right\rangle _{k}\left(  \widetilde{\boldsymbol{y}}^{\left(  L+\Delta\right)
}\right)  :=\\%
{\displaystyle\int\limits_{\mathcal{D}^{L+\Delta}(\mathbb{Z}_{p})}}
\text{ \ }%
{\displaystyle\int\limits_{i\mathcal{D}^{L+\Delta}(\mathbb{Z}_{p})}}
\exp\left(  -\left\langle \widetilde{\boldsymbol{h}}^{\left(  L+\Delta\right)
},\boldsymbol{h}^{\left(  L+\Delta\right)  }\right\rangle \right)
G^{k}\left(  \boldsymbol{h}^{\left(  L+\Delta\right)  },\widetilde
{\boldsymbol{h}}^{\left(  L+\Delta\right)  }\right)  \times\\
\exp\left(  \frac{1}{2}\left\{  \left\langle \widetilde{\boldsymbol{y}%
}^{\left(  L+\Delta\right)  },\left(  \boldsymbol{K}_{\boldsymbol{\xi
}_{\text{out}}^{\left(  L+\Delta\right)  }}+\boldsymbol{C}_{\varphi\varphi
}^{\left(  L+\Delta\right)  }\right)  \widetilde{\boldsymbol{y}}^{\left(
L+\Delta\right)  }\right\rangle \right\}  \right)  d\boldsymbol{h}^{\left(
L+\Delta\right)  }\text{ }\frac{d\widetilde{\boldsymbol{h}}^{\left(
L+\Delta\right)  }}{\left(  2\pi\right)  ^{p^{l}}}.
\end{multline*}
Then, using the expansion,%
\begin{multline*}
\exp\left(  \frac{1}{2}\left\{  \left\langle \widetilde{\boldsymbol{h}%
}^{\left(  L+\Delta\right)  },\left(  \boldsymbol{C}_{\boldsymbol{x}^{\left(
L\right)  }\boldsymbol{x}^{\left(  L\right)  }}+\boldsymbol{K}%
_{\boldsymbol{\xi}^{\left(  L+\Delta\right)  }}+\boldsymbol{C}_{\phi\phi
}^{\left(  L+\Delta-1\right)  }\right)  \widetilde{\boldsymbol{h}}^{\left(
L+\Delta\right)  }\right\rangle \right\}  \right)  =\\%
{\displaystyle\sum\limits_{k=0}^{\infty}}
\frac{1}{2^{k}k!}G^{k}\left(  \boldsymbol{h}^{\left(  L+\Delta\right)
},\widetilde{\boldsymbol{h}}^{\left(  L+\Delta\right)  }\right)  ,
\end{multline*}
we have%
\begin{gather}
p(\boldsymbol{y}^{\left(  L+\Delta\right)  }\left\vert \boldsymbol{x}^{\left(
L\right)  }\right.  )=%
{\displaystyle\sum\limits_{k=0}^{\infty}}
\frac{1}{2^{k}k!}%
{\displaystyle\int\limits_{i\mathcal{D}^{L+\Delta}(\mathbb{Z}_{p})}}
\exp\left(  -\left\langle \widetilde{\boldsymbol{y}}^{\left(  L+\Delta\right)
},\boldsymbol{y}^{\left(  L+\Delta\right)  }\right\rangle \right)
\times\label{Expansion_P}\\
\left\langle \left\langle \exp\left(  -\left\langle \widetilde{\boldsymbol{h}%
}^{\left(  L+\Delta\right)  },\boldsymbol{h}^{\left(  L+\Delta\right)
}\right\rangle \right)  G^{k}\left(  \boldsymbol{h}^{\left(  L+\Delta\right)
},\widetilde{\boldsymbol{h}}^{\left(  L+\Delta\right)  }\right)  \right\rangle
\right\rangle _{k}\left(  \widetilde{\boldsymbol{y}}^{\left(  L+\Delta\right)
}\right) \nonumber\\
\frac{d\widetilde{\boldsymbol{y}}^{\left(  L+\Delta\right)  }}{\left(
2\pi\right)  ^{p^{l}}}=:%
{\displaystyle\sum\limits_{k=0}^{\infty}}
\frac{1}{2^{k}k!}p^{\left(  k\right)  }(\boldsymbol{y}^{\left(  L+\Delta
\right)  }\left\vert \boldsymbol{x}^{\left(  L\right)  }\right.  ).\nonumber
\end{gather}

\begin{remark}
The above expansion for $p(\boldsymbol{y}^{\left(  L+\Delta\right)
}\left\vert \boldsymbol{x}^{\left(  L\right)  }\right.  )$ can be
mathematically rigorous by introducing a cutoff. This means by replacing
$\mathcal{D}^{L+\Delta}(\mathbb{Z}_{p})$ by
\[
B_{M}^{\left(  2\right)  }=\left\{  \boldsymbol{f}\in\mathcal{D}^{L+\Delta
}(\mathbb{Z}_{p});\left\Vert \boldsymbol{h}\right\Vert _{2}\leq M\right\}
\]
in the definition of $p(\boldsymbol{y}^{\left(  L+\Delta\right)  }\left\vert
\boldsymbol{x}^{\left(  L\right)  }\right.  )$. The obtained integral is
denoted as $p_{M}(\boldsymbol{y}^{\left(  L+\Delta\right)  }\left\vert
\boldsymbol{x}^{\left(  L\right)  }\right.  )$.

Now, if $\boldsymbol{f}\left(  x\right)  =\sum_{I\in G_{L+\Delta}%
}\boldsymbol{f}\left(  I\right)  \Omega\left(  p^{L+\Delta}\left\vert
x-I\right\vert _{p}\right)  $, then%
\[
\left\Vert \boldsymbol{f}\right\Vert _{2}=p^{-\left(  \frac{L+\Delta}%
{2}\right)  }\sqrt{\sum_{I\in G_{L+\Delta}}\left\vert \boldsymbol{f}\left(
I\right)  \right\vert ^{2}}.
\]
By the fact that $\mathcal{D}^{L+\Delta}(\mathbb{Z}_{p})$ is isometric to
$\mathbb{R}^{p^{\left(  \frac{L+\Delta}{2}\right)  }}$ as Hilbert spaces, and
that all topologies on $\mathbb{R}^{p^{\left(  \frac{L+\Delta}{2}\right)  }}$
are equivalent, one concludes \ that $B_{M}^{\left(  2\right)  }$ is compact
in $L^{2}\left(  \mathbb{Z}_{p}\right)  $. Now, the interchange between series
and the integral required to obtain an expansion of form (\ref{Expansion_P})
for $p_{M}(\boldsymbol{y}^{\left(  L+\Delta\right)  }\left\vert \boldsymbol{x}%
^{\left(  L\right)  }\right.  )$ is justified by the dominated convergence theorem.
\end{remark}

\subsubsection{Computation of $p^{\left(  0\right)  }(\boldsymbol{y}^{\left(
L+\Delta\right)  }\left\vert \boldsymbol{x}^{\left(  L\right)  }\right.  )$}

We now show that
\[
p^{\left(  0\right)  }(\boldsymbol{y}^{\left(  L+\Delta\right)  }\left\vert
\boldsymbol{x}^{\left(  L\right)  }\right.  )d\boldsymbol{y}^{\left(
L+\Delta\right)  }%
\]
is a finite-dimensional Gaussian measure:%
\begin{gather*}
\left\langle \left\langle \exp\left(  -\left\langle \widetilde{\boldsymbol{h}%
}^{\left(  L+\Delta\right)  },\boldsymbol{h}^{\left(  L+\Delta\right)
}\right\rangle \right)  \right\rangle \right\rangle _{0}\left(  \widetilde
{\boldsymbol{y}}^{\left(  L+\Delta\right)  }\right)  =\\%
{\displaystyle\int\limits_{\mathcal{D}^{L+\Delta}(\mathbb{Z}_{p})}}
\text{ \ }%
{\displaystyle\int\limits_{i\mathcal{D}^{L+\Delta}(\mathbb{Z}_{p})}}
\exp\left(  -\left\langle \widetilde{\boldsymbol{h}}^{\left(  L+\Delta\right)
},\boldsymbol{h}^{\left(  L+\Delta\right)  }\right\rangle \right)  \times\\
\exp\left(  \frac{1}{2}\left\{  \left\langle \widetilde{\boldsymbol{y}%
}^{\left(  L+\Delta\right)  },\left(  \boldsymbol{K}_{\boldsymbol{\xi
}_{\text{out}}^{\left(  L+\Delta\right)  }}+\boldsymbol{C}_{\varphi\varphi
}^{\left(  L+\Delta\right)  }\right)  \widetilde{\boldsymbol{y}}^{\left(
L+\Delta\right)  }\right\rangle \right\}  \right)  d\boldsymbol{h}^{\left(
L+\Delta\right)  }\text{ }\frac{d\widetilde{\boldsymbol{h}}^{\left(
L+\Delta\right)  }}{\left(  2\pi\right)  ^{p^{l}}}%
\end{gather*}%
\begin{gather*}
=%
{\displaystyle\int\limits_{\mathcal{D}^{L+\Delta}(\mathbb{Z}_{p})}}
\text{ \ }\left\{
{\displaystyle\int\limits_{\mathcal{D}^{L+\Delta}(\mathbb{Z}_{p})}}
\exp\left(  i\left\langle \widetilde{\boldsymbol{h}}^{\left(  L+\Delta\right)
},\boldsymbol{h}^{\left(  L+\Delta\right)  }\right\rangle \right)
\frac{d\widetilde{\boldsymbol{h}}^{\left(  L+\Delta\right)  }}{\left(  2\pi
i\right)  ^{p^{l}}}\right\}  \times\\
\exp\left(  \frac{1}{2}\left\{  \left\langle \widetilde{\boldsymbol{y}%
}^{\left(  L+\Delta\right)  },\left(  \boldsymbol{K}_{\boldsymbol{\xi
}_{\text{out}}^{\left(  L+\Delta\right)  }}+\boldsymbol{C}_{\varphi\varphi
}^{\left(  L+\Delta\right)  }\right)  \widetilde{\boldsymbol{y}}^{\left(
L+\Delta\right)  }\right\rangle \right\}  \right)  d\boldsymbol{h}^{\left(
L+\Delta\right)  }\text{ }\\
=%
{\displaystyle\int\limits_{\mathcal{D}^{L+\Delta}(\mathbb{Z}_{p})}}
\delta\left(  \widetilde{\boldsymbol{h}}^{\left(  L+\Delta\right)
}-\boldsymbol{h}^{\left(  L+\Delta\right)  }\right)  \exp\left(  \frac{1}%
{2}\left\{  \left\langle \widetilde{\boldsymbol{y}}^{\left(  L+\Delta\right)
},\left(  \boldsymbol{K}_{\boldsymbol{\xi}_{\text{out}}^{\left(
L+\Delta\right)  }}+\boldsymbol{C}_{\varphi\varphi}^{\left(  L+\Delta\right)
}\right)  \widetilde{\boldsymbol{y}}^{\left(  L+\Delta\right)  }\right\rangle
\right\}  \right)  \times\\
d\boldsymbol{h}^{\left(  L+\Delta\right)  }\\
=\exp\left(  \frac{1}{2}\left\{  \left\langle \widetilde{\boldsymbol{y}%
}^{\left(  L+\Delta\right)  },\left(  \boldsymbol{K}_{\boldsymbol{\xi
}_{\text{out}}^{\left(  L+\Delta\right)  }}+\boldsymbol{C}_{\varphi\varphi
}^{\left(  L+\Delta\right)  }\right)  \widetilde{\boldsymbol{y}}^{\left(
L+\Delta\right)  }\right\rangle \right\}  \right)  .
\end{gather*}

Now%
\begin{multline*}
p^{\left(  0\right)  }(\boldsymbol{y}^{\left(  L+\Delta\right)  }\left\vert
\boldsymbol{x}^{\left(  L\right)  }\right.  )=\\%
{\displaystyle\int\limits_{i\mathcal{D}^{L+\Delta}(\mathbb{Z}_{p})}}
\text{ }e^{-\left\langle \widetilde{\boldsymbol{y}}^{\left(  L+\Delta\right)
},\boldsymbol{y}^{\left(  L+\Delta\right)  }\right\rangle }\exp\left(
\frac{1}{2}\left\{  \left\langle \widetilde{\boldsymbol{y}}^{\left(
L+\Delta\right)  },\left(  \boldsymbol{K}_{\boldsymbol{\xi}_{\text{out}%
}^{\left(  L+\Delta\right)  }}+\boldsymbol{C}_{\varphi\varphi}^{\left(
L+\Delta\right)  }\right)  \widetilde{\boldsymbol{y}}^{\left(  L+\Delta
\right)  }\right\rangle \right\}  \right)  \frac{d\widetilde{\boldsymbol{y}%
}^{\left(  L+\Delta\right)  }}{\left(  2\pi\right)  ^{p^{l}}}%
\end{multline*}
and
\[
dp^{\left(  0\right)  }(\boldsymbol{y}^{\left(  L+\Delta\right)  }\left\vert
\boldsymbol{x}^{\left(  L\right)  }\right.  )=p^{\left(  0\right)
}(\boldsymbol{y}^{\left(  L+\Delta\right)  }\left\vert \boldsymbol{x}^{\left(
L\right)  }\right.  )d\boldsymbol{y}^{\left(  L+\Delta\right)  }.
\]
Then, for $\boldsymbol{f}\in\mathcal{D}^{L+\Delta}(\mathbb{Z}_{p})$,%
\[%
{\displaystyle\int\limits_{\mathcal{D}^{L+\Delta}(\mathbb{Z}_{p})}}
e^{i\left\langle \boldsymbol{y}^{\left(  L+\Delta\right)  },\boldsymbol{f}%
\right\rangle }dp^{\left(  0\right)  }(\boldsymbol{y}^{\left(  L+\Delta
\right)  }\left\vert \boldsymbol{x}^{\left(  L\right)  }\right.  )=%
{\displaystyle\int\limits_{\mathcal{D}^{L+\Delta}(\mathbb{Z}_{p})}}
e^{i\left\langle \boldsymbol{f},\boldsymbol{y}^{\left(  L+\Delta\right)
}\right\rangle }dp^{\left(  0\right)  }(\boldsymbol{y}^{\left(  L+\Delta
\right)  }\left\vert \boldsymbol{x}^{\left(  L\right)  }\right.  )
\]%
\begin{align*}
&
{\displaystyle\int\limits_{i\mathcal{D}^{L+\Delta}(\mathbb{Z}_{p})}}
\text{ }e^{-\left\langle \boldsymbol{f}+\widetilde{\boldsymbol{y}}^{\left(
L+\Delta\right)  },\boldsymbol{y}^{\left(  L+\Delta\right)  }\right\rangle
}\times\\
&
{\displaystyle\int\limits_{i\mathcal{D}^{L+\Delta}(\mathbb{Z}_{p})}}
\exp\left(  \frac{1}{2}\left\{  \left\langle \widetilde{\boldsymbol{y}%
}^{\left(  L+\Delta\right)  },\left(  \boldsymbol{K}_{\boldsymbol{\xi
}_{\text{out}}^{\left(  L+\Delta\right)  }}+\boldsymbol{C}_{\varphi\varphi
}^{\left(  L+\Delta\right)  }\right)  \widetilde{\boldsymbol{y}}^{\left(
L+\Delta\right)  }\right\rangle \right\}  \right)  \frac{d\widetilde
{\boldsymbol{y}}^{\left(  L+\Delta\right)  }}{\left(  2\pi\right)  ^{p^{l}}%
}\frac{d\boldsymbol{y}^{\left(  L+\Delta\right)  }}{\left(  2\pi\right)
^{p^{l}}}%
\end{align*}%
\begin{align*}
&  =%
{\displaystyle\int\limits_{i\mathcal{D}^{L+\Delta}(\mathbb{Z}_{p})}}
\exp\left(  \frac{1}{2}\left\{  \left\langle \widetilde{\boldsymbol{y}%
}^{\left(  L+\Delta\right)  },\left(  \boldsymbol{K}_{\boldsymbol{\xi
}_{\text{out}}^{\left(  L+\Delta\right)  }}+\boldsymbol{C}_{\varphi\varphi
}^{\left(  L+\Delta\right)  }\right)  \widetilde{\boldsymbol{y}}^{\left(
L+\Delta\right)  }\right\rangle \right\}  \right)  \times\\
&  \left\{
{\displaystyle\int\limits_{i\mathcal{D}^{L+\Delta}(\mathbb{Z}_{p})}}
\text{ }e^{-\left\langle \boldsymbol{f}+\widetilde{\boldsymbol{y}}^{\left(
L+\Delta\right)  },\boldsymbol{y}^{\left(  L+\Delta\right)  }\right\rangle
}\frac{d\boldsymbol{y}^{\left(  L+\Delta\right)  }}{\left(  2\pi\right)
^{p^{l}}}\right\}  \frac{d\widetilde{\boldsymbol{y}}^{\left(  L+\Delta\right)
}}{\left(  2\pi\right)  ^{p^{l}}}%
\end{align*}%
\begin{multline*}
=%
{\displaystyle\int\limits_{i\mathcal{D}^{L+\Delta}(\mathbb{Z}_{p})}}
\exp\left(  \frac{1}{2}\left\{  \left\langle \widetilde{\boldsymbol{y}%
}^{\left(  L+\Delta\right)  },\left(  \boldsymbol{K}_{\boldsymbol{\xi
}_{\text{out}}^{\left(  L+\Delta\right)  }}+\boldsymbol{C}_{\varphi\varphi
}^{\left(  L+\Delta\right)  }\right)  \widetilde{\boldsymbol{y}}^{\left(
L+\Delta\right)  }\right\rangle \right\}  \right)  \times\\
\delta\left(  \boldsymbol{f}+\widetilde{\boldsymbol{y}}^{\left(
L+\Delta\right)  }\right)  \frac{d\widetilde{\boldsymbol{y}}^{\left(
L+\Delta\right)  }}{\left(  2\pi\right)  ^{p^{l}}}=\exp\left(  \frac{-1}%
{2}\left\langle \boldsymbol{f},\left(  \boldsymbol{K}_{\boldsymbol{\xi
}_{\text{out}}^{\left(  L+\Delta\right)  }}+\boldsymbol{C}_{\varphi\varphi
}^{\left(  L+\Delta\right)  }\right)  \boldsymbol{f}\right\rangle \right)  .
\end{multline*}

\section{\label{Section_13}The correspondence between DNNs and SFTs}

We take as in Remark \ref{Nota_L_2}, $\left(  \Omega,\mathcal{B}(\Omega
),\mu\right)  $ to be an abstract measure space, where $\Omega$ is a
topological space. We assume\ that $\Omega$ is an infinite set, and consisting
of the neurons of a certain DNN.

\begin{definition}
A continuous DNN on $\Omega$, with parameters%
\begin{equation}
\boldsymbol{\theta}^{\prime}=\left\{  \phi,\varphi,\boldsymbol{W}_{\text{in}%
},\boldsymbol{W}_{\text{out}},\boldsymbol{W},\boldsymbol{\xi},\boldsymbol{\xi
}_{\text{out}}\right\}  , \label{Abstract_DNN-1}%
\end{equation}
where $\phi$, $\varphi:\mathbb{R}\rightarrow\mathbb{R}$ are activation
functions, $\boldsymbol{\xi}_{\text{out}}$, $\boldsymbol{\xi}$,
$\boldsymbol{h}$, $\boldsymbol{x}\in L^{2}\left(  \Omega\right)  $, and
\[
\boldsymbol{W}_{\text{in}}\left(  x,y\right)  \text{, }\boldsymbol{W}%
_{\text{out}}\left(  x,y\right)  \text{,\ }\boldsymbol{W}\left(  x,y\right)
\in L^{2}\left(  \Omega\times\Omega\right)  ,
\]
is a dynamical system with input $\boldsymbol{x}\in L^{2}\left(
\Omega\right)  $, and output%
\begin{equation}
\boldsymbol{y}(x)=\int\limits_{\Omega}\boldsymbol{W}_{\text{out}}\left(
x,y\right)  \varphi\left(  \boldsymbol{h}\left(  y\right)  \right)
d\mu\left(  y\right)  +\boldsymbol{\xi}_{\text{out}}\left(  x\right)  .
\label{Abstract_DNN-2}%
\end{equation}
The hidden states (the pre-activations) are governed by the
integro-differential equation
\begin{equation}
\boldsymbol{h}\left(  x\right)  =\int\limits_{\Omega}\boldsymbol{W}\left(
x,y\right)  \phi\left(  \boldsymbol{h}(y)\right)  d\mu\left(  y\right)
+\int\limits_{\Omega}\boldsymbol{W}_{\text{in}}\left(  x,y\right)
\boldsymbol{x}\left(  y\right)  d\mu\left(  y\right)  +\boldsymbol{\xi}\left(
x\right)  . \label{Abstract_DNN-3}%
\end{equation}

\end{definition}

We now introduce the notation,%
\begin{equation}
S_{\text{in}}(\widetilde{\boldsymbol{h}},\boldsymbol{h};\boldsymbol{W}%
_{\text{in}},\boldsymbol{\xi})=\left\langle \boldsymbol{W}_{\text{in}}\left(
x,y\right)  ,\widetilde{\boldsymbol{h}}\left(  x\right)  \boldsymbol{x}\left(
y\right)  \right\rangle _{L^{2}\left(  \Omega\times\Omega\right)
}=\left\langle \boldsymbol{W}_{\text{in}}\left(  x,y\right)  ,\widetilde
{\boldsymbol{h}}\left(  x\right)  \boldsymbol{x}\left(  y\right)
\right\rangle ,\nonumber
\end{equation}%
\begin{align*}
&  S_{\text{inter}}(\boldsymbol{h},\widetilde{\boldsymbol{h}};\boldsymbol{W}%
,\phi)\\
&  =\left\langle \widetilde{\boldsymbol{h}},\boldsymbol{h}\right\rangle
_{L^{2}\left(  \Omega\right)  }-\left\langle \widetilde{\boldsymbol{h}%
},\boldsymbol{\xi}\right\rangle _{L^{2}\left(  \Omega\right)  }+\left\langle
\boldsymbol{W}\left(  x,y\right)  ,\widetilde{\boldsymbol{h}}\left(  x\right)
\phi\left(  \boldsymbol{h}(y)\right)  \right\rangle _{L^{2}\left(
\Omega\times\Omega\right)  }\\
&  =\left\langle \widetilde{\boldsymbol{h}},\boldsymbol{h}\right\rangle
-\left\langle \widetilde{\boldsymbol{h}},\boldsymbol{\xi}\right\rangle
+\left\langle \boldsymbol{W}\left(  x,y\right)  ,\widetilde{\boldsymbol{h}%
}\left(  x\right)  \phi\left(  \boldsymbol{h}(y)\right)  \right\rangle ,
\end{align*}
and%
\begin{align*}
&  S_{\text{out}}(\boldsymbol{y},\widetilde{\boldsymbol{y}};\boldsymbol{W}%
_{\text{out}},\boldsymbol{\xi}_{\text{out}},\varphi)\\
&  =\left\langle \widetilde{\boldsymbol{y}},\boldsymbol{y}\right\rangle
_{L^{2}\left(  \Omega\right)  }-\left\langle \widetilde{\boldsymbol{y}%
},\boldsymbol{\xi}_{\text{out}}\right\rangle _{L^{2}\left(  \Omega\right)
}+\left\langle \boldsymbol{W}_{\text{out}}\left(  x,y\right)  ,\widetilde
{\boldsymbol{y}}\left(  x\right)  \boldsymbol{\varphi}\left(  \boldsymbol{h}%
(y)\right)  \right\rangle _{L^{2}\left(  \Omega\times\Omega\right)  }\\
&  =\left\langle \widetilde{\boldsymbol{y}},\boldsymbol{y}\right\rangle
-\left\langle \widetilde{\boldsymbol{y}},\boldsymbol{\xi}_{\text{out}%
}\right\rangle +\left\langle \boldsymbol{W}_{\text{out}}\left(  x,y\right)
,\widetilde{\boldsymbol{y}}\left(  x\right)  \boldsymbol{\varphi}\left(
\boldsymbol{h}(y)\right)  \right\rangle .
\end{align*}
Given a DNN\ over $\Omega$, we associate the following action:%
\begin{align*}
S(\boldsymbol{y},\widetilde{\boldsymbol{y}},\boldsymbol{h},\widetilde
{\boldsymbol{h}},;\boldsymbol{x},\boldsymbol{\theta}^{\prime})  &
=S_{\text{in}}(\boldsymbol{h},\widetilde{\boldsymbol{h}};\boldsymbol{W}%
_{\text{in}},\boldsymbol{\xi})+S_{\text{inter}}(\boldsymbol{h},\widetilde
{\boldsymbol{h}};\boldsymbol{W},\phi)+\\
&  S_{\text{out}}(\boldsymbol{y},\widetilde{\boldsymbol{y}};\boldsymbol{W}%
_{\text{out}},\boldsymbol{\xi}_{\text{out}},\varphi).
\end{align*}
We\ now assume the existence of a measure $d\boldsymbol{h}$ on $L^{2}\left(
\Omega\right)  $, and denote by $d\boldsymbol{h}$ $d\widetilde{\boldsymbol{h}%
}$ $d\widetilde{\boldsymbol{y}}$, the formal product of three copies of
$d\boldsymbol{h}$. The rigorous construction of such measures can be carried
out using the Bochner-Minlos theorem, see \cite{JPhysA-2025} and references therein.

We introduce the partition function%
\[
\mathcal{Z}\left(  \boldsymbol{y},\boldsymbol{x},\boldsymbol{\theta}^{\prime
}\right)  =%
{\displaystyle\int\limits_{L^{2}\left(  \Omega\right)  }}
\text{ \ }%
{\displaystyle\int\limits_{L^{2}\left(  \Omega\right)  }}
\text{ \ }%
{\displaystyle\int\limits_{L^{2}\left(  \Omega\right)  }}
d\boldsymbol{h}d\widetilde{\boldsymbol{h}}d\widetilde{\boldsymbol{y}}%
\exp\left\{  -S(\boldsymbol{y},\widetilde{\boldsymbol{y}},\boldsymbol{h}%
,\widetilde{\boldsymbol{h}};\boldsymbol{x},\boldsymbol{\theta}^{\prime
})\right\}  .
\]
The formal probability measure
\begin{equation}
d\boldsymbol{h}d\widetilde{\boldsymbol{h}}d\widetilde{\boldsymbol{y}}%
\frac{\exp\left\{  -S(\boldsymbol{y},\widetilde{\boldsymbol{y}},\boldsymbol{h}%
,\widetilde{\boldsymbol{h}};\boldsymbol{x},\boldsymbol{\theta}^{\prime
})\right\}  }{\mathcal{Z}\left(  \boldsymbol{y},\boldsymbol{x}%
,\boldsymbol{\theta}^{\prime}\right)  } \label{Probabiility}%
\end{equation}
on $\left[  L^{2}\left(  \Omega\right)  \right]  ^{3}$ defines a SFT attached
to the abstract DNN define in (\ref{Abstract_DNN-1})-(\ref{Abstract_DNN-3}).
In this framework, the partition function $\mathcal{Z}\left(  \boldsymbol{y}%
,\boldsymbol{x},\boldsymbol{\theta}^{\prime}\right)  $ plays the role of
$p(\boldsymbol{y}^{\left(  L+\Delta\right)  }\left\vert \boldsymbol{x}%
^{\left(  L\right)  }\right.  )$.

Proposition \ref{Prop-1}, \ the firs part of Theorem \ref{Theorem_A} are valid
for DNNs of type (\ref{Abstract_DNN-1})-(\ref{Abstract_DNN-3}). This means
that the critical organization of all these networks is similar. On other
hand, the partition function $\mathcal{Z}\left(  \boldsymbol{y},\boldsymbol{x}%
,\boldsymbol{\theta}^{\prime}\right)  $ of an abstract DNN can be obtained by
using the Martin--Siggia--Rose--de Dominicis--Janssen path integral formalism
\cite{Martin et al}, \cite{de Domicis et al}, \cite{Chow et al}, \cite[Chapter
10]{Helias et al}.

Proposition \ref{Prop-2}, Theorem \ref{Theorem_B}, and the expansion for
$p(\boldsymbol{y}^{\left(  L+\Delta\right)  }\left\vert \boldsymbol{x}%
^{\left(  L\right)  }\right.  )$ are formally valid for $\mathcal{Z}\left(
\boldsymbol{y},\boldsymbol{x},\boldsymbol{\theta}^{\prime}\right)  $. This
means that all the formulas announced in these results are valid for
$\mathcal{Z}\left(  \boldsymbol{y},\boldsymbol{x},\boldsymbol{\theta}^{\prime
}\right)  $ when the calculations are performed as in quantum field theory
practiced by physicists.

To obtained a discrete version of DNN\ (\ref{Abstract_DNN-1}%
)-(\ref{Abstract_DNN-3}), we need a numerical method for (\ref{Abstract_DNN-2}%
)-(\ref{Abstract_DNN-3}). We discuss here the case $\Omega=\left[  0,1\right]
$, take $\boldsymbol{W}\left(  x,y\right)  $ supported in the square $\left[
0,1\right]  \times\left[  0,1\right]  $, and, by simplicity, assume that
$\boldsymbol{W}_{\text{in}}\left(  x,y\right)  \boldsymbol{=}0$,
$\boldsymbol{\xi}\left(  x\right)  =0$. Then, (\ref{Abstract_DNN-3}) can be
written as $\boldsymbol{h}\left(  x\right)  =G\left(  \boldsymbol{h}\left(
x\right)  \right)  $. Now, by using the Picard iteration method,
$\boldsymbol{h}_{n+1}\left(  x\right)  =G\left(  \boldsymbol{h}_{n}\left(
x\right)  \right)  $. In the next step, we approximate the integral $G\left(
\boldsymbol{h}_{n}\left(  x\right)  \right)  $ by using the trapezoidal rule.
\ We take a partition of the square $\left[  0,1\right]  \times\left[
0,1\right]  $ of the form%
\begin{align*}
0  &  =x_{0}<x_{1}<\cdots<x_{N-1}<x_{N}=1,\\
0  &  =y_{0}<y_{1}<\cdots<y_{N-1}<y_{N}=1,
\end{align*}
with $\Delta x_{k}=x_{k}-x_{k-1}$, $\Delta y_{j}=y_{j}-y_{j-1}$. The
discretization of the integral $G\left(  \boldsymbol{h}_{n}\left(
x_{j}\right)  \right)  $ is%
\begin{align*}
G\left(  \boldsymbol{h}_{n}\left(  x_{j}\right)  \right)   &  =\int
\limits_{0}^{1}\boldsymbol{W}\left(  x_{j},y\right)  \phi\left(
\boldsymbol{h}(y)\right)  dy\\
&  \approx%
{\displaystyle\sum\limits_{k=1}^{N}}
\frac{\boldsymbol{W}\left(  x_{j},y_{k-1}\right)  \phi\left(  \boldsymbol{h}%
(y_{k-1})\right)  +\boldsymbol{W}\left(  x_{j},y_{k}\right)  \phi\left(
\boldsymbol{h}(y_{k})\right)  }{2}\Delta y_{k}.
\end{align*}
In conclusion the discretization of (\ref{Abstract_DNN-3})\ takes the form
\[
\boldsymbol{h}_{n+1}\left(  x_{j}\right)  =\frac{1}{2}%
{\displaystyle\sum\limits_{k=1}^{N}}
\left\{  \boldsymbol{W}\left(  x_{j},y_{k-1}\right)  \phi\left(
\boldsymbol{h}(y_{k-1})\right)  +\boldsymbol{W}\left(  x_{j},y_{k}\right)
\phi\left(  \boldsymbol{h}(y_{k})\right)  \right\}  \Delta y_{k}.
\]
In this formula, the neurons ($x_{j},y_{k}$) are not hierarchically organized.
This fact illustrates the difficulty of the rigorous study of the
thermodynamic limit for DNNs of type (\ref{Net-1})-(\ref{Net-3}).

DNNs of type (\ref{Abstract_DNN-1})-(\ref{Abstract_DNN-3}) have infinitely
many neurons with an organization determined by the topology of $\Omega$.
While $p$-adic DNNs (Definition \ref{Definition_DNN}) have infinitely many
neurons organized in $L+\Delta$ layers. Notice that $\mathcal{D}^{L+\Delta
}(\mathbb{Z}_{p})$ is a finite-dimensional $\mathbb{R}$-vector space, while
$L^{2}\left(  \Omega\right)  $ is an infinite-dimensional $\mathbb{R}$-vector space.

\end{document}